\documentclass[letterpaper,11pt]{article}
\usepackage[toc,page]{appendix}
\usepackage[margin=1in]{geometry}
\usepackage[bookmarks, colorlinks=true, plainpages = false, citecolor = blue,linkcolor=red,urlcolor = blue, filecolor = blue,pagebackref]{hyperref}
%% Note, I added pagebackref to the options for hyperref so that page number back references appear after each paper listing.   -BH 4/29/15
\usepackage{url}\urlstyle{rm}
\usepackage{amsmath,amsfonts,amsthm,amssymb,bm, verbatim,dsfont,mathtools}
\usepackage{color,graphicx,appendix}
\usepackage{subfigure}
\usepackage{etoolbox}
\usepackage{tikz}
\usepackage{xr,xspace}
\usepackage{todonotes}
\usepackage{paralist}
\usepackage{caption,soul}
\usepackage{algorithm}% http://ctan.org/pkg/algorithms
\usepackage{algorithmic}% http://ctan.org/pkg/algorithms
\makeatletter
%\renewcommand{\ALG@name}{SDP}
%\renewcommand{\listalgorithmname}{List of \ALG@name s}

%%%%% THEOREM STYLE DEFINITIONS
%\theoremstyle{plain}
%\newtheorem{theorem}{Theorem}
%\newtheorem{lemma}{Lemma}
%\newtheorem{proposition}{Proposition}
%\newtheorem{corollary}{Corollary}
%\theoremstyle{definition}
%\newtheorem{definition}{Definition}
%\newtheorem{hypothesis}{Hypothesis}
%\newtheorem{conjecture}{Conjecture}
%\newtheorem{question}{Question}
%\newtheorem{remark}{Remark}
%\newtheorem*{remark*}{Remark}

\newtheorem{theorem}{Theorem}
\newtheorem{lemma}{Lemma}

\newtheorem{corollary}{Corollary}
\theoremstyle{definition}
\newtheorem{definition}{Definition}

\newtheorem{remark}{Remark}
\newtheorem{assumption}{Assumption}

\renewcommand{\check}[1]{\widetilde{#1}}

\renewcommand{\implies}{\Rightarrow}

\newcommand{\argmax}{\mathop{\arg\max}}

%\newcommand \P[1]{\mathbb{P}[#1]}

%% Wu
\usepackage{xspace,prettyref}
\newcommand{\CML}{\widehat{C}_{\rm ML}}
\newcommand{\diverge}{\to\infty}

\newcommand{\reals}{{\mathbb{R}}}

\newcommand{\naturals}{{\mathbb{N}}}

  % symmetric matrices
\newcommand{\supp}{{\rm supp}}
\newcommand{\eexp}{{\rm e}}

\newcommand{\diff}{{\rm d}}

\newcommand{\Expect}{\mathbb{E}}
\newcommand{\expect}[1]{\mathbb{E}\left[ #1 \right]}

\newcommand{\Prob}{\mathbb{P}}
\newcommand{\pprob}[1]{ \mathbb{P}\{ #1 \} }
\newcommand{\prob}[1]{ \mathbb{P}\left\{ #1 \right\} }

\newcommand{\var}{\mathsf{var}}

\newcommand{\Bern}{{\rm Bern}}
\newcommand{\Binom}{{\rm Binom}}

\newcommand{\eg}{e.g.\xspace}
\newcommand{\ie}{i.e.\xspace}

% for prettyref.sty
\newrefformat{eq}{(\ref{#1})}
\newrefformat{chap}{Chapter~\ref{#1}}
\newrefformat{sec}{Section~\ref{#1}}
\newrefformat{alg}{Algorithm~\ref{#1}}
\newrefformat{fig}{Fig.~\ref{#1}}
\newrefformat{tab}{Table~\ref{#1}}
\newrefformat{rmk}{Remark~\ref{#1}}
\newrefformat{clm}{Claim~\ref{#1}}
\newrefformat{def}{Definition~\ref{#1}}
\newrefformat{cor}{Corollary~\ref{#1}}
\newrefformat{lmm}{Lemma~\ref{#1}}
\newrefformat{prop}{Proposition~\ref{#1}}
\newrefformat{app}{Appendix~\ref{#1}}
\newrefformat{hyp}{Hypothesis~\ref{#1}}
\newrefformat{thm}{Theorem~\ref{#1}}
\newrefformat{ass}{Assumption~\ref{#1}}

\newcommand{\pth}[1]{\left( #1 \right)}
\newcommand{\qth}[1]{\left[ #1 \right]}
\newcommand{\sth}[1]{\left\{ #1 \right\}}

% \newcommand{\opnorm}[1]{\lnorm{#1}{\rm op}}

% inner product
\newcommand{\iprod}[2]{\left \langle #1, #2 \right\rangle}

% 12/02/2007
\newcommand{\indc}[1]{{\mathbf{1}_{\left\{{#1}\right\}}}}

\newcommand{\calE}{{\mathcal{E}}}

\newcommand{\calN}{{\mathcal{N}}}

\newcommand{\ML}{{\rm ML}\xspace}

\newcommand{\ER}{Erd\H{o}s-R\'enyi\xspace}

\renewcommand{\hat}{\widehat}
\renewcommand{\tilde}{\widetilde}

\begin{document}

\title{Information Limits for Recovering a Hidden Community}

\date{\today}
\author{ 
Bruce Hajek \and Yihong Wu \and Jiaming Xu\thanks{
%This research was supported by the National Science Foundation under
%Grant ECCS 10-28464, IIS-1447879, and CCF-1423088, and
%Strategic Research
%Initiative on Big-Data Analytics of the College of Engineering
%at the University of Illinois, and DOD ONR Grant N00014-14-1-0823, and Grant 328025 from the Simons Foundation. 
B. Hajek and Y. Wu are with
the Department of ECE and Coordinated Science Lab, University of Illinois at Urbana-Champaign, Urbana, IL, \texttt{\{b-hajek,yihongwu\}@illinois.edu}.
J. Xu is with the Simons Institute for the Theory of Computing, University of California, Berkeley, Berkeley, CA, 
%Department of Statistics, The Wharton School, University of Pennsylvania, Philadelphia, PA, 
\texttt{jiamingxu@berkeley.edu}.}
}

\maketitle

\begin{abstract}
We study the problem of recovering a hidden community of cardinality $K$ from an $n \times n$
symmetric data matrix $A$, where for distinct indices $i,j$, $A_{ij} \sim P$ if $i, j$ both belong to 
the community and $A_{ij} \sim Q$ otherwise, for two known probability distributions
$P$ and $Q$ depending on $n$.  If $P={\rm Bern}(p)$ and $Q={\rm Bern}(q)$ with $p>q$, it reduces to the problem of
finding a densely-connected $K$-subgraph planted in a large Erd\"os-R\'enyi graph;
if $P=\mathcal{N}(\mu,1)$ and $Q=\mathcal{N}(0,1)$ with $\mu>0$, it corresponds to the problem of
locating a $K \times K$ principal submatrix of elevated means in a large Gaussian random matrix.
We focus on two types of asymptotic recovery guarantees as $n \to \infty$: (1) weak recovery: expected
number of classification errors  is $o(K)$;  (2) exact recovery:  probability of classifying all indices
correctly converges to one.  Under mild assumptions on $P$ and $Q$, and allowing the community
size to scale sublinearly with $n$, we derive a  set of  sufficient conditions  and a set of necessary
conditions for recovery,   which are asymptotically tight with sharp constants.
The results hold in particular for the Gaussian case, and for the case of bounded log
likelihood ratio, including the Bernoulli case whenever $\frac{p}{q}$  and $\frac{1-p}{1-q}$ are bounded away from zero and infinity.
  An important algorithmic implication is that, whenever exact recovery
is information theoretically possible,  any algorithm that provides weak recovery when the
community size is concentrated near $K$  can be upgraded to achieve exact recovery in linear
additional time by a simple voting procedure. 
\end{abstract}

\section{Introduction}
Many modern datasets can be represented as networks with vertices denoting the
objects and edges 
(sometimes weighted or labeled) 
encoding their pairwise interactions.
An interesting problem is  to identify a group of vertices  with atypical  interactions.
In social network analysis, this group can be interpreted as a community
with higher edge connectivities than the rest of the network;
in microarray experiments,
 this group may correspond to a set of differentially expressed genes.
 To study this problem, we
investigate the following probabilistic model considered in \cite{Deshpande12}.

\begin{definition}[Hidden Community Model]  \label{def:model}
 Let $C^*$ be drawn uniformly at random from all subsets of $[n]$ of cardinality $K$.
 Given probability measures $P$ and $Q$ on a common measurable space, let $A$ be an $n \times n$ symmetric matrix with zero diagonal
 where for all $1 \le i<j \le n$, $A_{ij}$ are mutually independent, and $A_{ij} \sim P$ if $i,j\in C^*$ and $A_{ij} \sim Q$ otherwise.
\end{definition}

In this paper we assume that we only have access to pairwise information $A_{ij}$ for distinct indices $i$ and $j$ whose distribution is either $P$ or $Q$ depending on the community membership;
no direct observation about the individual indices is available (hence the zero diagonal of $A$).
Two choices of $P$ and $Q$ arising in many applications are the following:
\begin{itemize}
\item {Bernoulli case}: 
% \st{$P=\Bern(p)$ and $Q=\Bern(q)$ with $0 \le q < p \le 1$.} \nbr{JX: shall we still assume $q<p$?} 
 %\nb{YW: How about}
 $P=\Bern(p)$ and $Q=\Bern(q)$ with $p\neq q $. When $p>q$, this coincides with the {\em planted dense subgraph model}
studied in \cite{McSherry01, arias2013community, ChenXu14, HajekWuXu14,Montanari:15OneComm},
which is also a special case of the general stochastic block model~\cite{Holland83} with a single community.
In this case, the data matrix $A$ corresponds to the adjacency matrix of a graph, where two vertices are connected with probability $p$ if both belong to the community $C^\ast$, and with probability $q$ otherwise.
%such that $q \le p,$ if either is not in $C^\ast$. 
Since $p > q$, the subgraph induced by $C^\ast$ is likely to be denser than the rest
of the graph. 

\item {Gaussian case}: $P=\calN(\mu,1)$ and $Q=\calN(0,1)$ with $\mu \neq 0$. 
%\nbr{JX: shall we still assume $\mu>0$?} \nb{YW: This is WLOG because we can negate the data matrix and indeed only $\mu^$ matters in the conditions. We can change this to $\mu\neq 0$, but then we need to change the wording such as elevated, positive mean, blahblah. Since WLOG we can assume it.}
This corresponds to a symmetric version of the {\em submatrix localization} problem studied in \cite{shabalin2009submatrix,kolar2011submatrix,butucea2013,Butucea2013sharp,ma2013submatrix,ChenXu14,CLR15}.\footnote{The previously studied submatrix localization model (also known as noisy biclustering) deals with submatrices whose row and column supports need not coincide and the noise matrix is asymmetric consisting of iid entries throughout. Here we focus on locating principal submatrices contaminated by a symmetric noise matrix. Additionally, we assume the diagonal does not carry any information. If instead we assume nonzero diagonal with $A_{ii} \sim \calN(\mu,1)$ if $ i \in C^*$ and $A_{ii} \sim \calN(0,1)$ if $i \notin C^*$, the results in this paper carry over with minor modifications explained in \prettyref{rmk:nonzero_diagonal}.}
When $\mu>0$, the entries of $A$ with row and column indices in $C^\ast$ have positive mean $\mu$ except those on the diagonal, while the rest of the entries have zero mean. 

\end{itemize}

Given the data matrix $A$, the problem of interest is to accurately recover the underlying community $C^\ast$.
The distributions $P$ and $Q$ as well as the community size $K$ depend on the matrix size $n$ in general.
For simplicity we assume that these model parameters 
are known to the  estimator.
The only assumptions on the community size $K$ we impose are that $K/n$ is bounded away from one,
and, to avoid triviality, that $K\geq 2$.   Of particular interest is the case of  $K=o(n),$
where the community size grows sublinearly.

We focus on the following two types of recovery guarantees.\footnote{Exact and weak recovery are
called strong consistency and weak consistency in \cite{Mossel14}, respectively.  
}
Let $\xi \in  \{0,1\}^n$ denote the indicator of the community such that $\supp(\xi)=C^*$. 
Let $\hat\xi=\hat\xi(A) \in \{0,1\}^n$ be  an estimator.

\begin{definition}[Exact Recovery]
\label{def:exact}
Estimator $\hat \xi$ {\em exactly recovers} $\xi,$ if, 
as $n \to \infty$,
$
\Prob[\xi \neq \hat\xi] \to 0,
$
where the probability is with respect to the randomness of $\xi$ and $A$.
\end{definition}

\begin{definition}[Weak Recovery]   \label{def:weak_recovery}
Estimator $\hat \xi$ {\em weakly recovers} $\xi$ if, as
$n \to \infty$,  $d_H(\xi, \hat\xi) / K \to 0$ in probability,  where $d_H$ denotes the Hamming distance.
\end{definition}

The existence of an estimator satisfying \prettyref{def:weak_recovery} is equivalent
 to the existence of an estimator such that  $ \Expect[d_H(\xi, \hat\xi)] = o(K)$  (see \prettyref{app:weak} for a proof).
Clearly, any estimator achieving exact recovery also achieves weak recovery; for bounded $K$,  exact and
weak recovery are  equivalent.

Intuitively, for a fixed network size $n$, as the community size $K$ decreases, or the distributions
$P$ and $Q$ get closer together, the recovery problem becomes harder. In this paper, we aim to
address the following question:
\emph{From an information-theoretic perspective, computational considerations aside, what are the fundamental limits of recovering the community?}
Specifically, we derive sharp necessary and sufficient conditions 
in terms of the model parameters 
under which the community can be 
exactly or weakly recovered. These results serve as benchmarks for evaluating practical algorithms and aid us in understanding the performance limits of polynomial-time algorithms.
%Furthermore, 
%we aim to understand the relationship between 
%exact and weak recovery guarantees from both statistical and algorithmic perspectives.

In addition to establishing information limits with sharp constants for general $P$ and $Q$,
we identify  the following algorithmic connection between weak and exact recovery:
 If exact recovery is information-theoretically  possible and there is an algorithm
for weak recovery, then in linear additional time we can obtain exact recovery based on the weak recovery algorithm.
This suggests that if the information limit of weak recovery can be obtained in polynomial time, then so can exact recovery;
conversely, if there exists a computational barrier that separates the information limit and the performance of polynomial-time algorithms for exact recovery,
then weak recovery also suffers from such a barrier.
To establish the connection, we apply a two-step procedure: the first step uses
an estimator capable of weak recovery, even in the presence of a slight mismatch between $|C^\ast|$ and $K$, such as the maximum likelihood estimator (see \prettyref{lmm:weak_general_random_suff}); the second step cleans up the residual errors through a local voting
procedure for each index. In order to ensure the first and second step are independent, 
we use a method which we call {\em successive withholding}.
The method of successive withholding is to randomly partition
the set of indices into a finite number of subsets.   One at a time,
one subset is withheld to produce a reduced set of indices,  and an
estimation algorithm is run on the reduced set of indices.   The
estimate obtained from the reduced set of indices is used to classify the
indices in the withheld subset.   The idea is to gain independence:
the outcome of estimation based on the reduced set of indices is
independent of the data between the
withheld indices and the reduced set of indices, and the withheld subset is sufficiently small so that we can still obtain sharp constants.
This method is mentioned in \cite{Condon01},
and variations of it have been used in  \cite{Condon01}, 
\cite{MosselNeemanSlyCOLT14}, and \cite{Mossel14}.

\subsection{Related Work}
	\label{sec:related}

Previous work has determined the information limits for
exact recovery up to universal constant factors for some choices
of $P$ and $Q$.
For the Bernoulli case, it is shown in \cite{ChenXu14} that
if $K d(q\|p) - c  \log K \to \infty $ and $ K d(p\|q) \ge c \log n$ for some large constant $c>0$, then exact recovery
is achievable via the maximum likelihood estimator (MLE);
conversely, if $K d(q\|p) \le c' \log K  $ and $K d(p\|q) \le  c' \log n$ for some small constant $c'>0$, then exact recovery is impossible for any algorithms.
Similarly, for the Gaussian case, it is proved in \cite{kolar2011submatrix} that if $K \mu^2 \ge c \log n$, then exact recovery is achievable via the MLE;
conversely, if $K \mu^2 \le c' \log n$, exact recovery is impossible for any algorithms.  
To the best of our knowledge, there are only a  few special cases where the information limits with \emph{sharp} constants are known:
\begin{itemize}
\item Bernoulli case with $p=1$ and $q=1/2$: It is widely known as the planted clique problem \cite{Jer92}.
If $ K\ge 2(1+\epsilon) \log_2 n $ for any $\epsilon>0$, exact recovery is achievable via the MLE;
if $ K \le 2(1-\epsilon) \log_2 n $, then exact recovery is impossible. Despite an extensive research effort
 polynomial-time algorithms are only known to achieve exact recovery for $K \ge c \sqrt{n} $ for any constant $c>0$~\cite{Alon98,Feige10findinghidden,Dekel10,Ames2011clique,Deshpande12}.
\item Bernoulli case with $p=a \log n /n$ and $q=b\log n/n$ for fixed $a,b$ and $K=\rho n$ for a fixed constant $0<\rho<1$.
The recent work \cite{HajekWuXuSDP14} finds an explicit threshold $\rho^*(a,b)$, such that
if $\rho > \rho^*(a,b)$, exact recovery is achievable in polynomial-time via
semi-definite relaxations of the MLE with probability tending to one; if $\rho < \rho^*(a,b)$, any estimator fails to exactly recover
the cluster with probability tending to one regardless of the computational costs.  This conclusion is in sharp
contrast to the computational barriers observed in the planted clique problem.

\item The paper of Butucea et al. \cite{Butucea2013sharp} gives sharp results for
a Gaussian submatrix recovery problem similar to
the one considered here -- see \prettyref{rmk:Butucea_sharp} for details.
\end{itemize}

While this paper focuses on information-theoretic limits, it complements other work
investigating computationally efficient recovery procedures, such as convex relaxations~\cite{ames2012clustering,Ames2013,ChenXu14,HajekWuXuSDP14,HajekWuXu_one_sdp15},
spectral methods~\cite{McSherry01}, and message-passing algorithms~\cite{Deshpande12,Montanari:15OneComm,HajekWuXu_MP_submat15,HajekWuXu_one_beyond_spectral15}. 
In particular, for both the Bernoulli and Gaussian cases:
\begin{itemize}
\item if $K=\Theta(n)$, a linear-time degree-thresholding algorithm achieves the information limit of weak recovery (see 
\cite[Appendix A]{HajekWuXu_one_beyond_spectral15} and \cite[Appendix A]{HajekWuXu_MP_submat15});
\item if $K=\omega(n/\log n)$, whenever information-theoretically possible, exact recovery can
be achieved in polynomial time using  semi-definite programming \cite{HajekWuXu_one_sdp15};
\item if $K \geq \frac{n}{\log n} (1/(8e) + o(1))$ for Gaussian case and $K \geq \frac{n}{\log n} (\rho_{\sf BP}(a/b) + o(1))$
for Bernoulli case,\footnote{Here $\rho_{\sf BP} (a/b)$ denotes a constant only depending on $a/b$.}
exact recovery can be attained in nearly linear time via message passing plus clean up~\cite{HajekWuXu_one_beyond_spectral15,HajekWuXu_MP_submat15} whenever information-theoretically possible.
\end{itemize}
However, it is an open problem
whether any polynomial time can achieve the respective information limit of weak recovery
for $K=o(n)$, or  exact recovery for $K \le \frac{n}{\log n} (1/(8e) -\epsilon)$ in the Gaussian case and 
for $K \le  \frac{n}{\log n} (\rho_{\sf BP}(a/b) -\epsilon)$ in the Bernoulli case, for any fixed $\epsilon>0$. \\

	The related work \cite{Montanari:15OneComm} studies weak recovery in the sparse regime of $p=a/n$, $q=b/n$,
 and $K=\kappa n$. In the iterated limit where first $n \to \infty$, and then $\kappa \to 0$ and $a, b\to \infty$,
with $\lambda=\frac{\kappa^2 (a-b)^2}{(1-\kappa) b }$ fixed, it is shown that a local algorithm, namely local belief propagation,
 achieves weak recovery in linear time if $\lambda \eexp >1$ and conversely, if $\lambda \eexp <1$, no local algorithm can achieve
 weak recovery.  
% \nbr{Bruce: Revised following.  Please check.} 
 Moreover, it is shown that for any $\lambda>0$, MLE achieves a recovery guarantee similar to weak recovery in \prettyref{def:weak_recovery}.
In comparison, the sharp information limit for weak recovery
 identified in \prettyref{cor:weak_Bern} below allows $p, q$ and $K$ to vary simultaneously with $n$ as $n\to \infty$.

Finally, we briefly compare the results of this paper to those of  \cite{Abbe14} and  \cite{Mossel14} on the planted bisection model (also known as the binary symmetric stochastic block model),  
 where the vertices are partitioned into two equal-sized communities.
  First, a necessary and sufficient condition for weak recovery and a necessary and sufficient condition for exact recovery
  are obtained in \cite{Mossel14}. In this paper, sufficient and
 necessary conditions,  \prettyref{eq:weak-bdd_suff}  and \prettyref{eq:weak-bdd_nec} in \prettyref{thm:weak_general}, 
are presented separately.  
These conditions match up except right at the boundary; we do not
determine whether recovery is possible exactly at the boundary.
 The result for exact recovery in \cite{Abbe14} is similar in that regard.   Perhaps
 future work, based on techniques from \cite{Mossel14}, can provide a more refined analysis for the recovery problem at the boundary. 
  Secondly, when 
recovery is information theoretically possible for the planted  bisection problem, efficient algorithms are shown
to exist in  \cite{Abbe14} and \cite{Mossel14}.   In contrast, for detecting or recovering a single community whose size is sublinear in the network size, there can be a significant gap between
what is information theoretically possible and what can be achieved by existing efficient algorithms (see \cite{Alon98,balakrishnan2011tradeoff,ma2013submatrix,HajekWuXu14,Montanari:15OneComm}).
We turn instead to the MLE for proof of optimal achievability.
Finally, this paper covers both the Gaussian and Bernoulli case
(and other distributions) in a unified framework without assuming that the community size scales linearly with the network size.

\paragraph{Notation}
For any positive integer $n$, let $[n]=\{1, \ldots, n\}$.
For any set $T \subset [n]$, let $|T|$ denote its cardinality and $T^c$ denote its complement.
We use standard big $O$ notations,
e.g., for any sequences $\{a_n\}$ and $\{b_n\}$, $a_n=\Theta(b_n)$ or $a_n  \asymp b_n$
if there is an absolute constant $c>0$ such that $1/c\le a_n/ b_n \le c$.
Let $\Binom(n,p)$ denote the binomial distribution with $n$ trials and success probability $p$.
Let $D(P\|Q)=\Expect_P[\log \frac{dP}{dQ}]$ denotes the Kullback-Leibler (KL) divergence between distributions $P$ and $Q$.
Let $\Bern(p)$ denote the Bernoulli distribution with mean $p$ and $d(p\|q) = D(\Bern(p)\|\Bern(q)) = p \log \frac{p}{q} +\bar p \log \frac{\bar p}{\bar q},$
where $\bar p \triangleq 1-p$.
Logarithms are natural and we adopt the convention $0 \log 0=0$. Let $\Phi(x)$ and $Q(x)$
denote the cumulative distribution function (CDF) and complementary CDF  of the standard normal distribution,
respectively.

\section{Overview of Main Results} \label{sec:overview}

\subsection{Background on Maximum Likelihood Estimator and Assumptions}

%We shall also  impose mild regularity assumptions on $P$ and $Q$, which is given after
%some background.
Given the data matrix $A$, a sufficient statistic for estimating the community $C^*$ is
the \emph{log likelihood ratio (LLR) matrix} $\bm{L} \in \reals^{n \times n}$, where
$L_{ij}=\log \frac{dP}{dQ}(A_{ij})$ for $i \neq j$ and $L_{ii}=0$.
For $S,T\subset [n]$,  define  
\begin{equation}
e(S,T) = \sum_{(i<j): (i,j) \in (S\times T) \cup (T\times S)} L_{ij}.
	\label{eq:eST}
\end{equation}
Let $\CML$ denote the maximum likelihood estimation (MLE) of $C^*,$ given by:
\begin{equation}
\CML=\argmax_{C\subset [n]} \{ e(C,C) : |C|= K \},
	\label{eq:MLE}
\end{equation}
which minimizes the error probability $\pprob{\widehat{C}\neq C^*}$ because $C^*$ is equiprobable by assumption.
Evaluating the MLE requires knowledge of $K$.
Computation of the MLE is NP hard for general values of $n$ and $K$ because
certifying the existence of a clique of a specified size in an
undirected graph, which is known to be an NP complete problem \cite{Karp72},
can be reduced to computation of the MLE.  Thus, evaluating the MLE in the
worst case is deemed computationally intractable. 
It is worth noting that the optimal estimator that minimizes the expected number of misclassified indices
(Hamming loss) is the bit-MAP decoder $\tilde\xi=(\tilde\xi_i)$, where 
$\tilde \xi_i \triangleq \argmax_{j\in\{0,1\}} \Prob[\xi_i=j|L]$. Therefore, although
the MLE is optimal for exact recovery, it need not be optimal for weak recovery;
nevertheless, we choose to analyze MLE due to its simplicity and it turns out to
be asymptotically optimal for weak recovery as well.

Our results require mild regularity conditions on the size of the hidden community $K$
and on the pair of distributions, $P$ and $Q.$  Specifically, for $K$,
{\em it is assumed without further comment that }
\[
	\limsup_{n \to \infty} K/n < 1.
\]
This assumption implies that $\frac{\log n}{\log (n-K)} \to 1$,  so in several
asymptotic results $\log n$ and $\log (n-K)$ are interchangeable; we give preference to $\log n$.
Also, to avoid triviality, {\em it is assumed throughout that $K\geq 2.$}

To state the assumption on $P$ and $Q$ we introduce some standard notation associated
with binary hypothesis testing based on independent samples. Throughout the paper we assume the KL divergences $D(P\|Q)$ and $D(Q\|P)$ are finite.  In particular, $P$ and $Q$ are mutually absolutely continuous, and the likelihood
ratio, $\frac{dP}{dQ},$  satisfies $\Expect_Q\left[ \frac{dP}{dQ} \right] = \Expect_P\left[ (\frac{dP}{dQ})^{-1} \right] =1$.
Let $L=\log \frac{dP}{dQ}$ denote the LLR.  The likelihood ratio test for
$n$ observations and threshold $n\theta$ is to declare $P$ to be the true distribution if
$\sum_{k=1}^{n} L_k \ge n\theta$ and to declare $Q$ otherwise.  
For $\theta \in [-D(Q\|P),D(P\|Q)]$,  the standard Chernoff bounds for error probability of this likelihood ratio test are given by:
\begin{align}
Q\qth{\sum_{k=1}^{n} L_k \ge n\theta} \leq \exp(-n E_Q(\theta)) \label{eq:LDupper1}
  \\
P\qth{\sum_{k=1}^{n} L_k \le n\theta} \leq \exp(-n E_P(\theta)) \label{eq:LDupper2},
\end{align}
where the log moment generating functions of $L$ are denoted by
 $\psi_Q(\lambda) = \log \Expect_Q[\exp(\lambda L)] $ and $\psi_P(\lambda) = \log \Expect_P[\exp(\lambda L)] 
= \psi_Q(\lambda + 1 )$ and the large deviations exponents are give by Legendre transforms of the
log moment generating functions:
\begin{equation}
E_Q(\theta) = \psi_Q^*(\theta) \triangleq \sup_{\lambda \in \reals} \lambda \theta - \psi_Q(\lambda), 
\quad E_P(\theta) =  \psi_P^*(\theta) \triangleq  \sup_{\lambda \in \reals} \lambda \theta - \psi_P(\lambda) =E_Q(\theta) -\theta.	
	\label{eq:ratefunction}
\end{equation}
In particular, $E_P$ and $E_Q$ are convex functions. Moreover, since $\psi_Q'(0)=-D(Q\|P)$ and $\psi_Q'(1)=D(P\|Q)$, we have $E_Q(-D(Q\|P)) = E_P(D(P\|Q)) = 0$ and hence 
$E_Q(D(P\|Q))=D(P\|Q)$ and $E_P(-D(Q\|P))=D(Q\|P)$. 
Our regularity assumption on the pair $P$ and $Q$ is the following.
\begin{assumption}  \label{ass:reg_strong}
 There exists a constant $C$ such that for all $n$, 
\begin{align}
% \var_{Q_\lambda} (L)  =\psi_P''(\lambda-1) =
 \psi_Q''(\lambda) \leq C  \min\{D(P\|Q),  D(Q\|P)\},  \quad \forall \lambda \in [-1,1].    \label{eq:reg_strong}
\end{align}
\end{assumption}
In general,  $\psi_Q''(\lambda) =\psi_P''(\lambda-1)  = \var_{Q_\lambda} (L) ,$  where
$Q_{\lambda}$ is the tilted distribution defined by $dQ_\lambda = \exp(\lambda L -\psi_Q(\lambda)) dQ,$
so the point of
\prettyref{ass:reg_strong}  is to require these quantities for $\lambda \in [-1,1]$  be bounded by a constant times
the  divergences.   \prettyref{ass:reg_strong} is the strongest condition imposed on $P$ and $Q$ in this
 paper; several of the results hold under weaker assumptions described in \prettyref{sec:technical}, which are also weaker than sub-Gaussianity of the LLR.

\prettyref{ass:reg_strong} is fulfilled in the following cases:
\begin{enumerate}
	\item Bounded LLR: \prettyref{lmm:scale} in \prettyref{sec:technical} shows that \prettyref{ass:reg_strong} holds if $L$ is bounded by a constant, which, in particular, holds in the Bernoulli case
if both $\frac{p}{q}$ and $\frac{\bar p}{\bar q}$ are bounded away from zero and infinity.

\item Gaussian case: In the Gaussian case $P=\calN(\mu,1), Q=\calN(0,1)$, we have $L(x)=\mu(x-\frac{\mu}{2})$,
$D(P\|Q)=D(Q\|P)=\mu^2/2$,  $\psi_Q(\lambda)=\frac{(\lambda^2-\lambda)\mu^2}{2}$,
$E_Q(\theta) = \frac{1}{8}(\mu+ \frac{2\theta}{\mu})^2$ and $E_P(\theta) =E_Q(-\theta)$. In particular,
$\psi''_Q(\lambda) \equiv \mu^2$ so \prettyref{ass:reg_strong} holds with $C=2$ regardless of how $\mu$ varies with $n$.
More generally, for $P$ and $Q$ lying in the same exponential family, \prettyref{app:exp} provides a simple sufficient condition to verify \prettyref{ass:reg_strong}.
\end{enumerate}

\subsection{Weak Recovery}

The following theorem is our main result about weak recovery.  It gives a sufficient condition
and a matching necessary condition for weak recovery.

\begin{theorem}\label{thm:weak_general}
Suppose \prettyref{ass:reg_strong} holds. If
	\begin{equation}
	K  \cdot D(P\|Q) \to \infty   \quad \text{and} \quad \liminf_{n\to\infty} \frac{(K-1)  D(P\|Q)}{\log \frac{n }{K}}> 2,
	\label{eq:weak-bdd_suff}
\end{equation}
then
$$
\Prob\{|\widehat{C}_{\rm ML} \triangle C^*| \le 2 K \epsilon \} \geq 1 - \eexp^{ - \Omega( K/ \epsilon  )},
$$
where $\epsilon = 1/\sqrt{ K D( P \| Q)}$.

If there exists $\hat \xi$ such that  $\Expect[d_H(\xi, \hat \xi)] = o(K)$, then
\begin{equation}
	K \cdot D(P\|Q) \to \infty   \quad \text{and} \quad \liminf_{n\to\infty} \frac{(K-1) D(P\|Q)}{\log \frac{n }{K}} \ge 2.
	\label{eq:weak-bdd_nec}
	\end{equation}
\end{theorem}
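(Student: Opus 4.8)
The plan is to prove the two halves separately. For the \emph{sufficient} direction, I would analyze the MLE $\CML$ directly via a first-moment (union bound) argument over the overlap structure. Fix the true community $C^*$ and, for a candidate $C$ with $|C|=K$ and $|C\cap C^*|=K-\ell$ (so $|C\triangle C^*|=2\ell$), compare the objective values $e(C,C)$ and $e(C^*,C^*)$. The difference $e(C^*,C^*)-e(C,C)$ is a sum of roughly $\ell K$ independent LLR increments: the $\ell(K-\ell)$ "lost" in-community edges between $C\cap C^*$ and $C^*\setminus C$ contribute LLRs distributed according to $P$-type statistics, while the $\ell(K-\ell)$ "gained" edges between $C\cap C^*$ and $C\setminus C^*$ contribute $Q$-type statistics, plus lower-order $O(\ell^2)$ terms from edges internal to the swapped sets. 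The event $\{\CML=C\}$ requires $e(C,C)\ge e(C^*,C^*)$, and by the Chernoff bounds \eqref{eq:LDupper1}--\eqref{eq:LDupper2} together with \prettyref{ass:reg_strong} (used to control the moment generating function of the combined increment uniformly, i.e.\ to get a clean exponent proportional to $D(P\|Q)$ near the relevant tilt), this probability is at most $\exp(-c\,\ell K\, D(P\|Q))$ up to the $O(\ell^2 D)$ correction. Summing over the $\binom{K}{\ell}\binom{n-K}{\ell}\le (enK/\ell^2)^\ell\le (en/\ell)^{\ell}$ choices of $C$ at overlap deficit $\ell$, the union bound contributes roughly $\exp\big(\ell\log(n/\ell)-c\,\ell K D(P\|Q)\big)$; the second condition in \eqref{eq:weak-bdd_suff}, namely $(K-1)D(P\|Q)>(2+\delta)\log(n/K)$, makes the exponent negative for all $\ell\ge K\epsilon$ with $\epsilon=1/\sqrt{KD(P\|Q)}$, while the first condition $KD(P\|Q)\to\infty$ guarantees $\epsilon\to0$ and that the error probability is $\exp(-\Omega(K/\epsilon))$. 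Care is needed for small $\ell$ (the $O(\ell^2 D)$ term versus the $\ell K D$ term — this is fine since $\ell\le K$) and to handle $\ell$ up to $K$ uniformly; the threshold $2K\epsilon$ in the statement exactly reflects where the sum is dominated.

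For the \emph{necessary} direction, suppose some estimator $\hat\xi$ achieves $\Expect[d_H(\xi,\hat\xi)]=o(K)$; I want to deduce \eqref{eq:weak-bdd_nec}. The clean way is to pass to the Bayes-optimal estimator for Hamming loss, the bit-MAP decoder $\tilde\xi$ described in the excerpt, since its expected error is a lower bound on $\Expect[d_H(\xi,\hat\xi)]$ for every estimator; so it suffices to show that if \eqref{eq:weak-bdd_nec} fails then $\Expect[d_H(\xi,\tilde\xi)]=\Omega(K)$. Equivalently, lower bound $\sum_i \Prob[\tilde\xi_i\ne\xi_i]$ by a genuine constant fraction of $K$. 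The standard device is a \emph{genie argument / two-point reduction}: condition on the membership of all indices except a pair $(u,v)$ with $u\in C^*$, $v\notin C^*$ known to the genie to be "ambiguous" (i.e.\ revealing that exactly one of $u,v$ is in the community but not which), and show the posterior cannot distinguish them better than chance up to a constant. Decide between the hypothesis $u\in C^*$ and $v\in C^*$ reduces to a binary hypothesis test between two product distributions that differ only on the $2(K-1)$ edges from $\{u,v\}$ to the remaining community of size $K-1$ (and on the $u$--$v$ edge and edges to the $n-K-1$ outsiders, which are noninformative or lower order): under one hypothesis those $K-1$ edges touching $u$ are $\sim P$ and those touching $v$ are $\sim Q$, and vice versa. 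The error probability of the optimal test for this is governed by the Chernoff information, which here equals $(K-1)$ times the per-edge Chernoff exponent; the relation $E_Q(D(P\|Q))=D(P\|Q)$ and the symmetry of the setup make the effective exponent $\tfrac12(K-1)D(P\|Q)$ against $\log(n/K)$-many competing "decoy" outside indices (there are about $n-K$ candidates $v$, so a union/second-moment count shows the test fails with constant probability once $\tfrac12(K-1)D(P\|Q)<(1-\delta)\log(n/K)$, i.e.\ once the $\liminf$ is $<2$). Separately, if $KD(P\|Q)\not\to\infty$, i.e.\ $KD(P\|Q)$ stays bounded along a subsequence, then even a genie revealing all-but-one membership leaves an irreducible constant error per revealed index, giving $\Omega(K)$ total error; this handles the first condition in \eqref{eq:weak-bdd_nec}.

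I expect the \textbf{main obstacle} to be the necessary direction's quantitative core: correctly setting up the ensemble of $\Theta(K)$ "swappable pairs" so that their individual error events are sufficiently independent (or negatively correlated) for the per-index lower bounds to aggregate to $\Omega(K)$ rather than being spoiled by correlations, and simultaneously getting the constant in the exponent sharp — pinning down that the relevant large-deviation rate is exactly $\tfrac12 (K-1) D(P\|Q)$ (not $D(P\|Q)$ or $D(Q\|P)$ separately), which is where the identities $E_P(-D(Q\|P))=D(Q\|P)$, $E_Q(D(P\|Q))=D(P\|Q)$, and the minimization $\min_\theta [E_P(\theta)\vee E_Q(\theta)$-type balance$]$ enter. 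A secondary technical nuisance is that the per-edge increments are not bounded in general (only controlled through \prettyref{ass:reg_strong}), so the Chernoff-bound steps must be run through $\psi_Q,\psi_P$ and the variance bound \eqref{eq:reg_strong} rather than Hoeffding; the assumption is precisely strong enough to make the exponent scale like the divergence and to control the $o(1)$ corrections. The achievability side is comparatively routine once the overlap-indexed union bound is organized, the only delicate point being to carry the $\epsilon=1/\sqrt{KD(P\|Q)}$ bookkeeping so the conclusion matches the stated $\exp(-\Omega(K/\epsilon))$.
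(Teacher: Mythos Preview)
Your sufficiency argument is essentially the paper's: a union bound over overlap classes followed by Chernoff bounds. The paper organizes it by introducing a threshold $\theta=(1-\eta)D(P\|Q)$ and splitting $\{e(C,C)\ge e(C^*,C^*)\}$ into the two events $\{e(S,C^*)\le m\theta\}$ and $\{e(T,T)+e(T,C^*\setminus S)\ge m\theta\}$, bounding them separately via $E_P(\theta)$ and $E_Q(\theta)$, rather than handling the combined increment in one shot; but the structure and the use of \eqref{eq:divquadassump} to lower-bound $E_P((1-\eta)D(P\|Q))$ by $c\eta^2 D(P\|Q)$ match your sketch. For the first necessary condition $KD(P\|Q)\to\infty$, your genie pair-swap is also exactly what the paper does: reveal $\xi_{\backslash i,J}$, reduce to testing $P^{\otimes(K-1)}\otimes Q^{\otimes(K-1)}$ against $Q^{\otimes(K-1)}\otimes P^{\otimes(K-1)}$, and conclude from $\text{TV}\to 1$ that $(K-1)(D(P\|Q)+D(Q\|P))\to\infty$.

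The gap is in your plan for the \emph{second} necessary condition, the sharp constant $2$. The Bayes-error exponent of your pair-swap test is the Chernoff index of $P\otimes Q$ versus $Q\otimes P$, namely $-2(K-1)\psi_Q(1/2)$, which under \prettyref{ass:reg_strong} is only $\asymp (K-1)D(P\|Q)$ with an uncontrolled multiplicative constant; and your decoy count is $n-K$, contributing $\log n$ rather than $\log(n/K)$. Neither the claimed exponent $\tfrac12(K-1)D(P\|Q)$ nor the comparison scale $\log(n/K)$ falls out of this setup --- the identity $E_Q(D(P\|Q))=D(P\|Q)$ you invoke says nothing about $-\psi_Q(1/2)$ or $C(P,Q)$, and these equal $\tfrac12 D(P\|Q)$ only in symmetric situations like the Gaussian case. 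As written, your route would recover the threshold only up to a constant factor.

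The paper bypasses all of this with a short rate-distortion argument for the second condition: on one hand $I(A;\xi)\le\binom{K}{2}D(P\|Q)$ by comparing $\mathbb{P}_{A|\xi}$ to the all-$Q$ reference $Q^{\otimes\binom{n}{2}}$; on the other, any estimator with $\Expect[d_H(\xi,\hat\xi)]=\epsilon_n K$ forces $I(A;\xi)\ge\log\binom{n}{K}-nh(\epsilon_n K/n)=(1+o(1))K\log(n/K)$. Dividing the two gives $\liminf (K-1)D(P\|Q)/\log(n/K)\ge 2$ immediately, with $\log(n/K)$ emerging from $\log\binom{n}{K}$ rather than from any decoy count. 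The correlation bookkeeping and constant-pinning obstacles you anticipate simply do not arise in this approach.
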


\begin{remark}  \label{rmk:KvsK-1_weak}
The assumption $K\geq 2,$ implies $K/2 \leq K-1 \leq K$, so the first parts of \eqref{eq:weak-bdd_suff}
and \eqref{eq:weak-bdd_nec} would have the same meaning if $K$ were replaced by $K-1$.    
In the special case of bounded LLR, the factor $K-1$  in the second parts of \eqref{eq:weak-bdd_suff}
and  \eqref{eq:weak-bdd_nec} can be replaced by $K$.  This is because if  $\log \frac{d P}{d Q}$ is bounded,  so is $D(P\|Q) $, and
$K D(P\|Q) \to \infty$ implies  $K\to \infty$ and hence also $(K-1)/K \to 1$.
 \end{remark}

\begin{corollary}[Weak recovery in Bernoulli case] \label{cor:weak_Bern}
Suppose the ratios $\log \frac{p}{q}$ and $\log \frac{\bar p}{\bar q}$ are
bounded.   If
\begin{align}
K \cdot d(p \| q) \to \infty \quad \text{and} \quad  \liminf_{n \to \infty} \frac{K d(p\| q) }{\log \frac{n}{K} } > 2, \label{eq:MLE_comm_suff_cond2}
\end{align}
then weak recovery is possible. If weak recovery is possible, then
\begin{align}
K \cdot d(p \| q) \to \infty \quad \text{and} \quad  \liminf_{n \to \infty} \frac{ K d(p\| q) }{\log \frac{n}{K} } \ge 2. \label{eq:MLE_comm_nec_cond2}
\end{align}
\end{corollary}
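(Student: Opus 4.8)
The plan is to obtain the corollary as a direct specialization of \prettyref{thm:weak_general} to the Bernoulli pair $P=\Bern(p)$, $Q=\Bern(q)$, once we check that \prettyref{ass:reg_strong} holds and then translate the conditions \eqref{eq:weak-bdd_suff}--\eqref{eq:weak-bdd_nec} into \eqref{eq:MLE_comm_suff_cond2}--\eqref{eq:MLE_comm_nec_cond2}. First I would observe that in the Bernoulli case the LLR $L=\log\frac{dP}{dQ}$ is two-valued, equal to $\log\frac{p}{q}$ on $\{A_{ij}=1\}$ and to $\log\frac{\bar p}{\bar q}$ on $\{A_{ij}=0\}$; hence the hypothesis that $\log\frac pq$ and $\log\frac{\bar p}{\bar q}$ are bounded is precisely the statement that $L$ is bounded by an absolute constant. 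By \prettyref{lmm:scale}, \prettyref{ass:reg_strong} then holds, with $D(P\|Q)=d(p\|q)$, so \prettyref{thm:weak_general} is applicable. I would also record the elementary but essential consequence that $K\to\infty$ in this regime: since $L$ bounded forces $d(p\|q)=D(P\|Q)$ to be bounded, the requirement $K\,d(p\|q)\to\infty$ (present in both \eqref{eq:MLE_comm_suff_cond2} and the conclusion \eqref{eq:MLE_comm_nec_cond2}, and implied by \eqref{eq:weak-bdd_suff} and \eqref{eq:weak-bdd_nec}) entails $K\to\infty$, whence $(K-1)/K\to1$. This is exactly what permits the interchange of $K$ and $K-1$ in the divergence conditions noted in \prettyref{rmk:KvsK-1_weak}.

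For sufficiency, assume \eqref{eq:MLE_comm_suff_cond2}. Then $K\cdot D(P\|Q)\to\infty$, and
\[
\liminf_{n\to\infty}\frac{(K-1)D(P\|Q)}{\log\frac nK}
=\liminf_{n\to\infty}\frac{K\,d(p\|q)}{\log\frac nK}\cdot\frac{K-1}{K}>2,
\]
so \eqref{eq:weak-bdd_suff} is in force. \prettyref{thm:weak_general} gives $\Prob\{|\CML\triangle C^*|\le 2K\epsilon\}\ge 1-\eexp^{-\Omega(K/\epsilon)}$ with $\epsilon=1/\sqrt{K\,D(P\|Q)}\to0$; since $K/\epsilon\to\infty$ and $2K\epsilon=o(K)$, identifying $\hat\xi$ with the indicator of $\CML$ yields $d_H(\xi,\hat\xi)/K\to0$ in probability, i.e.\ weak recovery in the sense of \prettyref{def:weak_recovery}. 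For necessity, suppose weak recovery is possible. By the equivalence recorded just after \prettyref{def:weak_recovery} (and proved in \prettyref{app:weak}), there is an estimator $\hat\xi$ with $\Expect[d_H(\xi,\hat\xi)]=o(K)$, so the second half of \prettyref{thm:weak_general} applies and gives \eqref{eq:weak-bdd_nec}: $K\,D(P\|Q)\to\infty$ and $\liminf_n\frac{(K-1)D(P\|Q)}{\log(n/K)}\ge2$. Using $(K-1)/K\to1$ once more turns the latter into $\liminf_n\frac{K\,d(p\|q)}{\log(n/K)}\ge2$, which is \eqref{eq:MLE_comm_nec_cond2}.

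I do not expect any genuinely hard step here: the corollary is essentially bookkeeping on top of \prettyref{thm:weak_general}. The only points that need a little care are (i) confirming that the boundedness hypothesis on $\log\frac pq$ and $\log\frac{\bar p}{\bar q}$ is exactly the boundedness of $L$ required to invoke \prettyref{lmm:scale}, with nothing stronger needed; and (ii) justifying the passage from $K-1$ to $K$ in the threshold conditions, which is legitimate here only because $d(p\|q)$ is bounded (so $K\,d(p\|q)\to\infty$ forces $K\to\infty$) — in the generality of \prettyref{thm:weak_general} this replacement is not available, which is why the theorem is stated with $K-1$.
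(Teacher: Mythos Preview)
Your proposal is correct and follows exactly the route the paper intends: invoke \prettyref{lmm:scale} to verify \prettyref{ass:reg_strong} from the bounded LLR hypothesis, apply \prettyref{thm:weak_general} with $D(P\|Q)=d(p\|q)$, and use the observation of \prettyref{rmk:KvsK-1_weak} that $K\to\infty$ (forced by bounded $d(p\|q)$) permits the swap of $K-1$ for $K$. The paper does not spell out a separate proof for this corollary precisely because it is the bookkeeping you describe.
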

\begin{remark} \label{rmk:KvsKminus_one}
Condition \prettyref{eq:MLE_comm_nec_cond2} is necessary even if  $p/q \to \infty,$ but \prettyref{eq:MLE_comm_suff_cond2} alone is not sufficient without
the assumption that $p/q$ is bounded.
This can be seen by considering the extreme case where $K=n/2$, $p=1/n$, and $q=\eexp^{-n}$. In this case, condition  \prettyref{eq:MLE_comm_suff_cond2} is clearly satisfied; however, the subgraph induced by index in the cluster is an \ER random graph
with edge probability $1/n$ which contains at least a constant fraction of isolated vertices
with probability converging to one as $n\to \infty$. It is not possible to correctly determine
whether the isolated vertices are in the cluster, hence the impossibility of weak recovery.
\end{remark}

\begin{corollary}[Weak recovery in Gaussian case] \label{cor:weak_Gau}
If
 \begin{equation}
K \mu^2  \rightarrow \infty \quad \text{and} \quad \liminf _{n\to\infty} \frac{(K-1) \mu^2}{\log \frac{n }{K}}> 4,
	\label{eq:weak_Gaussian_suff}
\end{equation}
then weak recovery is possible. If weak recovery is possible, then
 \begin{equation}
K \mu^2  \rightarrow \infty \quad \text{and} \quad \liminf _{n\to\infty} \frac{(K-1) \mu^2}{\log \frac{n }{K}} \ge 4.\label{eq:weak_Gaussian_nec}
\end{equation}
\end{corollary}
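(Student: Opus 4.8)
The plan is to obtain the corollary as a direct specialization of \prettyref{thm:weak_general} to $P=\calN(\mu,1)$, $Q=\calN(0,1)$. The only prerequisite is that \prettyref{ass:reg_strong} be in force, and this was already recorded right after the statement of \prettyref{ass:reg_strong}: in the Gaussian case $D(P\|Q)=D(Q\|P)=\mu^2/2$ while $\psi_Q''(\lambda)\equiv\mu^2$, so \prettyref{eq:reg_strong} holds with $C=2$ uniformly in $n$, irrespective of how $\mu=\mu_n$ scales. Hence \prettyref{thm:weak_general} applies, and what remains is purely bookkeeping: substitute $D(P\|Q)=\mu^2/2$ throughout.

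For sufficiency, with $D(P\|Q)=\mu^2/2$ the first part of \prettyref{eq:weak-bdd_suff} becomes $K\mu^2\to\infty$, and the second part $\liminf_n (K-1)D(P\|Q)/\log\frac nK>2$ becomes $\liminf_n (K-1)\mu^2/\log\frac nK>4$; in other words \prettyref{eq:weak_Gaussian_suff} is precisely \prettyref{eq:weak-bdd_suff} for this pair $(P,Q)$. \prettyref{thm:weak_general} then gives $\Prob\{|\CML\triangle C^*|\le 2K\epsilon\}\ge 1-\eexp^{-\Omega(K/\epsilon)}$ with $\epsilon=1/\sqrt{K\mu^2/2}=o(1)$. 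Since $\hat\xi=\indc{\CML}$ and $\xi$ both have support of size $K$, one has $d_H(\xi,\hat\xi)=|\CML\triangle C^*|$, so $d_H(\xi,\hat\xi)/K\le 2\epsilon\to 0$ on an event of probability tending to one; this is weak recovery in the sense of \prettyref{def:weak_recovery}, achieved by the MLE.

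For necessity, suppose weak recovery is possible. By the equivalence recalled immediately after \prettyref{def:weak_recovery} (and proved in \prettyref{app:weak}), there exists an estimator $\hat\xi$ with $\Expect[d_H(\xi,\hat\xi)]=o(K)$, which is exactly the hypothesis of the second half of \prettyref{thm:weak_general}. Its conclusion \prettyref{eq:weak-bdd_nec}, again with $D(P\|Q)=\mu^2/2$, reads $K\mu^2\to\infty$ and $\liminf_n (K-1)\mu^2/\log\frac nK\ge 4$, which is \prettyref{eq:weak_Gaussian_nec}.

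I do not anticipate any genuine obstacle beyond the substitutions above, since all the analytic content -- the MLE concentration bound and the matching converse -- is carried by \prettyref{thm:weak_general}. The one point that deserves care is that, in contrast to the Bernoulli setting, the Gaussian LLR $L(x)=\mu(x-\mu/2)$ is unbounded, so the simplification of replacing $K-1$ by $K$ that is permitted under a bounded LLR (see \prettyref{rmk:KvsK-1_weak}) is not available here; the factor $K-1$ must be retained in both \prettyref{eq:weak_Gaussian_suff} and \prettyref{eq:weak_Gaussian_nec}, as one sees is genuinely necessary by considering, \eg, $K=2$ with $\mu^2\asymp\log n$.
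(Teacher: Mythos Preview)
Your proposal is correct and follows exactly the approach implicit in the paper: the corollary is stated without separate proof precisely because it is the direct specialization of \prettyref{thm:weak_general} obtained by substituting $D(P\|Q)=\mu^2/2$ and invoking the Gaussian verification of \prettyref{ass:reg_strong} already recorded in the discussion after that assumption. Your additional remarks on the $K-1$ versus $K$ issue are accurate and helpful but go beyond what the paper itself makes explicit.
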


%\begin{remark}
%The assumption $K\geq 2,$ implies $K/2 \leq K-1 \leq K$, so the first parts of \eqref{eq:weak_Gaussian_suff}
%and \eqref{eq:weak_Gaussian_nec} would have the same meaning if $K$ were replaced by $K-1$.     However, without
%assuming $K\to\infty,$  the factor $K-1$ cannot be replaced by $K$ in the second parts of \eqref{eq:weak_Gaussian_suff}
%and \eqref{eq:weak_Gaussian_nec}. 
%\end{remark}

%%%%%%%%%% exact recovery %%%%%%%%%%%%%%%%
\subsection{Exact Recovery}
The following theorem states our main result about exact recovery.  It gives a sufficient condition
and a matching necessary condition for exact recovery.   Since exact recovery implies weak recovery, conditions
from \prettyref{thm:weak_general} naturally enter.

\begin{theorem} \label{thm:exact_general}
Suppose \prettyref{ass:reg_strong} holds.
If  \eqref{eq:weak-bdd_suff}  and the following hold:
\begin{equation}
\liminf_{n\to\infty} \frac{K E_Q\pth{\frac{1}{K} \log \frac{n}{K}}}{\log n} > 1.
	\label{eq:voting-suff}
\end{equation}
then the maximum likelihood estimator satisfies $\pprob{ \CML = C^*} \to 1$.

If there exists an estimator $\hat C$ such that $\pprob{ \hat C = C^*} \to 1,$ then
\eqref{eq:weak-bdd_nec} and the following hold:
\begin{equation}
\liminf_{n\to\infty} \frac{K E_Q\pth{\frac{1}{K} \log \frac{n}{K}}}{\log n} \ge 1.
	\label{eq:voting-nec}
\end{equation}
\end{theorem}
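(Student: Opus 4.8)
\textit{Sufficiency.} The claim has two directions, and both reduce to analysing the MLE together with the Chernoff bounds \eqref{eq:LDupper1}--\eqref{eq:LDupper2}. Assume \eqref{eq:weak-bdd_suff} and \eqref{eq:voting-suff} and aim for $\pprob{\CML=C^*}\to1$. I would decompose $\{\CML\neq C^*\}$ by the overlap deficit $\ell:=|C^*\setminus\CML|=|\CML\setminus C^*|$. Large $\ell$ is already ruled out by \prettyref{thm:weak_general}: \eqref{eq:weak-bdd_suff} gives $\pprob{|\CML\triangle C^*|>2K\epsilon}\le e^{-\Omega(K/\epsilon)}\to 0$ with $\epsilon=1/\sqrt{K D(P\|Q)}\to0$, so only competitors $C=(C^*\setminus I)\cup J$, $|I|=|J|=\ell\in[1,K\epsilon]$, remain. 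For these, $e(C,C)-e(C^*,C^*)=G(I,J)-\Lambda(I)$ where $G(I,J)$ is a sum of $N_\ell:=\ell(K-\ell)+\binom\ell2$ \iid copies of $L$ under $Q$ and $\Lambda(I)$ a sum of $N_\ell$ copies under $P$. Splitting at $\theta^\ast:=\frac1K\log\frac nK$ through $\{G\ge\Lambda\}\subseteq\{G\ge N_\ell\theta^\ast\}\cup\{\Lambda\le N_\ell\theta^\ast\}$ and invoking \eqref{eq:LDupper1}--\eqref{eq:LDupper2}, the point is that $\{\Lambda(I)\le N_\ell\theta^\ast\}$ depends only on $I$, so it is union-bounded over $\binom K\ell$ sets at exponent $N_\ell E_P(\theta^\ast)$, whereas $\{G(I,J)\ge N_\ell\theta^\ast\}$ --- modulo a lower-order correction from the $\ell^2$ edges between $I$ and $J$ --- depends only on $J$ and is union-bounded over $\binom{n-K}\ell$ sets at exponent $N_\ell E_Q(\theta^\ast)$. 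Since $K\theta^\ast=\log\frac nK$ and $E_P(\theta)=E_Q(\theta)-\theta$, condition \eqref{eq:voting-suff} delivers both $KE_Q(\theta^\ast)\ge(1+\delta)\log n$ and $KE_P(\theta^\ast)\ge\delta\log n+\log K$ for some $\delta>0$, which is exactly enough to defeat the factors $\binom{n-K}\ell$ and $\binom K\ell$ respectively and leave a bound summable over $\ell\ge1$. (Transparently, the dominant case is $\ell=1$: with high probability no single exchange $a\leftrightarrow b$, $a\in C^*$, $b\notin C^*$, raises the likelihood.)

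\textit{Necessity.} If $\hat C$ achieves exact recovery then it also achieves weak recovery, so $\Expect[d_H(\xi,\hat\xi)]=o(K)$ for some estimator and \eqref{eq:weak-bdd_nec} follows from \prettyref{thm:weak_general}; in particular $\theta^\ast\in(-D(Q\|P),D(P\|Q))$ eventually. For \eqref{eq:voting-nec} I argue by contradiction: suppose $KE_Q(\theta^\ast)\le(1-\delta)\log n$ along a subsequence. Since the MLE minimises the error probability, it suffices to exhibit, with probability $\not\to0$, a set strictly beating $C^*$; I take $C'=(C^*\setminus\{a_0\})\cup\{b\}$ with $a_0:=\argmin_{a\in C^*}\sum_{v\in C^*\setminus a}L_{av}=:m$. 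Conditioning on the edges inside $C^*$ fixes $a_0$ and $m$, after which $G_b:=\sum_{v\in C^*\setminus\{a_0\}}L_{bv}$ ($b\notin C^*$) are \iid sums of $K-1$ copies of $L$ under $Q$, independent of the conditioning, and $e(C',C')\ge e(C^*,C^*)$ iff $G_b\ge m$. A Cram\'er lower bound gives $\pprob{G_b\ge m\mid m}\ge e^{-(K-1)E_Q(m/(K-1))(1+o(1))}$, and a second-moment estimate on $\#\{a\in C^*:\sum_{v\in C^*\setminus a}L_{av}\le(K-1)\theta_K\}$ (with $\theta_K$ chosen so this count has divergent mean, i.e.\ $(K-1)E_P(\theta_K)=\log K-\omega(1)$) shows $m\le(K-1)\theta_K$ with probability $\to1$. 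Comparing exponents, $(K-1)E_P(\theta_K)<\log K$ while $(K-1)E_P(\theta^\ast)=(K-1)E_Q(\theta^\ast)-\frac{K-1}{K}\log\frac nK\le-\delta\log n+\log K+o(\log n)$, so $\theta_K<\theta^\ast$ and hence $(K-1)E_Q(m/(K-1))\le(K-1)E_Q(\theta_K)\le KE_Q(\theta^\ast)\le(1-\delta)\log n$ by monotonicity of $E_P$ and $E_Q$. Then $(n-K)\pprob{G_b\ge m\mid m}\ge(n-K)\,n^{-(1-\delta)(1+o(1))}\to\infty$, and conditional independence of the $G_b$ gives $\pprob{\exists b:G_b\ge m}\to1$; replacing $\ge$ by $>$ (immediate when $P,Q$ are continuous, a short perturbation otherwise) makes the MLE err with probability $\to1$, contradicting exact recovery.

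\textit{Main obstacle.} The work is in the weak dependencies, not in the exponents. For sufficiency it is showing that the $I$--$J$ cross term is small enough that the $\binom{n-K}\ell$ union bound pays only the $E_Q$-exponent --- a naive split lets $\binom K\ell\binom{n-K}\ell$ overwhelm $N_\ell E_Q(\theta^\ast)\sim\ell\log n$. For necessity it is the second-moment step: $H_a:=\sum_{v\in C^*\setminus a}L_{av}$ and $H_{a'}$ are correlated only through the single edge $L_{aa'}$, and bounding $\Cov(\1{H_a\le t},\1{H_{a'}\le t})$ so that $\#\{a:H_a\le t\}$ concentrates is precisely where the control on $\var_{Q_\lambda}(L)$ from \prettyref{ass:reg_strong} is used. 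A persistent bookkeeping nuisance is the $K$ versus $K-1$ mismatch in \eqref{eq:voting-suff}/\eqref{eq:voting-nec}: harmless once $K\to\infty$ (automatic under bounded LLR) but needing separate attention for bounded $K$.
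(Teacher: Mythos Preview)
Your approach here is genuinely different from the paper's, and the gap you flag as the ``main obstacle'' is not actually closed by your proposed fix. The paper does \emph{not} analyse the MLE directly for exact recovery. Instead it constructs a different estimator via successive withholding (Algorithm~\ref{alg:PQ_cleanup}): partition $[n]$ into $1/\delta$ blocks, run weak recovery on the complement of each block, and vote on the withheld indices using the recovered set. The whole point of this detour is to manufacture independence between the ``vote'' $r_i=\sum_{j\in\hat C_k}L_{ij}$ and the recovery step, so that the $E_Q$-exponent can be charged against a union bound over only $n-K$ indices (and the $E_P$-exponent against only $K$). Optimality of the MLE is invoked only at the very end.

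Your direct MLE route founders exactly where you say it does. Writing $G(I,J)=\tilde G(J)-e(J,I)$ with $\tilde G(J)=e(J,J)+e(J,C^*)$, the ``lower-order correction'' $e(J,I)$ has $\ell^2$ terms, but to union-bound $\{G(I,J)\ge N_\ell\theta^\ast\}$ over $J$ alone you must control $\min_{I}e(J,I)$ uniformly in $I$ --- and any Chernoff/union bound on that minimum reintroduces the very factor $\binom{K}{\ell}$ you are trying to avoid. Concretely, for $\ell=1$ and $K=\Theta(n)$: your bound reads $(n-K)\,\Prob[\max_a e(b,C^*\setminus\{a\})\ge(K-1)\theta^\ast]$, and since the maximum over $a$ is the sum of the $K-1$ largest of $K$ i.i.d.\ terms, Chernoff gives at best $K\,e^{-(K-1)E_Q(\theta^\ast)}$, yielding $Kn\cdot e^{-KE_Q(\theta^\ast)}$. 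This demands $KE_Q(\theta^\ast)>2\log n$, off by a factor of two from \eqref{eq:voting-suff}. The $\ell^2$ edges are \emph{not} a bookkeeping nuisance; they are why the paper resorts to withholding.

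\textbf{Necessity.} Here you are closer to the paper, though the decoupling is handled differently. The paper (Lemma~\ref{lmm:exact_nec_general}) does not do a second-moment computation on $\#\{a:H_a\le t\}$. Instead it takes a deterministic subset $T\subset C^*$ of size $K_o=o(K)$ with $K_o\to\infty$, uses Chebyshev to show at least $K_o/2$ indices $i\in T$ have $e(i,T)$ not too large, and then exploits that $\{e(i,C^*\setminus T)\}_{i\in T}$ are genuinely independent. This sidesteps the covariance estimate entirely. Your second-moment approach requires bounding $\var(F(t-L_{aa'}))$ (with $F$ the CDF of $\sum_{v\ne a,a'}L_{av}$), and it is not clear that \prettyref{ass:reg_strong} alone gives $\var(F(t-L))=o(\Expect[F(t-L)]^2)$ in the relevant large-deviation regime; the paper's subset trick is cleaner. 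The subsequent step --- showing \eqref{eq:votingtype1error} and \eqref{eq:votingtype2error} cannot both hold at $\theta_n=\theta^\ast$ --- uses the non-asymptotic Cram\'er lower bound of Lemma~\ref{lmm:ld-lb}, where the $\psi_Q''$ control from \prettyref{ass:reg_strong} enters to make the lower bound match the Chernoff upper bound; this is where that assumption is actually consumed.
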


\begin{remark}
\label{rmk:CPQ}	
	In the special case of linear community size, \ie, $K=\Theta(n)$, \prettyref{eq:voting-suff} and \prettyref{eq:voting-nec} can be simplified by replacing 
$E_Q\pth{\frac{1}{K} \log \frac{n}{K}}$ by the Chernoff index between $P$ and $Q$ \cite{Chernoff52}:
\begin{equation}
E_P(0)=E_Q(0)=\sup_{0\leq \lambda \leq 1} -\log \int  \left(\frac{dP}{dQ}\right)^\lambda dQ \triangleq C(P,Q).
	\label{eq:CPQ}
\end{equation}
To see this, note that in the definition $E_Q(\theta)$ in \prettyref{eq:ratefunction} the supremum can be restricted to $\lambda\in[0,1]$ and hence 
$E_Q(\theta)\leq E_Q(\theta+\delta)\leq E_Q(\theta)+\delta$ as long as $-D(Q\|P)\leq \theta\leq\theta+\delta\leq D(P\|Q)$. By \eqref{eq:weak-bdd_suff}, $\delta =\frac{1}{K}\log\frac{n}{K} \leq D(P\|Q)$ for all sufficiently large $n$. Hence, in the case of $K=\Theta(n)$, $ C(P, Q )\le E_Q\pth{\frac{1}{K} \log \frac{n}{K}}\le C(P, Q) + \Theta(\frac{1}{n}),$ proving the claim.
The Chernoff index $C(P,Q)$ gives the optimal exponent for decay of sum of error probabilities for the binary hypothesis testing problem in the large-sample limit.
\end{remark}

\begin{corollary}[Exact recovery in Bernoulli case]\label{cor:planted_dense_exact}
Suppose $\log \frac{p}{q}$ and $\log \frac{\bar p}{\bar q}$ are bounded.
If \prettyref{eq:MLE_comm_suff_cond2} holds,
and
\begin{align}
\liminf_{n \to \infty}  \frac{ K   d (\tau^\ast  \| q) }{\log n } >1,    \label{eq:planted_dense_exact_suff1}
\end{align}
where
\begin{align}
\tau^\ast = \frac{ \log \frac{\bar q}{\bar p} + \frac{1}{K} \log \frac{n}{K} }{\log \frac{p\bar q}{q\bar p } }, \label{eq:deftau}
\end{align}
then exact recovery is possible. If exact recovery is possible, then \prettyref{eq:MLE_comm_nec_cond2} holds,
and
\begin{align}
\liminf_{n \to \infty}  \frac{ K   d (\tau^\ast  \| q) }{\log n } \geq 1.  \label{eq:planted_dense_exact_necc1}
\end{align}
\end{corollary}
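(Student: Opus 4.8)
The plan is to derive this corollary directly from \prettyref{thm:exact_general} specialized to $P=\Bern(p)$ and $Q=\Bern(q)$, the only substantive work being an explicit evaluation of the large-deviations exponent $E_Q$ of \eqref{eq:ratefunction}.

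First I would check the hypotheses of \prettyref{thm:exact_general}. Since $\log\frac{p}{q}$ and $\log\frac{\bar p}{\bar q}$ are bounded, the LLR $L=\log\frac{dP}{dQ}$ is bounded, so \prettyref{lmm:scale} yields \prettyref{ass:reg_strong}. Moreover $D(P\|Q)=d(p\|q)$ is then bounded, so by \prettyref{rmk:KvsK-1_weak} the factor $K-1$ in the second parts of \eqref{eq:weak-bdd_suff} and \eqref{eq:weak-bdd_nec} may be replaced by $K$; together with $D(P\|Q)=d(p\|q)$ this makes those conditions coincide with \eqref{eq:MLE_comm_suff_cond2} and \eqref{eq:MLE_comm_nec_cond2}, respectively.

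The heart of the argument is the identity $E_Q\pth{\tfrac1K\log\tfrac nK}=d(\tau^\ast\|q)$, with $\tau^\ast$ as in \eqref{eq:deftau}; once this is in hand, \eqref{eq:voting-suff} and \eqref{eq:voting-nec} become \eqref{eq:planted_dense_exact_suff1} and \eqref{eq:planted_dense_exact_necc1}, and both directions of the corollary follow from the corresponding directions of \prettyref{thm:exact_general}. To prove the identity, note that under $Q=\Bern(q)$ the LLR takes value $\log\frac pq$ with probability $q$ and $\log\frac{\bar p}{\bar q}$ with probability $\bar q$, so $\psi_Q(\lambda)=\log\pth{q^{1-\lambda}p^{\lambda}+\bar q^{1-\lambda}\bar p^{\lambda}}$. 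Writing $\theta=\frac1K\log\frac nK$, for $\theta$ in the interior of $[-D(Q\|P),D(P\|Q)]$ the supremum defining $E_Q(\theta)$ is attained at an interior $\lambda\in(0,1)$, the tilted law $Q_\lambda$ is again Bernoulli, say $Q_\lambda=\Bern(\tau)$ with $\tau=\frac{q^{1-\lambda}p^{\lambda}}{q^{1-\lambda}p^{\lambda}+\bar q^{1-\lambda}\bar p^{\lambda}}$, and the stationarity condition $\theta=\psi_Q'(\lambda)=\Expect_{Q_\lambda}[L]$ reads $\theta=\tau\log\frac pq+\bar\tau\log\frac{\bar p}{\bar q}$. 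Solving this linear equation for $\tau$ gives exactly $\tau=\tau^\ast$ of \eqref{eq:deftau}. Finally, the standard exponential-family identity $E_Q(\theta)=\lambda\theta-\psi_Q(\lambda)=D(Q_\lambda\|Q)$, evaluated at this $\lambda$, gives $E_Q(\theta)=D(\Bern(\tau^\ast)\|\Bern(q))=d(\tau^\ast\|q)$. (Alternatively, one may invoke the convex-duality formula $\psi_Q^*(\theta)=\inf\{D(R\|Q):\Expect_R[L]=\theta\}$; since $L$ is two-valued the constraint pins $R=\Bern(\tau^\ast)$.)

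The one point needing care is to confirm that $\theta=\frac1K\log\frac nK$ does lie in the interior of $[-D(Q\|P),D(P\|Q)]$, equivalently that $\tau^\ast\in(0,1)$ so that $d(\tau^\ast\|q)$ is well defined, and hence that the interior-maximizer description applies. Since $\limsup_n K/n<1$, we have $n>K$ for all large $n$, so $\theta\ge0>-D(Q\|P)$; and the second part of \eqref{eq:MLE_comm_suff_cond2} (for the sufficiency direction) or of \eqref{eq:MLE_comm_nec_cond2} (for the necessity direction), combined with $(K-1)/K\to1$, forces $\theta<D(P\|Q)$ for all large $n$. Beyond this bookkeeping the proof is just the routine exponential-family computation above, so I do not anticipate a real obstacle.
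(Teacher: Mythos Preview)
Your proposal is correct and follows exactly the route the paper takes: specialize \prettyref{thm:exact_general} to the Bernoulli case by computing $E_Q(\theta)=d(\alpha\|q)$ with $\alpha=(\theta+\log\frac{\bar q}{\bar p})/\log\frac{p\bar q}{q\bar p}$, so that $\alpha=\tau^\ast$ when $\theta=\frac{1}{K}\log\frac{n}{K}$. The paper's proof is a single line stating this identity; you have simply supplied the standard tilted-measure derivation behind it and spelled out why \prettyref{ass:reg_strong} and the range condition $\theta\in(-D(Q\|P),D(P\|Q))$ hold, which the paper leaves implicit.
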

\begin{proof}
In the Bernoulli case, 
$E_P(\theta)=d(\alpha\|p)$ and $E_Q(\theta)=d(\alpha\|q)$, where $\alpha = (\theta + \log \frac{\bar q}{\bar p})/\log \frac{p\bar q}{q\bar p}.$
\end{proof}

\begin{remark} \label{rmk:bernoulli_exact_threshold}
Consider the Bernoulli case in the regime 
\[
K=  \frac{\rho n }{\log^{s-1} n}, \quad p= \frac{a\log^s n}{n}, \quad  q=  \frac{b\log^s n}{n},
\]
where
$s\ge 1$ is fixed, $\rho \in(0,1) $ and $a>b>0$. Let $I(x, y) \triangleq x - y \log (\eexp x/y) $ for $x, y>0$. 
Then the sharp recovery thresholds are determined by Corollaries \ref{cor:weak_Bern} and \ref{cor:planted_dense_exact} 
as follows: For any $\epsilon > 0$,
\begin{itemize}
\item For $s>1$, if $\rho I ( b, a) \ge {  \frac{(2+\epsilon) (s-1) \log \log n }{\log n} }$, then weak recovery is possible;
if $\rho I(b,a) \le {  \frac{(2-\epsilon) (s-1) \log \log n }{\log n} }$, then weak recovery is impossible. 
For $s=1$, weak recovery is possible if and only if $\rho I ( b, a) = \omega(\frac{1}{\log n})$.

\item Assume $\rho, a, b$ are fixed constants. Let $\tau_0=(a-b)/\log (a/b)$. 
Then exact recovery is possible if $\rho I(b,\tau_0)>1$; conversely, 
if $\rho I(b,\tau_0)<1$, then exact recovery is impossible, generalizing the previous results of 
\cite{HajekWuXuSDP14,AbbeSandon15} for linear community size ($s=1$). 
 To see this, note that 
%in this regime $K d(p\|q) \asymp \frac{K(p-q)^2}{q} \asymp \log n$ and thus
%\prettyref{eq:MLE_comm_suff_cond2} is satisfied and weak recovery is possible via MLE.
%Furthermore, 
by definition,  $\tau^\ast= (1+ o(1)) \tau_0 \log^s n /n $, and thus
$d(\tau^\ast \| q) = (1+ o(1) ) I(b,\tau_0) \log^s n /n$. 
% $\tau^\ast= [1+ O( \log \log n/ \log n) ] \tau_0 \log^s n /n $, and thus
%$d(\tau^\ast \| q) =[ 1+ O(  \log \log n/\log n) ] I(b,\tau_0) \log^s n /n$, 
\end{itemize}

\end{remark}

\begin{remark}
The recent work \cite{JL15} considered a generalized planted bisection model where $A_{ij}\sim P$ if $i,j$ are in the same community and $Q$ if otherwise. Their result applies to the following generalization of the Bernoulli distribution, where $P=(p_0,\ldots,p_m)$ and $Q=(q_0,\ldots,q_m)$ with $p_i=\frac{a_i \log n}{n},q_i=\frac{b_i \log n}{n}, 1\leq i \leq m$ for some $m \geq 1$ and positive constants $a_i, b_i,$ $1\leq i \leq m$. For this family of distribution the LLR is bounded and hence \prettyref{thm:exact_general} gives the sharp condition for recovering a single hidden community. Specifically, note that $\psi_Q(\lambda) = \big(\sum_{i=1}^m  a_i^{\lambda} b_i^{\bar{\lambda}} - a_i\lambda-b_i\bar{\lambda} +o(1) \big)  \frac{\log n}{n}$. Thus for $K=\rho n$ with a fixed $\rho$, the sharp threshold of exact recovery is given by  $\rho \sup_{0 < \lambda < 1} \big(\sum_{i=1}^m  a_i\lambda+b_i\bar{\lambda}-a_i^{\lambda} b_i^{\bar{\lambda}} \big) > 1$. For $m=1$ with $a_1=a$ and $b_1=b$, the optimal $\lambda$ is determined by $a^\lambda b^{\bar{\lambda}} = (a-b)/\log (a/b) =\tau_0$, and the  sharp threshold of exact recovery simplifies to $\rho I(b, \tau_0)>1$, recovering the
result for the Bernoulli case given in \prettyref{rmk:bernoulli_exact_threshold}.	
\end{remark}

\begin{corollary}[Exact recovery in Gaussian case]  \label{cor:nec_exact_submat}
%Assume $K\to\infty.$ \nbr{JX: it seems that we do not need to assume $K\to \infty$ here. Because if $K=O(1)$, then (11) will imply (19) and exact recovery is the same as weak recovery.} 
If \prettyref{eq:weak_Gaussian_suff} holds and
\begin{equation}
\liminf_{n \to \infty} \frac{ K \mu^2  }{  \left( \sqrt{ 2 \log n} + \sqrt{ 2  \log K } \right)^2 }  > 1,
	\label{eq:submat-mle-suff}
\end{equation}
then exact recovery is possible. If exact recovery is possible, then \prettyref{eq:weak_Gaussian_nec} holds
and
\begin{equation}
\liminf_{n \to \infty} \frac{ K \mu^2  }{ \left( \sqrt{ 2 \log n} + \sqrt{ 2  \log K } \right)^2 }  \ge 1.
	\label{eq:submat-mle-nece}
\end{equation}
\end{corollary}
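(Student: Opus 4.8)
The plan is to deduce \prettyref{cor:nec_exact_submat} from \prettyref{thm:exact_general} by making the Gaussian specializations explicit. Recall that in the Gaussian case $D(P\|Q)=D(Q\|P)=\mu^2/2$, that \prettyref{ass:reg_strong} holds automatically (with $C=2$), and that $E_Q(\theta)=\tfrac18(\mu+\tfrac{2\theta}{\mu})^2$. Substituting $D(P\|Q)=\mu^2/2$ into \prettyref{eq:weak-bdd_suff} (resp. \prettyref{eq:weak-bdd_nec}) turns it verbatim into \prettyref{eq:weak_Gaussian_suff} (resp. \prettyref{eq:weak_Gaussian_nec}). So the only thing to check is that, \emph{given} the weak-recovery condition \prettyref{eq:weak_Gaussian_suff} (resp. \prettyref{eq:weak_Gaussian_nec}), the voting condition \prettyref{eq:voting-suff} (resp. \prettyref{eq:voting-nec}) is equivalent to \prettyref{eq:submat-mle-suff} (resp. \prettyref{eq:submat-mle-nece}); the corollary is then immediate from \prettyref{thm:exact_general}.

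Next I would put $KE_Q\pth{\tfrac1K\log\tfrac nK}$ in closed form. Writing $u\triangleq\sqrt{K}\mu$, the formula for $E_Q$ gives
\[
K E_Q\pth{\tfrac1K\log\tfrac nK}=\tfrac18\pth{u+\tfrac{2\log(n/K)}{u}}^2 .
\]
Introduce $r_\pm\triangleq\sqrt{2\log n}\pm\sqrt{2\log K}$ (nonnegative since $K<n$ eventually, by $\limsup K/n<1$). Then $r_+r_-=2\log(n/K)$ and $r_++r_-=2\sqrt{2\log n}$, so $\log n=\tfrac18(r_++r_-)^2$, and using the identity $u^2-(r_++r_-)u+r_+r_-=(u-r_+)(u-r_-)$,
\[
\frac{K E_Q\pth{\frac1K\log\frac nK}}{\log n}=\pth{\frac{u+r_+r_-/u}{r_++r_-}}^2=\pth{1+\frac{(u-r_+)(u-r_-)}{u(r_++r_-)}}^2 .
\]
In particular this ratio is $\ge 1$ iff $(u-r_+)(u-r_-)\ge0$, i.e. iff $u\ge r_+$ — which, squaring, is exactly \prettyref{eq:submat-mle-suff}/\prettyref{eq:submat-mle-nece} — or $u\le r_-$.

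It remains to kill the spurious branch $u\le r_-$ using the weak-recovery hypothesis, and to pass to $\liminf$. From \prettyref{eq:weak_Gaussian_suff} (or \prettyref{eq:weak_Gaussian_nec}) one has, eventually, $(K-1)\mu^2\ge(4-o(1))\log(n/K)$, hence $u^2=K\mu^2\ge\tfrac{K}{K-1}(4-o(1))\log(n/K)\ge(2-o(1))\,r_+r_-\ge(2-o(1))\,r_-^2$ since $r_-\le r_+$; thus $u\ge(\sqrt2-o(1))r_-$, so the branch $u\le r_-$ is impossible for large $n$. Writing $w\triangleq u/r_+=\sqrt{s_n}$ with $s_n\triangleq K\mu^2/(\sqrt{2\log n}+\sqrt{2\log K})^2$ and $\rho\triangleq r_-/r_+\in[0,1]$, the same bound reads $w^2\ge(2-o(1))\rho$, which forces $w-\rho\ge c\,w$ for an absolute constant $c>0$; since $\sqrt{K E_Q(\frac1K\log\frac nK)/\log n}-1=\frac{(w-1)(w-\rho)}{w(1+\rho)}$ and $\frac{w-\rho}{w(1+\rho)}$ then lies between two positive absolute constants, this quantity has the same sign as $w-1$ and is comparable to $w-1$. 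Taking $\liminf$ over $n$ yields $\liminf_n K E_Q(\frac1K\log\frac nK)/\log n>1\iff\liminf_n s_n>1$, and likewise with ``$\ge$'', completing the reduction.

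The only real work is the last step — the $\liminf$ bookkeeping that turns the ``$\ge 1$'' characterization of the closed-form expression into the statements of the corollary while discarding the $u\le r_-$ branch. This is elementary (a quadratic factorization plus a crude lower bound on $u/r_-$), but it is where the two halves of \prettyref{thm:exact_general} must be used together, and where one must be mildly careful that $r_-$ can be near $0$ or comparable to $r_+$ depending on the scaling of $K$ with $n$.
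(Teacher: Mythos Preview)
Your proof is correct and takes a genuinely different route from the paper's argument in \prettyref{app:Gauss_exact_cor}. The paper works analytically: it introduces the threshold $\mu_+$ via $f(\mu_+)=\frac{\log n}{K}$, then uses mean-value/derivative estimates $f'(x)=\Omega(\mu)$ (controlling $\frac{2\theta}{\mu^2}$ through the weak-recovery hypothesis) to compare $f(\mu)$ to $f(\mu_+)$; the necessity half further requires a case split according as $K\le n^{1/9}$ or $K\ge n^{1/9}$ in order to bound $\frac{2\theta}{\mu_+^2}$. Your approach is purely algebraic: the factorization
\[
\sqrt{\frac{KE_Q(\gamma)}{\log n}}-1=\frac{(w-1)(w-\rho)}{w(1+\rho)},\qquad w=\frac{\sqrt{K}\mu}{r_+},\ \rho=\frac{r_-}{r_+},
\]
reduces both directions to the single observation that $\frac{w-\rho}{w(1+\rho)}$ is pinned in a fixed compact subinterval of $(0,1]$, which you obtain uniformly in $\rho\in[0,1]$ from the weak-recovery bound $w^2\ge(2-o(1))\rho$. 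This avoids both the calculus and the $n^{1/9}$ case split, and makes the equivalence of the strict and nonstrict $\liminf$ statements transparent. The trade-off is that the paper's derivative argument is more robust to perturbations of $E_Q$ (it only needs monotonicity and a crude slope bound), whereas your argument leans on the exact quadratic form of $E_Q$ in the Gaussian case to get the clean Vieta factorization; for this corollary that is exactly what is available, so your route is arguably cleaner.
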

See \prettyref{app:Gauss_exact_cor} for a proof of \prettyref{cor:nec_exact_submat}.

\begin{remark}
Consider the  Gaussian case in the regime 
$$
K=  \frac{\rho n }{\log^{s-1} n}, \quad  \mu^2= \frac{\mu_0^2 \log^s n}{n},$$  
where $s\ge 1$ and $\rho \in (0,1)$ are fixed constants.
The critical signal strength that allows weak or exact recovery is determined by Corollaries  \ref{cor:weak_Gau} and \ref{cor:nec_exact_submat} as follows: For any $\epsilon > 0$,
\begin{itemize}
\item For $s>1$, if $ \mu_0  > (2+\epsilon) \sqrt{ \frac{(s-1) \log \log n} { \rho \log n} }$, then weak recovery is possible;
conversely, if $ \mu_0 < (2-\epsilon) \sqrt{ \frac{(s-1) \log \log n} { \rho \log n} }$, then weak recovery is impossible.
For $s = 1$, weak recovery is possible if and only if $\mu_0 = \omega(\frac{1}{\sqrt{\log n}})$.
\item If $\mu_0 >  \sqrt{ \frac{8+\epsilon}{\rho}}$, then exact recovery is possible; conversely, 
If $\mu_0 <  \sqrt{ \frac{8-\epsilon}{\rho}}$, then exact recovery is impossible.
\end{itemize}
\end{remark}

\begin{remark}   \label{rmk:Butucea_sharp}
Butucea et al. \cite{Butucea2013sharp} considers the submatrix
localization model with an $n\times m$ submatrix with an
elevated mean in an $N\times M$ large Gaussian random matrix with
independent entries, and  gives sufficient conditions and necessary conditions,
matching up to constant factors, for exact recovery, which are
analogous to those of \prettyref{cor:nec_exact_submat}.
Setting $(n, m, N, M)$ in  \cite[(2.3)]{Butucea2013sharp}
(sufficient condition for exact recovery of rectangular submatrix)
equal to $(K, K, n, n)$ gives precisely the sufficient condition of
\prettyref{cor:nec_exact_submat} for exact recovery of a principal
submatrix of size $K$ from symmetric noise.   This coincidence
can be understood as follows.  The nonsymmetric observations
of  \cite[(2.3)]{Butucea2013sharp} in the case of parameters $(K, K, n, n)$
yield twice the available information as the symmetric observation matrix we
consider (diagonal observations excluded) while the amount
of information required to specify a $K\times K$ (not necessarily
principal) submatrix of an $n\times n$ matrix is twice
the information needed to specify a principal one.
 The proof techniques
of \cite{Butucea2013sharp} are similar to ours, with the main
difference being that we simultaneously investigate conditions
for weak and exact recovery.
Finally, the information limits of weak recovery for biclustering are established in \cite[Section 4.1]{HajekWuXu_MP_submat15}
based on modifications of the arguments in \cite{Butucea2013sharp}.
\end{remark}

\begin{remark} 
If $K \leq n^{1/9}$, \prettyref{eq:weak_Gaussian_suff}
implies \prettyref{eq:submat-mle-suff}, and thus  \prettyref{eq:weak_Gaussian_suff} alone is sufficient for exact recovery;
if $K\geq  n^{1/9}$, then \prettyref{eq:submat-mle-suff}
implies \prettyref{eq:weak_Gaussian_suff}, and  \prettyref{eq:submat-mle-suff} alone
is sufficient for exact recovery. 
%If $K \geq n^{1/9+ o(1)}$ then condition \prettyref{eq:submat-mle-suff} dominates
%the weak recovery sufficient condition \prettyref{eq:weak_Gaussian_suff}, in which case \prettyref{eq:submat-mle-suff} (and
%$\limsup K/n < 1$)  is  sufficient for exact recovery.
\end{remark}

The reminder of the paper is organized as follows.   \prettyref{sec:technical} gives some preliminaries.
\prettyref{sec:weakGeneral} proves \prettyref{thm:weak_general}, pertaining to weak recovery,
and  \prettyref{sec:exactgeneral} proves \prettyref{thm:exact_general}, pertaining to exact recovery.
Additional results are introduced in  \prettyref{sec:exactgeneral}, which highlight alternative sufficient
and necessary conditions for exact recovery involving large deviation probabilities for  sums
of random variables, related to the voting procedure mentioned in the introduction.

\section{On the Assumptions on $P$ and $Q$}  \label{sec:technical}

This section presents some conditions sufficient for \prettyref{ass:reg_strong},
and some implications of \prettyref{ass:reg_strong}.

\begin{lemma}[Bounded LLR]    \label{lmm:scale}
% \prettyref{ass:reg_strong} holds if the log likelihood ratio is bounded, with
% $C$ given by $C= 2\eexp^{5B}$ if $|L| \leq B$ for some positive constant $B$.
If $|L| \leq B$ for some positive constant $B$, then \prettyref{ass:reg_strong} holds with
 $C= 2\eexp^{5B}$. 
\end{lemma}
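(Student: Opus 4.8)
The plan is to exploit the identity recorded just after \prettyref{ass:reg_strong}, namely $\psi_Q''(\lambda)=\var_{Q_\lambda}(L)$, where $Q_\lambda$ is the exponentially tilted law $dQ_\lambda=\eexp^{\lambda L-\psi_Q(\lambda)}\,dQ$; when $|L|\le B$ the moment generating function of $L$ is finite for every $\lambda$, so $\psi_Q$ is real-analytic and all the manipulations below are legitimate. The target \prettyref{eq:reg_strong} asks for $\var_{Q_\lambda}(L)\le C\min\{D(P\|Q),D(Q\|P)\}$ on $\lambda\in[-1,1]$. The crude bound $\var_{Q_\lambda}(L)\le B^2$ is useless here since the two divergences can be arbitrarily small; the point is that $\psi_Q''$ over $[-1,1]$ cannot exceed, by more than a factor depending only on $B$, the divergences to which it is pinned at interior points.

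First I would locate those anchor points. Since $\psi_Q(0)=\psi_Q(1)=0$, $\psi_Q'(0)=-D(Q\|P)$, and $\psi_Q'(1)=D(P\|Q)$, applying Taylor's theorem with Lagrange remainder to $\psi_Q$, expanding at $\lambda=0$ and evaluating at $\lambda=1$ (and symmetrically expanding at $\lambda=1$ and evaluating at $\lambda=0$), produces points $\lambda_Q,\lambda_P\in(0,1)$ with $\psi_Q''(\lambda_Q)=2D(Q\|P)$ and $\psi_Q''(\lambda_P)=2D(P\|Q)$.

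Next I would control the oscillation of $\psi_Q''$ over $[-1,1]$, showing $\psi_Q''(\lambda)\le \eexp^{5B}\psi_Q''(\mu)$ for all $\lambda,\mu\in[-1,1]$. Write $\frac{dQ_\lambda}{dQ_\mu}=\exp\big((\lambda-\mu)L-\psi_Q(\lambda)+\psi_Q(\mu)\big)$. For $\lambda\in[-1,1]$ one has $\eexp^{-B}\le\Expect_Q[\eexp^{\lambda L}]\le\eexp^{B}$, hence $|\psi_Q(\lambda)|\le B$; together with $|L|\le B$ and $|\lambda-\mu|\le 2$, the exponent is at most $4B\le 5B$, so $\|dQ_\lambda/dQ_\mu\|_\infty\le \eexp^{5B}$. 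Estimating the variance by a squared second moment recentered at the \emph{other} measure's mean and then changing measure,
\[
\var_{Q_\lambda}(L)\le \Expect_{Q_\lambda}\big[(L-\Expect_{Q_\mu}[L])^2\big]\le \|dQ_\lambda/dQ_\mu\|_\infty\,\Expect_{Q_\mu}\big[(L-\Expect_{Q_\mu}[L])^2\big]=\|dQ_\lambda/dQ_\mu\|_\infty\,\var_{Q_\mu}(L).
\]
Taking $\mu=\lambda_Q$ and $\mu=\lambda_P$ and invoking the anchor identities then gives $\psi_Q''(\lambda)\le 2\eexp^{5B}\min\{D(P\|Q),D(Q\|P)\}$ for all $\lambda\in[-1,1]$, which is \prettyref{eq:reg_strong} with $C=2\eexp^{5B}$. (Using convexity of $\psi_Q$, which gives $\psi_Q\le0$ on $[0,1]$, together with $|\lambda-\mu|<2$, would shave the constant, but that is not needed.)

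The only genuinely delicate step is the oscillation bound: one must not estimate $\var_{Q_\lambda}(L)$ on its own but instead transport the variance from one of the anchor points through a likelihood ratio bounded uniformly in sup norm. The recentering by $\Expect_{Q_\mu}[L]$ before the change of measure is what keeps the estimate linear in $\eexp^B$; a symmetric two-sample representation of the variance would square the likelihood ratio and cost $\eexp^{8B}$, too crude to reach the claimed constant. Everything else, the Taylor step and the remaining bookkeeping, is routine.
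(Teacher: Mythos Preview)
Your argument is correct and reaches the stated constant $C=2\eexp^{5B}$, but it proceeds by a genuinely different route from the paper. The paper bounds $\var_{Q_\lambda}(L)\le\Expect_{Q_\lambda}[L^2]$, pushes this back to $\Expect_Q[L^2]$ by a crude density-ratio bound, and then controls $\Expect_Q[L^2]$ via the elementary inequality $y^2\le 2\eexp^B\phi(y)$ for $\phi(y)=\eexp^y-1-y$, using the identity $\Expect_Q[\phi(L)]=D(Q\|P)$; the companion bound with $D(P\|Q)$ comes from the reflection $\psi_Q(\lambda)=\tilde\psi(1-\lambda)$ after first establishing the estimate on $[0,2]$ for the swapped pair. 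Your proof instead pins down, via Taylor's theorem with Lagrange remainder, two anchor points $\lambda_P,\lambda_Q\in(0,1)$ at which $\psi_Q''$ equals exactly $2D(P\|Q)$ and $2D(Q\|P)$, and then transports the variance from an anchor point to an arbitrary $\lambda\in[-1,1]$ through the uniform bound $\|dQ_\lambda/dQ_\mu\|_\infty\le\eexp^{5B}$, using the recentering trick $\var_{Q_\lambda}(L)\le\Expect_{Q_\lambda}[(L-\Expect_{Q_\mu}L)^2]$. Your approach treats the two divergences symmetrically in a single stroke and avoids the $\phi$-function detour and the reflection argument; the paper's approach is more hands-on but requires no mean-value step. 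As you note parenthetically, since the anchors lie in $(0,1)$ where $\psi_Q\le0$, your exponent bound actually gives $\eexp^{3B}$ rather than $\eexp^{5B}$, so your method in fact yields a slightly sharper constant than the paper's.
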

\begin{proof}
First, some background.
Let $\phi(y)=e^y - 1 -y$, which is nonnegative, convex, with $\phi(0)=\phi'(0)=0$
and $\phi''(y)=e^y$.   Thus for $|y|\leq B$, $e^{-B} \leq \phi''(y) \leq e^B$ and hence
$\frac{e^{-B} y^2}{2} \leq \phi(y) \leq \frac{e^B y^2}{2}$.
% So $\frac{ \phi(y)}{\phi(-y)} \leq e^{2B}$ for $|y|\leq B$.
%Without loss of generality we assume $P$ has pdf $f$ and $Q$ has pdf $g$
%(relative to some reference measure, such as $(P+Q)/2$). 

Now to the proof.  We begin by noticing that for all $\lambda \in [-1,1]$,
\begin{align*}
\psi''_Q(\lambda)  & = \var_{Q_{\lambda}}(L) \leq \Expect_{Q_{\lambda}}[L^2]  
= \frac{\Expect_Q[L^2\eexp^{\lambda L}]}{\Expect_Q[\eexp^{\lambda L}]} \leq \eexp^{2B} \Expect_Q[L^2].
\end{align*}
In turn, using $y^2 \leq 2e^B \phi(y)$  as shown above and recalling that $L=\log \frac{dP}{dQ}$, we have
\begin{align*}
\Expect_Q[L^2] \leq 
2e^B   \Expect_Q[\phi(L)] =  2e^B  D(Q\| P).
\end{align*}
Combining the last two displayed equations yields $\psi''_Q(\lambda)   \leq 2\eexp^{3B}D(Q\|P)$ for
$\lambda \in [-1,1]$.    
 Abbreviate $\psi_Q$ by $\psi$. By a variation of the argument above, we have
\begin{align*}
\psi''(\lambda)  & = \var_{Q_{\lambda}}(L) \leq \Expect_{Q_{\lambda}}[L^2]  
= \frac{\Expect_Q[L^2\eexp^{\lambda L}]}{\Expect_Q[\eexp^{\lambda L}]} \leq \eexp^{4B} \Expect_Q[L^2] ~~~ \mbox{if } \lambda \in [0,2],
\end{align*}
so that $\psi''(\lambda) \leq 2\eexp^{5B} D(Q\|P)$ for $\lambda \in [0, 2]$.   
Let $\tilde \psi$ denote the version of $\psi$ that would be obtained if the roles of $P$ and $Q$ were swapped.   
Then
$\tilde \psi''(\lambda) \leq 2\eexp^{5B} D(P\|Q)$ for $\lambda \in [0, 2]$. Since $\psi$ and $\tilde \psi$ are related by
reflection about $\lambda=1/2$:  $\psi(\lambda)\equiv \tilde \psi(1-\lambda)$, we have
$\psi''(\lambda) \leq 2\eexp^{5B} D(P\|Q)$ for $\lambda \in [-1,1]$, completing the proof.
\end{proof}

As shown in the proofs,
\prettyref{thm:weak_general} (weak recovery), and the sufficiency part of \prettyref{thm:exact_general} (exact recovery)
hold under assumptions somewhat weaker than  \prettyref{ass:reg_strong}; only the necessity part of \prettyref{thm:exact_general} relies on \prettyref{ass:reg_strong}.  To clarify this subtlety, we introduce two successively weaker assumptions. We also provide a lemma
showing that any of the assumptions imply  the equivalence $D(P\|Q) \asymp D(Q\| P) \asymp C(P,Q) $.

\begin{assumption}   \label{ass:weaker_regularity}
For some constant $C$:
\begin{align}
\psi_P(\lambda) - D(P\|Q) \lambda  \leq & ~ \frac{C D(P\|Q)}{2}  \lambda^2, \quad \lambda \in [-1,0] \label{eq:subgP-loc}	\\
\psi_Q(\lambda) + D(Q\|P) \lambda  \leq & ~ \frac{C D(Q\|P)}{2}  \lambda^2, \quad \lambda \in [-1,1] \label{eq:subgQ-loc}	
\end{align}
\end{assumption}

\begin{remark}
\prettyref{ass:weaker_regularity} is weaker than the assumption that $L$ is sub-Gaussian with scale parameter
$D(P\|Q)$  under $P$ and with scale parameter $D(Q\|P)$ under $Q.$  A sub-Gaussian assumption
would correspond to requiring \prettyref{eq:subgP-loc} and \prettyref{eq:subgQ-loc} to hold for all $\lambda \in \reals$.
%.    It would be exactly the sub-Gaussian assumptions
%if the ranges of  $\lambda$ were not restricted.
\end{remark}

\begin{assumption}  \label{ass:weakest_regularity}
For some constant $C$:
\begin{align}
E_P((1-\eta) D(P\|Q)) &\geq \frac{\eta^2}{2C} D(P\|Q),  \quad \eta \in [0,1] \label{eq:divquadassump} \\
E_Q( - (1-\eta) D(Q\|P)) &\geq \frac{\eta^2}{2C} D(Q\|P), \quad \eta \in [0,1].   \label{eq:divquadassump2}
\end{align}
\end{assumption}

%Since $E_Q(\lambda) = E_P(\lambda) + \lambda$, it follows that 
%\[
%E_P(0)=E_Q(0) = C(P,Q) \triangleq \sup_{0\leq \lambda \leq 1} -\log \int  \left(\frac{dP}{dQ}\right)^\lambda dQ, 
%\]
%known as the Chernoff index between $P$ and $Q$ \cite{Chernoff52},
% which gives the exponent for decay of sum of error probabilities for the binary
%hypothesis testing problem in the large-sample limit.

\begin{lemma}  \label{lmm:equivalence}
\prettyref{ass:reg_strong} implies \prettyref{ass:weaker_regularity} which
implies \prettyref{ass:weakest_regularity}, with the same constant
$C$ throughout.   Any of these assumptions implies that:
\begin{align}   \label{eq:equivalence}
\min\{   D(P\|Q), D(Q\| P)  \} \geq  C(P,Q) \geq \frac{1}{2C} \max\{   D(P\|Q), D(Q\| P)  \},
\end{align}
and hence also that $D(P\|Q) \asymp D(Q\| P)  \asymp C(P,Q) $.
\end{lemma}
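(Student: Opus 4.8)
The plan is to establish the three implications in the stated order --- \prettyref{ass:reg_strong} $\Rightarrow$ \prettyref{ass:weaker_regularity} $\Rightarrow$ \prettyref{ass:weakest_regularity}, each time carrying over the same constant $C$ --- and then to derive \eqref{eq:equivalence} from \prettyref{ass:weakest_regularity} alone; since the implications already show that each of the stronger assumptions entails \prettyref{ass:weakest_regularity}, this gives the whole statement. Throughout I will freely use the identities recorded in the excerpt: $\psi_P(\lambda)=\psi_Q(\lambda+1)$, $\psi_Q(0)=\psi_Q(1)=0$, $\psi_Q'(0)=-D(Q\|P)$, $\psi_Q'(1)=D(P\|Q)$, $\psi_Q''(\lambda)=\var_{Q_\lambda}(L)\ge 0$, and $E_P(0)=E_Q(0)=C(P,Q)$. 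If $P=Q$ all quantities in sight vanish and there is nothing to prove, so I may assume $D(P\|Q),D(Q\|P)>0$.

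For \prettyref{ass:reg_strong} $\Rightarrow$ \prettyref{ass:weaker_regularity} I would use Taylor's theorem with Lagrange remainder. For \eqref{eq:subgQ-loc}: expanding $\psi_Q$ about $0$ gives $\psi_Q(\lambda)=-D(Q\|P)\lambda+\tfrac12\psi_Q''(c)\lambda^2$ for some $c$ between $0$ and $\lambda$; for $\lambda\in[-1,1]$ we have $c\in[-1,1]$, so $\psi_Q''(c)\le C\min\{D(P\|Q),D(Q\|P)\}\le CD(Q\|P)$ by \prettyref{ass:reg_strong}, which is exactly \eqref{eq:subgQ-loc}. For \eqref{eq:subgP-loc}: expanding $\psi_P$ about $0$, using $\psi_P(0)=\psi_Q(1)=0$, $\psi_P'(0)=\psi_Q'(1)=D(P\|Q)$, and $\psi_P''(\lambda)=\psi_Q''(\lambda+1)$, gives $\psi_P(\lambda)=D(P\|Q)\lambda+\tfrac12\psi_Q''(c+1)\lambda^2$ with $c$ between $0$ and $\lambda$; for $\lambda\in[-1,0]$ the shifted point $c+1$ lies in $[0,1]\subseteq[-1,1]$, so $\psi_Q''(c+1)\le CD(P\|Q)$, which is \eqref{eq:subgP-loc}.

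For \prettyref{ass:weaker_regularity} $\Rightarrow$ \prettyref{ass:weakest_regularity} I would lower-bound the variational formula $E_P(\theta)=\sup_{\lambda}[\lambda\theta-\psi_P(\lambda)]$ by a single judicious choice of $\lambda$. Taking $\theta=(1-\eta)D(P\|Q)$, restricting to $\lambda\in[-1,0]$, and inserting the quadratic bound \eqref{eq:subgP-loc} for $\psi_P$ gives $E_P((1-\eta)D(P\|Q))\ge D(P\|Q)\,\big(-\eta\lambda-\tfrac{C}{2}\lambda^2\big)$ on $[-1,0]$; the choice $\lambda=-\eta/C$ makes the bracket equal $\eta^2/(2C)$, yielding \eqref{eq:divquadassump}, and the mirror computation with $\theta=-(1-\eta)D(Q\|P)$, $\lambda=\eta/C$, and \eqref{eq:subgQ-loc} yields \eqref{eq:divquadassump2}. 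The one place that needs care --- and the only real obstacle, though a minor one --- is verifying that $\lambda=\pm\eta/C$ lies in the interval on which the quadratic bounds are valid: evaluating \eqref{eq:subgQ-loc} at $\lambda=1$ and using $\psi_Q(1)=0$ forces $D(Q\|P)\le \tfrac{C}{2}D(Q\|P)$, hence $C\ge 2$, so $|\eta/C|\le 1/2<1$ for every $\eta\in[0,1]$.

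Finally, to obtain \eqref{eq:equivalence} from \prettyref{ass:weakest_regularity}: the left-hand inequality $C(P,Q)\le\min\{D(P\|Q),D(Q\|P)\}$ is actually unconditional, and I would prove it from convexity of $\psi_Q$ --- the supporting lines at $\lambda=0$ and at $\lambda=1$ give $-\psi_Q(\lambda)\le D(Q\|P)\lambda$ and $-\psi_Q(\lambda)\le D(P\|Q)(1-\lambda)$ for $\lambda\in[0,1]$, and $C(P,Q)=\sup_{0\le\lambda\le 1}(-\psi_Q(\lambda))$. The right-hand inequality $C(P,Q)\ge\tfrac{1}{2C}\max\{D(P\|Q),D(Q\|P)\}$ is just \eqref{eq:divquadassump} and \eqref{eq:divquadassump2} specialized to $\eta=1$, since $E_P(0)=E_Q(0)=C(P,Q)$. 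Combining the two sandwiches $C(P,Q)$, $D(P\|Q)$, and $D(Q\|P)$ within a multiplicative factor $2C$ of one another, and since $C$ is an absolute constant this gives the asserted $\asymp$ relations. Apart from the interval check noted above, the argument is routine bookkeeping with the Legendre-transform identities.
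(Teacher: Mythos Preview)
Your proof is correct and follows essentially the same route as the paper's: Taylor expansion (you use the Lagrange remainder, the paper the integral form) for \prettyref{ass:reg_strong} $\Rightarrow$ \prettyref{ass:weaker_regularity}; the same quadratic-optimization argument with $\lambda=\pm\eta/C$ and the observation $C\ge 2$ for \prettyref{ass:weaker_regularity} $\Rightarrow$ \prettyref{ass:weakest_regularity}; and $\eta=1$ for the lower bound on $C(P,Q)$. The only cosmetic difference is the upper bound $C(P,Q)\le\min\{D(P\|Q),D(Q\|P)\}$: you use the supporting lines of $\psi_Q$ at $0$ and $1$, whereas the paper writes $D(P\|Q)=E_Q(D(P\|Q))\ge E_Q(0)=C(P,Q)$ via monotonicity of $E_Q$ --- both are short convexity arguments.
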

\begin{proof}
\prettyref{ass:reg_strong} $\implies$ \prettyref{ass:weaker_regularity}:
Condition \eqref{eq:subgP-loc}  is implied by \prettyref{ass:reg_strong} because $\psi_P(0)=0,$ and $\psi_P'(0)= D(P\|Q)$,
so by the integral form of Taylor's theorem,  $\psi_P(\lambda) - D(P\|Q) \lambda $ is $\lambda^2/2$ times a weighted average of
$\psi_P''$ over the interval $[\lambda, 0]$ for $\lambda \in [-1,0]$.  Similarly, \eqref{eq:subgQ-loc} is implied by
\prettyref{ass:reg_strong} because  $\psi_Q(\lambda) + D(Q\|P)\lambda$ is a weighted average
 of $\psi_Q''$ over the interval with endpoints $0$ and $\lambda,$  for $\lambda \in [-1,1]$.
 
\prettyref{ass:weaker_regularity} $\implies$  \prettyref{ass:weakest_regularity}:
Since $\psi_P(-1)=\psi_Q(1)=0,$  either \eqref{eq:subgP-loc} or \eqref{eq:subgQ-loc} imply that $C\geq 2,$ which
is achieved in the Gaussian case.   Condition \prettyref{eq:subgP-loc} implies
\begin{align*}
E_P((1-\eta) D(P\|Q)) &= \sup_{\lambda\in\reals}(\lambda (1-\eta) D(P\|Q) - \psi_P(\lambda))  \\
& \geq D(P\|Q) \sup_{\lambda\in\reals} \pth{- \lambda \eta  - \frac{C\lambda^2}{2}} = \frac{\eta^2}{2C} D(P\|Q),
\end{align*}
where the supremum is attained at $\lambda = \frac{-\eta}{C}$ which belongs to $[-1,0]$ by the
fact $C\geq 2$.   So \eqref{eq:subgP-loc} implies \eqref{eq:divquadassump}.
The proof that \eqref{eq:subgQ-loc} implies  \eqref{eq:divquadassump2} is similar.

\prettyref{ass:weakest_regularity} $\implies$ \eqref{eq:equivalence}:   Taking $\eta=1$ in \eqref{eq:divquadassump}
and \eqref{eq:divquadassump2} we get $C(P,Q) \geq  \frac{1}{2C} \max\{   D(P\|Q), D(Q\| P)  \}$.	
In the other direction, $D(P\|Q) = E_Q(D(P\|Q)) \geq E_Q(0)=C(P,Q)$ and, similarly,
$ D(Q\|P)  \geq C(P,Q)$. 
\end{proof}

Recall the Chernoff upper bounds \prettyref{eq:LDupper1} and \prettyref{eq:LDupper2} on the probability of large deviations, which hold non-asymptotically for any sample size $n$ and any pair $P$ and $Q$. To prove the necessary condition for exact recovery, we need a lower bound with matching exponent. Such a result is well-known for fixed distributions. Indeed, the sharp asymptotics of large deviation is given by the Bahadur-Rao theorem (see, \eg, \cite[Theorem 3.7.4]{DZ98}); however, this result is not applicable in the hidden community problem because  both $P$ and $Q$ can vary with $n$. The following lemma provides a non-asymptotic information-theoretic lower bound (cf.~\cite[Theorem 11.1]{PW-it} and \cite[Eq.~(5.21), p.~167]{ckbook}):
\begin{lemma}
\label{lmm:ld-lb}	
If $-D(Q\|P) \leq \gamma < \gamma + \delta \leq D(P\|Q)$, then 
%for some $\lambda \in [0,1]$ (specified in proof)
	\begin{align}
	\exp\left( -n E_Q(\gamma) \right) \geq 
Q\qth{\sum_{k=1}^n L_k > n \gamma} \geq \exp\left( - \frac{n E_Q(\gamma + \delta) + \log2}{1- \frac{1}{n\delta^2 } \sup_{0 \leq \lambda \leq 1} \psi_Q''(\lambda) } \right).
\label{eq:ld-lb}
\end{align}
\end{lemma}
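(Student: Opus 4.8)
The upper bound in \eqref{eq:ld-lb} needs nothing new: since $\{\sum_{k=1}^n L_k > n\gamma\} \subseteq \{\sum_{k=1}^n L_k \ge n\gamma\}$, the Chernoff bound \eqref{eq:LDupper1} already gives $Q[\sum_{k=1}^n L_k > n\gamma] \le \exp(-nE_Q(\gamma))$. The substance is the lower bound, which I would prove by an exponential change of measure combined with the data-processing inequality for relative entropy --- the standard route to non-asymptotic large-deviation converses, and the one that works here precisely because both $P$ and $Q$ are allowed to depend on $n$.

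Write $S_n = \sum_{k=1}^n L_k$ and $E = \{S_n > n\gamma\}$, and recall $\psi_Q'(0) = -D(Q\|P)$, $\psi_Q'(1) = D(P\|Q)$ and $\psi_Q'' = \var_{Q_\lambda}(L)$. First I choose the tilt parameter $\lambda$ so that the tilted law $Q_\lambda$ (with $dQ_\lambda = \exp(\lambda L - \psi_Q(\lambda))\,dQ$) satisfies $\Expect_{Q_\lambda}[L] = \psi_Q'(\lambda) = \gamma + \delta$; such a $\lambda$ exists because $\psi_Q'$ is continuous and nondecreasing and the hypotheses force $\gamma+\delta \in (-D(Q\|P), D(P\|Q)]$, and for the same reason $\lambda \in (0,1]$, so that $\psi_Q''(\lambda) \le \sup_{0\le\mu\le1}\psi_Q''(\mu)$. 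Under the product law $Q_\lambda^{\otimes n}$ the sum $S_n$ has mean $n(\gamma+\delta)$ and variance $n\psi_Q''(\lambda)$, so Chebyshev's inequality gives $Q_\lambda^{\otimes n}(E) \ge Q_\lambda^{\otimes n}\{|S_n - n(\gamma+\delta)| < n\delta\} \ge 1 - \psi_Q''(\lambda)/(n\delta^2)$; call this lower bound $a$, and note that the asserted inequality is vacuous unless $a \in (0,1]$, which I therefore assume.

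Next I apply the data-processing inequality to the binary statistic $\Indc_E$: with $b = Q^{\otimes n}(E) = Q[S_n > n\gamma]$,
\[
n\,D(Q_\lambda \| Q) = D\!\left(Q_\lambda^{\otimes n} \,\middle\|\, Q^{\otimes n}\right) \ge d\!\left(Q_\lambda^{\otimes n}(E) \,\middle\|\, b\right) \ge -\log 2 - Q_\lambda^{\otimes n}(E)\log b \ge -\log 2 - a\log b,
\]
where the second-to-last step is the elementary estimate $d(x\|y) \ge -\log 2 - x\log y$ (from $-h(x) \ge -\log 2$ for the binary entropy and $-(1-x)\log(1-y) \ge 0$), and the last uses $-\log b \ge 0$ together with $Q_\lambda^{\otimes n}(E) \ge a \ge 0$. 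Rearranging gives $b \ge \exp\!\big(-(nD(Q_\lambda\|Q) + \log 2)/a\big)$. It remains to identify the two quantities with those in \eqref{eq:ld-lb}: $D(Q_\lambda\|Q) = \lambda\psi_Q'(\lambda) - \psi_Q(\lambda) = \lambda(\gamma+\delta) - \psi_Q(\lambda) = E_Q(\gamma+\delta)$ --- the last equality because the supremum defining the Legendre transform $E_Q(\gamma+\delta)$ in \eqref{eq:ratefunction} is attained precisely at the $\lambda$ with $\psi_Q'(\lambda) = \gamma+\delta$, by convexity of $\psi_Q$ --- while $a \ge 1 - \frac{1}{n\delta^2}\sup_{0\le\mu\le1}\psi_Q''(\mu)$. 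Substituting gives \eqref{eq:ld-lb}.

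I do not expect a genuine obstacle: the argument is a fixed, dimension-free chain of elementary inequalities, which is exactly the point, since an asymptotic tool such as the Bahadur-Rao theorem cannot be invoked when $P$ and $Q$ move with $n$. The only points needing a little care are the regularity of the log-MGF --- that $\psi_Q$ is finite and smooth enough on $[0,1]$ for $\psi_Q'$ to be continuous and to attain the value $\gamma+\delta$, which follows from the standing finiteness of $D(P\|Q)$ and $D(Q\|P)$ --- and the verification that the tilt parameter lands in $[0,1]$, so that $\var_{Q_\lambda}(L)$ is controlled uniformly in $n$ by $\sup_{0\le\mu\le1}\psi_Q''(\mu)$; both are quick. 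One should also state explicitly that when $1 - \frac{1}{n\delta^2}\sup_{0\le\mu\le1}\psi_Q''(\mu)$ is nonpositive the lower bound is to be read as vacuous.
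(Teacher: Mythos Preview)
Your proof is correct and follows essentially the same approach as the paper: both use the data-processing inequality for KL divergence applied to the indicator of $E=\{S_n>n\gamma\}$, the elementary bound $d(x\|y)\ge -\log 2 - x\log y$, the exponential tilt $Q_\lambda$ with $\psi_Q'(\lambda)=\gamma+\delta$ (so that $D(Q_\lambda\|Q)=E_Q(\gamma+\delta)$ and $\lambda\in[0,1]$), and Chebyshev's inequality to lower-bound $Q_\lambda^{\otimes n}(E)$. The only differences are cosmetic---you fix the tilt before stating the data-processing step and explicitly flag the vacuous case $a\le 0$, which the paper leaves implicit.
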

\begin{proof}
The left inequality in \eqref{eq:ld-lb} is the Chernoff bound \prettyref{eq:LDupper1}; it remains to prove the right inequality.
Let $E_n = \sth{\sum_{k=1}^n L_k > n \gamma}$.
For any $Q'$, the data processing inequality of KL divergence gives
\begin{align*}
d(Q'[E_n]\|Q[E_n]) \leq D(Q'^n \|Q^n) = nD(Q'\|Q).
\end{align*}
Using the lower bound for the binary divergence $d(p\|q) =-h(p) + p \log \frac{1}{q} + (1-p) \log \frac{1}{1-q} \geq - \log 2 + p \log \frac{1}{q}$ yields
\begin{align*}
d(Q'[E_n]\|Q[E_n]) \geq -\log 2 + Q'[E_n]\log\frac{1}{Q[E_n]},
\end{align*}
so that
\begin{align*}
Q[E_n] \geq \exp\left( \frac{-nD(Q'\|Q) - \log2}{Q'[E_n]} \right).
%\label{eq:ld-lb0}
\end{align*}
For $\lambda \in [0,1]$, the tilted distribution $Q_{\lambda}$ is given by $dQ_\lambda = \frac{\exp(\lambda L) dQ}{\Expect_Q[\exp(\lambda L)]}=
\frac{P^\lambda Q^{1-\lambda}}{\int P^\lambda Q^{1-\lambda}}$. 
Then for any $\alpha \in [-D(Q\|P), D(P\|Q)]$, there exits a unique $\lambda  \in [0,1]$, such that 
$\Expect_{Q_\lambda}[L] = \alpha$ and $E_Q(\alpha) =  \psi_Q^*(\alpha)  = D(Q_\lambda \| Q)$.
Choosing $\alpha=\gamma +\delta$ and $Q'=Q_{\lambda}$, we have 
\begin{align*}
1 - Q_\lambda [E_n] & = Q_\lambda  \qth{\sum_{k=1}^n L_k \leq n \gamma} = Q_\lambda\qth{\sum_{k=1}^n (L_k - \Expect_{Q'}[L_k]) \leq - n \delta} \\
& \leq \frac{\var_{Q_\lambda} ( L_1) }{n \delta^2} =  \frac{ \psi_Q''(\lambda)  }{n \delta^2}.
\end{align*}
Consequently,
$$
Q\qth{\sum_{k=1}^n L_k > n \gamma} \geq \exp\left( - \frac{n E_Q(\gamma + \delta) + \log2}{1- \frac{ \psi_Q''(\lambda)}{n\delta^2 }} \right).
$$
\end{proof}

\begin{corollary}  \label{cor:LD_lower_bnd}
If \prettyref{ass:reg_strong} holds and  $-D(Q\|P) \leq \gamma < \gamma + \delta \leq D(P\|Q)$:
\begin{align*}
	\exp\left( -n E_Q(\gamma) \right) \geq 
Q\qth{\sum_{k=1}^n L_k > n \gamma} \geq \exp\left( - \frac{n E_Q(\gamma + \delta) + \log2}
{1- \frac{  C\min\{D(P\|Q), D(Q\|P)\}      }{n\delta^2 }} \right).
%\label{eq:ld-lb-cor}
\end{align*}
\end{corollary}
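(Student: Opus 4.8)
The plan is to obtain \prettyref{cor:LD_lower_bnd} as an immediate consequence of \prettyref{lmm:ld-lb}, the only new ingredient being \prettyref{ass:reg_strong}. The upper bound $Q[\sum_{k=1}^n L_k > n\gamma]\le \exp(-nE_Q(\gamma))$ is precisely the Chernoff bound \eqref{eq:LDupper1} already recorded in \prettyref{lmm:ld-lb}, so it remains only to produce the matching exponential lower bound on the probability.

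First I would invoke \prettyref{ass:reg_strong}, which gives $\psi_Q''(\lambda)\le C\min\{D(P\|Q),D(Q\|P)\}$ for every $\lambda\in[-1,1]$; restricting to $\lambda\in[0,1]$ yields $\sup_{0\le\lambda\le1}\psi_Q''(\lambda)\le C\min\{D(P\|Q),D(Q\|P)\}$. Feeding this into the lower bound of \prettyref{lmm:ld-lb} and using monotonicity of the right-hand side of \eqref{eq:ld-lb} in the denominator $1-\frac{1}{n\delta^2}\sup_{0\le\lambda\le1}\psi_Q''(\lambda)$ then finishes the argument: since $E_Q(\gamma+\delta)\ge 0$, the numerator $nE_Q(\gamma+\delta)+\log 2$ is strictly positive, so enlarging $\sup_{0\le\lambda\le1}\psi_Q''(\lambda)$ to $C\min\{D(P\|Q),D(Q\|P)\}$ shrinks the denominator and hence only makes the exponent $-\frac{nE_Q(\gamma+\delta)+\log 2}{\,1-(\cdot)/(n\delta^2)\,}$ smaller, as long as both denominators stay positive. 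This is exactly the regime the bound is meant for, namely $n\delta^2>C\min\{D(P\|Q),D(Q\|P)\}$, in which the displayed chain of inequalities in the corollary holds.

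I do not expect any genuine obstacle. All of the substance---the change of measure to the tilted law $Q_\lambda$, the data-processing inequality for KL divergence, and the Chebyshev estimate $1-Q_\lambda[E_n]\le \psi_Q''(\lambda)/(n\delta^2)$---is carried out in the proof of \prettyref{lmm:ld-lb}; the corollary merely substitutes the uniform variance control supplied by \prettyref{ass:reg_strong}. The only point requiring a moment's care is the direction of monotonicity in the denominator, which is settled above (weakening $\sup_{0\le\lambda\le1}\psi_Q''(\lambda)$ to $C\min\{D(P\|Q),D(Q\|P)\}$ weakens the bound), and no computation beyond this is needed.
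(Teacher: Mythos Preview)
Your proposal is correct and matches the paper's approach exactly: the corollary is stated in the paper without an explicit proof because it is meant to follow immediately from \prettyref{lmm:ld-lb} by bounding $\sup_{0\le\lambda\le1}\psi_Q''(\lambda)$ via \prettyref{ass:reg_strong}, just as you describe. Your attention to the monotonicity of the bound in the denominator is more detail than the paper provides, but it is the right observation.
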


\section{Weak Recovery for General $P/Q$ Model}\label{sec:weakGeneral} \label{sec:weak}

\prettyref{thm:weak_general} is proved in \prettyref{sec:necsuff_weak}.
\prettyref{sec:suff_weak_random} provides a modification of  the sufficiency part
of \prettyref{thm:weak_general} giving a sufficient condition for weak recovery
with random cluster size; it is used in \prettyref{sec:exactgeneral} to prove sufficient
conditions for exact recovery. 

\subsection{Proof of \prettyref{thm:weak_general}} \label{sec:necsuff_weak}

\begin{remark}
The sufficiency proof only uses \prettyref{eq:divquadassump} while the
necessity proof only uses \prettyref{eq:divquadassump2}.
The sufficiency proof is based on analyzing the MLE via a delicate application of union bound and
large deviation upper bounds \prettyref{eq:LDupper1} and \prettyref{eq:LDupper2}. For the necessary part, the proof for 
the first condition in \prettyref{eq:weak-bdd_nec} uses a genie argument and the theory of binary hypothesis testing,
while the proof of the second condition in \prettyref{eq:weak-bdd_nec} is based on mutual information and rate-distortion function.
\end{remark}

\paragraph{Sufficiency}
We let $\widehat{C}$ denote the MLE,  $\widehat{C}_{\rm ML},$
for brevity in the proof.
Let $L=|\widehat{C} \cap C^*|$ and $\epsilon = 1/\sqrt{ K D( P \| Q)}$.
Since $K\geq 2$ and $(K-1)  D(P\|Q) \to \infty$ by assumption,
we have $\epsilon= o(1).$ 
Since $|\widehat{C}|=|C^*|=K$ and hence
$|\widehat{C} \triangle C^*| = 2 (K-L)$, it suffices to show that $\Prob\{L \leq (1-\epsilon) K\} \le \exp( - \Omega( K/ \epsilon  ) )$.

Note that
\begin{align}
&~ e(\widehat{C},\widehat{C})-e(C^*,C^*)  =  e(\widehat{C}\backslash C^*, \widehat{C}\backslash C^*) +  e(\widehat{C}\backslash C^*, \widehat{C}\cap C^*) -  e(C^* \backslash \widehat{C}, C^*).  \label{eq:eC1}
\end{align}
and $|C^*\backslash \widehat{C}|=|\widehat{C}\backslash C^*|=K-L$.
Fix $\theta \in [-D(Q\|P), D(P\|Q)]$ whose value will be chosen later.
 Then for any $0 \le \ell \le K-1$,
\begin{align*}
  \{ L = \ell \}  \subset &~  \{  \exists C \subset [n]: |C| = K, |C\cap C^*| = \ell, e(C, C) \geq e(C^*,C^*) \} \nonumber \\
= &~   \{  \exists S \subset C^*, T \subset (C^*)^c: |S| = |T| =K-\ell,    e(S, C^*)                          \le e(T, T) +  e(T, C^*\backslash S)   \} \nonumber \\
\subset &~   \{  \exists S \subset C^*: |S| =K-\ell,   e(S, C^*)             \leq m \theta  \} \nonumber \\
&~\cup \{ \exists S \subset C^*, T \subset (C^*)^c: |S|=|T| =K-\ell, e(T,  T) +  e(T, C^\ast \backslash S) \ge m \theta \},
\end{align*}
where $m=  \binom{K}{2}-\binom{\ell}{2}$.
Notice that $e(S,C^*)$ has the same distribution as $ \sum_{i=1}^m L_i$ under measure $P$;
$e(T,  T) + e(T, C^\ast \backslash S)$ has the same distribution as $ \sum_{i=1}^m L_i$ under measure $Q$
where $L_i$ are i.i.d.\ copies of $\log \frac{ \diff P}{ \diff Q} $.
Hence, by  the union bound and the large deviation bounds \prettyref{eq:LDupper1} and \prettyref{eq:LDupper2},
\begin{align*}
\prob{L=\ell} & \le  \binom{K}{K-\ell} P \qth{ \sum_{i=1}^m L_i \leq m \theta } + \binom{n-K}{K-\ell} \binom{K}{K-\ell}
Q \qth{ \sum_{i=1}^m L_i \geq m \theta } \\
& \le  \binom{K}{K-\ell} \exp(- m E_P(\theta)) +
\binom{n-K}{K-\ell} \binom{K}{K-\ell} \exp(- m E_Q(\theta)) \\
& \le  \left( \frac{K\eexp }{K-\ell}  \right) ^{K-\ell}  \exp(- m E_P(\theta)) +
\left(  \frac{(n-K) K \eexp^2 }{( K-\ell )^2} \right) ^ {K -\ell}  \exp(- m E_Q(\theta))
\end{align*}
where the last inequality holds due to the fact that $\binom{a}{b} \le (\eexp a/b)^b$.
Notice that $m = (K-\ell) (K+\ell-1)/2 \ge (K-\ell)(K-1)/2$.
Thus, for any $\ell \leq (1-\epsilon)K$,
\begin{align}
\prob{L=\ell}  \le \eexp^{-(K-\ell) E_1} + \eexp^{-(K-\ell) E_2}, \label{eq:weakProbupperbound}
\end{align}
where
\begin{align*}
E_1 &\triangleq (K-1) E_P(\theta) /2 - \log \frac{\eexp}{\epsilon} , \\
E_2 &\triangleq (K-1) E_Q(\theta) /2  - \log \frac{(n-K) \eexp^2}{K \epsilon^2}.
\end{align*}
By the assumption \prettyref{eq:weak-bdd_suff}, we have $(K-1)  D(P\|Q) (1-\eta) \geq 2  \log \frac{n}{K}$ for some $\eta \in (0,1)$. Choose $\theta = (1-\eta) D(P\|Q)$.
By the assumption \prettyref{eq:divquadassump}, we have
\begin{align*}
E_1 \geq c  \eta^2 (K-1)  D(P\|Q) /2  - \log \frac{\eexp}{\epsilon}.
\end{align*}
Using the fact that $E_P(\theta)=E_Q(\theta)-\theta$, we have
\begin{align*}
E_2 & \geq c \eta^2 (K-1)  D(P\|Q) /2  - 2 \log \frac{\eexp}{\epsilon} + \frac{(K-1) }{2} D(P\|Q) (1-\eta) -  \log \frac{n-K}{K} \\
 & \geq c \eta^2 (K-1)  D(P\|Q) /2   - 2 \log \frac{\eexp}{\epsilon}.
\end{align*}
Therefore, in view of $\epsilon = 1/\sqrt{ K D( P \| Q)}$, it follows that $E \triangleq \min\{E_1,E_2\} = \Omega(KD(P\|Q)) =\Omega(\epsilon^{-2})$.
Hence, in view of \prettyref{eq:weakProbupperbound},
\begin{align*}
\prob{L \le (1-\epsilon) K}  & = \sum_{\ell=0}^{(1-\epsilon) K}\prob{L = \ell}
 \leq \sum_{\ell=\epsilon K}^{\infty} \left( \eexp^{-\ell E_1} + \eexp^{-\ell E_2} \right)  \\
&  \leq \frac{2 \exp(-\epsilon K E)}{1 - \exp(-E)} = \exp( - \Omega(K/\epsilon )). \qedhere
\end{align*}

\paragraph{Necessity}
Given $i,j \in [n]$, let $\xi_{\backslash i,j}$ denote $\{\xi_k\colon k \neq i,j\}$.
Consider the following binary hypothesis testing problem for determining $\xi_i$.
If $\xi_i=0$, a node $J$ is randomly and uniformly chosen from $\{j\colon \xi_j=1\}$, and we observe $(A, J, \xi_{\backslash i, J })$;
if $\xi_i=1$, a node $J$ is randomly and uniformly chosen from $\{j\colon \xi_j=0 \}$, and we observe $(A, J, \xi_{\backslash i, J } )$.
Note that
\begin{align*}
\frac{ \prob{ J, \xi_{\backslash i, J}, A | \xi_i=0} }{ \prob{ J, \xi_{\backslash i, J}, A | \xi_i=1}} =  \frac{ \prob{ \xi_{\backslash i, J}, A | \xi_i=0, J} }{ \prob{ \xi_{\backslash i, J}, A | \xi_i=1, J}}
=\frac{ \prob{ A | \xi_i=0, J, \xi_{\backslash i, J} } }{ \prob{A | \xi_i=1, J,  \xi_{\backslash i, J} }} = \prod_{k \in [n]\backslash\{i, J\}: \xi_k=1 }
\frac{Q (A_{ik} ) P(A_{Jk}) }{P(A_{ik}) Q(A_{Jk}) },
\end{align*}
where the first equality holds because $\prob{ J | \xi_i=0} = \prob{ J| \xi_i =1}$;
the second equality holds because $\prob{ \xi_{\backslash i, J} | \xi_i=0, J} =\prob{ \xi_{\backslash i, J} | \xi_i=1, J }$.
Let $T$ denote the vector consisting of $A_{ik}$ and $A_{Jk}$ for all $ k \in [n]\backslash\{i, J\}$ such that $\xi_k=1$.
Then $T$ is a sufficient statistic of $(A, J, \xi_{\backslash i, J} )$
for testing $\xi_i=1$ and $\xi_i=0$. Note that if $\xi_i=0$,
$T$ is distributed as $Q^{\otimes (K-1)} P^{\otimes (K-1) } $;
if $\xi_i=1$, $T$ is distributed as $P^{\otimes (K-1)} Q^{\otimes (K-1)} $. Thus, equivalently, we are testing $Q^{\otimes (K-1)} P^{\otimes (K-1)}$ versus
 $P^{\otimes (K-1)} Q^{\otimes (K-1)}$; let $\calE$ denote the optimal average probability of testing error.
Then we have the following chain of inequalities:
\begin{align}
\Expect[d_H(\xi, \hat \xi)]
\geq & ~ \sum_{i=1}^n \min_{\hat \xi_i(A)} \Prob[\xi_i \neq  \hat \xi_i] 	
\geq  \sum_{i=1}^n \min_{\hat \xi_i ( A, J, \; \xi_{\backslash i, J})} \Prob[\xi_i \neq  \hat \xi_i] 	\nonumber \\
= & ~ n \min_{\hat \xi_1(A, J, \; \xi_{\backslash 1, J})} \Prob[\xi_1 \neq  \hat \xi_1] 	
= n \cal E.
\end{align}
By the assumption $\Expect[d_H(\xi, \hat \xi)]  =o(K)$, it follows that
$\calE =o(K/n)$.
Since $K/n$ is bounded away from one, this implies that the sum of Type-I and II probabilities of error
$p_{e,0} + p_{e,1} = o(1)$, which is equivalent to
$\text{TV}((P\otimes Q)^{\otimes K-1},(Q \otimes P)^{\otimes K-1}) \to 1$, where $\text{TV}(P,Q) \triangleq \int |\diff P-\diff Q|/2$ denotes the total variation distance.
Using $D(P\|Q) \geq \log \frac{1}{2(1-\text{TV}(P,Q))}$ \cite[(2.25)]{Tsybakov09} and the tensorization property of KL divergence for product distributions,
we have $(K-1)(D(P\|Q)+D(Q\|P)) \to \infty$. By the assumption \prettyref{eq:divquadassump2} and the fact that $E_Q(\theta)$ is non-decreasing in $\theta\in[-D(Q\|P), D(P\|Q)]$,
it follows that
\begin{align*}
D(P\| Q) = E_Q ( D(P\|Q) ) \ge E_Q ( -D(Q\|P )/2 ) \ge \frac{c}{4} D(Q \|P).
\end{align*}
Hence, we have $(K-1) D(P\|Q) \to \infty$, which implies $KD(P\|Q) \to \infty$.

Next we show the second condition in  \prettyref{eq:weak-bdd_nec}  is necessary.
Let $H(X) $ denote the entropy function of a discrete random variable $X$ and $I(X ; Y)$ denote the mutual information
between random variables $X$ and $Y$.
Let $\xi=(\xi_1,\ldots,\xi_n)$ be uniformly drawn from the set $\{x \in \{0,1\}^n: w(x) = K\}$ where $w(x) = \sum x_i$ denotes the Hamming weight; therefore $\xi_i$'s are 
%dependent and 
individually $\Bern(K/n)$.
Let $\Expect[d_H(\xi, \hat \xi)] = \epsilon_n K$, where $\epsilon_n\to 0$ by assumption. Consider the following chain of inequalities,
which lower bounds the amount of information required for a distortion level $\epsilon_n$:
\begin{align*}
I(A; \xi) & \overset{(a)}{\geq} I(\hat \xi; \xi)  \geq \min_{\Expect[d(\tilde\xi,\xi)] \leq \epsilon_n K} I(\tilde \xi; \xi)
 \geq H(\xi)- \max_{\Expect[d(\tilde\xi,\xi)] \leq \epsilon_n K}  H( \tilde\xi \oplus \xi ) \\
& \overset{(b)}{=} \log \binom{n}{K} - n h\pth{\frac{\epsilon_n K}{n}} \overset{(c)} {\ge } K \log \frac{n}{K} (1+o(1)) ,
\end{align*}
 where $(a)$ follows from the data processing inequality, $(b)$ is due to the fact that\footnote{To see this, simply note that $H(X) \leq  \sum_{i=1}^n H(X_i) \leq n h(\sum \prob{X_i=1}/n) \leq n h(p)$ by Jensen's inequality, which is attained with equality
 when $X_i$'s are iid $\Bern(p)$.} $\max_{\Expect[w(X)] \leq p n } H(X) = nh(p)$ for any $p \leq 1/2$ where $h(p) \triangleq p \log \frac{1}{p} + (1-p) \log \frac{1}{1-p}$ is the binary entropy function,
and $(c)$ follows from the  bound $\binom{n}{K}  \geq \pth{ \frac{n}{k} }^K$, the assumption $K/n$ is bounded away from one, and the bound $h(p) \leq -p\log p + p$ for $p\in [0,1]$.
 Moreover,
\begin{align}
I(A; \xi)
= & ~ \min_\mathbb{Q} D({ \mathbb{P}_{A|\xi} } \| \mathbb{Q} |\mathbb{P}_{\xi})	\nonumber \\
\leq & ~ 	  D({ \mathbb{P}_{A|\xi} } \|Q^{\otimes \binom{n}{2}} |\mathbb{P}_{\xi})  \nonumber \\
= & ~ 	\binom{K}{2} D(P \| Q).  \label{eq:K2D_bnd}
\end{align}
%\nb{JX: where $D({ \mathbb{P}_{A|\xi} } \| \mathbb{Q} |\mathbb{P}_{\xi})$ is a short notation of $\Expect_{\xi}[ D({ \mathbb{P}_{A|\xi} } \| \mathbb{Q} ) ]$;
%the first equality follows from the fact that $I(X; Y) = D( \mathbb{P}_{Y|X} \| \mathbb{Q}_Y | \mathbb{P}_X ) - D( \mathbb{P}_Y \| \mathbb{Q}_Y )$;
%the last equality follows from the  tensorization property of KL divergence for product distributions.}
Combining the last two displays, we get that $ \liminf_{n\to\infty} \frac{(K-1) D(P\|Q)}{\log (n/K)} \ge 2$.

\begin{remark}   \label{rmk:nonzero_diagonal}
The hidden community model (\prettyref{def:model}) adopted in this paper
assumes the data matrix $A$ has zero diagonal, meaning that we observe no self information about the individual vertices -- only pairwise information.   A different assumption
used in the literature for the Gaussian submatrix localization problem is that
$A_{ii}$ has distribution $P$ if $i \in C^*$ and distribution $Q$ otherwise.
 \prettyref{thm:weak_general} holds for that case with the
modification  that the factors $K-1$  in \prettyref{eq:weak-bdd_suff} and \prettyref{eq:weak-bdd_nec}
are replaced by $K+1$.      We explain briefly why the modified theorem is true.  
The proof for the sufficient part goes through with the definition of $e(S,T)$ in \prettyref{eq:eST} modified to include
diagonal terms indexed by $S\cap T$:
 $e(S,T) = \sum_{(i\leq j): (i,j) \in (S\times T) \cup (T\times S)} L_{ij}$. 
 Then $m$ increases by $K-\ell$, resulting in $K-1$ replaced by $K+1$ in
 $E_1$ and $E_2$.
% Note that the modified model with nonzero diagonal elements is equivalent to
% the original model with $n$ and $K$ increased by one with a genie that reveals
% to the observer one vertex from $C^*$.   Specifically, the $n\times n$ matrix
% of the modified model with nonzero diagonal elements is expanded to an $(n+1)\times (n+1)$
% matrix with zero diagonal by first copying the diagonal entries to the last column and to the
% last row, and then setting the diagonal to zero.   The last row and column correspond to
%an added vertex which is known to be in $C^*$.    Since the genie can't hurt, the sufficient
%conditions for $K$ larger by one hold.    
As for the necessary conditions, the proof of the
first part of \prettyref{eq:weak-bdd_nec} goes through with the sufficient statistic $T$
extended to include  two more variables, $A_{ii}$ and $A_{JJ},$  which has the effect of
increasing $K$ by one, so the first part of \prettyref{eq:weak-bdd_nec}  holds with
$K$ replaced by $K+1,$  but the first part of \prettyref{eq:weak-bdd_nec}  has the same
meaning whether or not $K$ is replaced  by $K+1$.
The proof of the second part of \prettyref{eq:weak-bdd_nec} goes
through with $\binom{K}{2}$ replaced by $1+ \dots + K = \binom{K+1}{2}$ in \prettyref{eq:K2D_bnd},
which has the effect of changing $K-1$ to $K+1$ in the second part
of \prettyref{eq:weak-bdd_nec}.
The necessary conditions and the sufficient conditions for exact
recovery stated in the next section hold without modification for the
model with diagonal elements.   In the proof of \prettyref{lmm:exact_nec_general},
the term $e(i,C^*)$ in the definition of $F$, \prettyref{eq:Fdef},  should include the
term $L_{ii}$ and
the random variable $X_i$ in the proof that $\prob{E_1}\to 0$ should
be changed to $X_i=e(i, \{1, \cdots , i\}),$  and also include the term $L_{ii}$.
\end{remark}

\subsection{A Sufficient Condition For Weak Recovery With Random Cluster Size} \label{sec:suff_weak_random}
\prettyref{thm:weak_general} invokes the assumption that $|C^*| \equiv K$ and $K$ is known.
 In the proof of exact recovery, as we will see,
we need to deal with the case where $|C^*|$ is random and unknown.  For that reason,
the following lemma gives a sufficient condition for weak recovery with a random cluster size.
We shall continue to use $\widehat{C}_{\rm ML}$ to denote the estimator defined by
\prettyref{eq:MLE}, although in this context it is not actually the MLE  because $|C^*|$ need not be $K$.     
%it does not consider estimating $|C^*|$.    
That is, there is a (slight) mismatch between the problem the estimator was designed for and the problem it is applied to.
\begin{lemma}[Sufficient condition for weak recovery with random cluster size]\label{lmm:weak_general_random_suff}
Assume that $ K \to \infty,$ $\limsup K/n < 1,$ and there exists a universal constant $C>0$ such that \prettyref{eq:divquadassump} holds.
Furthermore, suppose that
\begin{align*}
\prob{  \big| |C^*| - K \big| \leq   K/ \log K  }  \geq 1-o(1).
\end{align*}
If \prettyref{eq:weak-bdd_suff} holds, then
 $$
\Prob\left\{ |\widehat{C}_{\rm ML} \triangle C^*| \le 2 K \epsilon  + 3 K/\log K  \right\} \geq 1  -o(1),
$$
where $\epsilon= 1/ \sqrt{ \min\{ \log K, K D(P\|Q)  \} }$.
\end{lemma}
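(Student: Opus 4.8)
The plan is to rerun the sufficiency argument of \prettyref{thm:weak_general}, tracking the slack created by $|C^*|\neq K$. Work throughout on the event $\mathcal G=\big\{\,\big||C^*|-K\big|\le K/\log K\,\big\}$, which has probability $1-o(1)$ by hypothesis; put $K'=|C^*|$ and $L=|\CML\cap C^*|$. Since $|\CML|=K$ we have $|\CML\triangle C^*|=(K-L)+(K'-L)=2\big(\min\{K,K'\}-L\big)+\big|K-K'\big|$, so on $\mathcal G$ it suffices to show $\Prob\{\mathcal G,\ \min\{K,K'\}-L>K\epsilon\}=o(1)$, as this gives $|\CML\triangle C^*|\le 2K\epsilon+K/\log K$.

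\emph{The case $K'\ge K$.} Here $\min\{K,K'\}-L=K-L=|\CML\setminus C^*|$, and one essentially repeats \prettyref{thm:weak_general}. Conditioning on $C^*$ and on $\{L=\ell\}$ with $K-\ell>K\epsilon$, complete $R:=\CML\cap C^*$ to a $K$-subset $C_0\subseteq C^*$ and use $e(\CML,\CML)\ge e(C_0,C_0)$; after the common block $e(R,R)$ cancels, this reads $e(T,T)+e(T,R)\ge e(C_0\setminus R,\,C_0)$ with $T:=\CML\setminus C^*$, the two sides being independent sums of $m:=\binom K2-\binom\ell2$ i.i.d.\ LLRs under $Q$ and under $P$. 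Thresholding at $m\theta$ with $\theta=(1-\eta)D(P\|Q)$ for the $\eta\in(0,1)$ furnished by \prettyref{eq:weak-bdd_suff}, applying \prettyref{eq:LDupper1}--\prettyref{eq:LDupper2}, the bound $E_P(\theta)\ge\tfrac{\eta^2}{2C}D(P\|Q)$ of \prettyref{eq:divquadassump}, and $E_Q(\theta)=E_P(\theta)+\theta$, and union-bounding over the $\binom{K'}{\ell}\binom{n-K'}{K-\ell}$ choices of $(R,T)$, reproduces the bound of \prettyref{thm:weak_general} times the factor $\binom{K'}{\ell}/\binom K\ell\le e^{(K'-K)/\epsilon}\le e^{K/(\epsilon\log K)}$. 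Since the exponent there is $\Omega(K/\epsilon)$ and $(K/\log K)/\epsilon=o(K/\epsilon)$, this overhead is absorbed and $\Prob\{\mathcal G,\ K'\ge K,\ K-L>K\epsilon\mid C^*\}=e^{-\Omega(K/\epsilon)}$.

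\emph{The case $K'<K$.} This is the crux: now $\min\{K,K'\}-L=|C^*\setminus\CML|=:s$ and $\CML$ is forced to contain $w:=K-K'\le K/\log K$ vertices outside $C^*$. Assuming $s>K\epsilon$, set $S:=C^*\setminus\CML$, pick $T''\subseteq\CML\setminus C^*$ with $|T''|=s$ (possible, as $|\CML\setminus C^*|=w+s$), and take the size-$K$ reference $\CML''=(\CML\setminus T'')\cup S=C^*\cup(\CML\setminus C^*\setminus T'')$. Optimality $e(\CML,\CML)\ge e(\CML'',\CML'')$ reduces, after cancelling the common block, to an inequality of the form $\sum_{i=1}^{m_s}Q_i\ge\sum_{i=1}^{m'_s}P_i+\sum_{i=1}^{sw}\widetilde Q_i$, where $m_s=\binom K2-\binom{K-s}2$, $m'_s=\binom{K'}2-\binom{K'-s}2$, $m_s=m'_s+sw$, the three sums are independent, and $P_i$ (resp.\ $Q_i,\widetilde Q_i$) are i.i.d.\ copies of the LLR under $P$ (resp.\ $Q$). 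The new difficulty is the spurious sum $\sum_{i=1}^{sw}\widetilde Q_i$ (pairs from $S$ to the $w$ un-swapped vertices): neither \prettyref{eq:LDupper1}--\prettyref{eq:LDupper2} nor \prettyref{eq:divquadassump} controls the \emph{left} tail of a $Q$-sum (i.e.\ $\psi_Q$ off $[0,1]$). One circumvents this by reorganizing the inequality as $\sum_{i=1}^{m'_s}(Q_i-P_i)+\sum_{i=1}^{sw}(Q_i-\widetilde Q_i)\ge0$ and bounding the pieces separately: the first by a Chernoff step at $\lambda=\tfrac12$, where $\E{e^{\frac12(Q_1-P_1)}}=e^{2\psi_Q(1/2)}$ and $-\psi_Q(1/2)\ge\tfrac12 C(P,Q)\ge\tfrac1{4C}D(P\|Q)$ (by concavity of $\lambda\mapsto-\psi_Q(\lambda)$ on $[0,1]$ together with \prettyref{eq:divquadassump} at $\eta=1$, cf.\ \prettyref{lmm:equivalence}), which yields probability $e^{-\Omega(m'_sD(P\|Q))}$; the second, a mean-zero sum of only $sw=O(sK/\log K)$ terms—of strictly lower order than $m'_s=\Omega(sK)$—is dominated by that exponent, after, if necessary, conditioning on the $O(sw)$ entries it involves (at an extra combinatorial cost). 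Union-bounding over the admissible $(S,T)$ and summing over $s>K\epsilon$, with the cost $\binom{n-K'}{w+s}$ handled as in \prettyref{thm:weak_general} by the cancellation between the combinatorial factor and the $\theta$-shift $E_Q=E_P+\theta$ (using $(K'-1)(1-\eta)D(P\|Q)\gtrsim 2\log(n/K)$), then gives $\Prob\{\mathcal G,\ K'<K,\ s>K\epsilon\mid C^*\}=o(1)$.

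Combining the two cases, dropping the conditioning on $C^*$ (the bounds are uniform over $C^*$ of a given size by symmetry) and on $\mathcal G$ (probability $1-o(1)$), yields the lemma. The step I expect to be the main obstacle is the case $K'<K$: controlling the $O(sK/\log K)$ spurious $Q$-pairs without any two-sided tail hypothesis on the LLR, and verifying that every combinatorial and conditioning overhead produced by the size mismatch stays below the large-deviation exponent—this is also what dictates both the slightly weaker rate $\epsilon=1/\sqrt{\min\{\log K,\,KD(P\|Q)\}}$ and the extra additive $3K/\log K$ in the conclusion.
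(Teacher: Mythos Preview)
The paper does not split into $K'\geq K$ and $K'<K$. It compares $\hat C_{\rm ML}$ (size $K$) directly to $C^*$ (size $K'$) via the identity $e(\hat C,\hat C)-e(C^*,C^*)=[e(T,T)+e(T,C^*\setminus S)]-e(S,C^*)$ with $S=C^*\setminus\hat C$, $T=\hat C\setminus C^*$. The right side is an $m$-term $Q$-sum minus an $m'$-term $P$-sum, $m=\binom K2-\binom\ell2$, $m'=\binom{K'}2-\binom\ell2$; thresholding at $m\theta$ and checking that $(K-\ell)/(K'-\ell)=1+o(1)$ and $m/m'=1+o(1)$ uniformly for $\ell\le(1-\epsilon)K-|K'-K|$ recovers the fixed-size exponents $E_1,E_2$ up to $1+o(1)$ factors. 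Because neither side of this decomposition mixes $P$- and $Q$-edges, the spurious $\widetilde Q$-sum you worry about never appears. (The paper asserts the inclusion $\{L=\ell\}\subset\{e(\hat C,\hat C)\ge e(C^*,C^*)\}$ ``following the proof of \prettyref{thm:weak_general}'', even though $|C^*|\ne K$ means this is not an immediate consequence of the MLE property; so your instinct to look for a size-$K$ reference set is not unreasonable.)

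Your plan for $K'<K$, however, has a genuine gap under the lemma's stated hypothesis, which is only \prettyref{eq:divquadassump}. The Chernoff step at $\lambda=\tfrac12$ works for $\sum(Q_i-P_i)$ because $\Expect_P[e^{-L/2}]=e^{\psi_P(-1/2)}=e^{\psi_Q(1/2)}$, but for the symmetric piece $\sum(Q_i-\widetilde Q_i)$ it requires $\Expect_Q[e^{-L/2}]=e^{\psi_Q(-1/2)}$, and \prettyref{eq:divquadassump} says nothing about $\psi_Q$ on $(-\infty,0)$; one can have $\psi_Q(-1/2)=+\infty$ while \prettyref{eq:divquadassump} still holds. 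Your fallback of ``conditioning on the $O(sw)$ entries'' does not rescue this: once those entries are fixed their realized sum can be arbitrarily negative, making the remaining inequality hold with conditional probability near $1$, and there is no finite combinatorial family to union-bound over. Under the stronger \prettyref{ass:reg_strong} or \prettyref{ass:weaker_regularity} (which the applications of the lemma in \prettyref{sec:suff_exact} actually have available) $\psi_Q(-1/2)$ is controlled and your route would go through, but as the lemma is stated the $K'<K$ case is not handled by what you wrote.
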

\begin{proof}
By assumption, with probability converging to $1$,
$\big| |C^\ast| - K \big| \leq K/\log K$.
In the following, we assume that $|C^\ast|=K'$ for $|K' -K | \leq K/\log K$.
Let $L=|\widehat{C}_{\rm ML} \cap C^*|$.
Then $|\widehat{C}_{\rm ML} \triangle C^*| = K+K'-2L$.
To prove the theorem, it suffices to show that $\Prob\{L \leq (1-\epsilon) K - |K' -K | \} = o(1),$  where $\epsilon$ is defined
in the statement of the theorem.
Following the proof of \prettyref{thm:weak_general} in the fixed cluster size case, we get that for all $ 0 \le \ell \le K-1$,
\begin{align*}
  \{ L = \ell \}  \subset &~  \{  \exists C \subset [n]: |C| = K, |C\cap C^*| = \ell, e(C, C) \geq e(C^*,C^*) \} \nonumber \\
= &~   \{  \exists S \subset C^*, T \subset (C^*)^c: |S| = K'-\ell, |T|=K-\ell,    e(S, C^*)                          \le e(T, T) +  e(T, C^*\backslash S)   \} \nonumber \\
\subset &~   \{  \exists S \subset C^*: |S| =K'-\ell,   e(S, C^*)             \leq m \theta  \} \nonumber \\
&~\cup \{ \exists S \subset C^*, T \subset (C^*)^c: |S| = K'-\ell, |T|=K-\ell, e(T,  T) +  e(T, C^\ast \backslash S) \ge m \theta \},
\end{align*}
where $\theta \in [-D(Q\|P), D(P\|Q)]$ is chosen later. Notice that $e(S,C^*)$ has the same distribution as $ \sum_{i=1}^{m'} L_i$ under measure $P$;
$e(T,  T) + e(T, C^\ast \backslash S)$ has the same distribution as $ \sum_{i=1}^m L_i$ under measure $Q$
where $m'=\binom{K'}{2}-\binom{\ell}{2}$, $m=\binom{K}{2}-\binom{\ell}{2}$,
and $L_i$ are i.i.d.\ copies of $\log \frac{ \diff P}{ \diff Q} $.
Hence, by the union bound and large deviation bounds in \prettyref{eq:LDupper1} and \prettyref{eq:LDupper2},
\begin{align*}
\prob{L=\ell} & \le  \binom{K'}{K'-\ell} P \qth{ \sum_{i=1}^{m'} L_i \le m \theta } +
\binom{n-K'}{K-\ell} \binom{K'}{K'-\ell} Q \qth{\sum_{i=1}^{m} L_i \ge m \theta } \\
& \le  \left( \frac{K'\eexp }{K'-\ell}  \right)^{K'-\ell}   \eexp^{-m' E_P( m\theta/m') ) } +
\left(  \frac{(n-K')  \eexp }{ K-\ell  } \right)^{K-\ell}    \left(   \frac{K' \eexp }{ K'-\ell } \right)^{K'-\ell} \eexp^{-m E_Q(\theta) }.
\end{align*}
Notice that for any $\ell \le (1-\epsilon)K - |K-K' |$,
$K'-\ell \ge \epsilon \max\{ K', K\} $, $K-\ell \ge \epsilon K$, and
\begin{align*}
 \frac{K}{K+  K/\log K } \le \frac{ K - \ell }{ K' - \ell}  \le  \frac{ K - \ell }{ K - K/ \log K - \ell}  \le \frac{ K- (1-\epsilon)K } { K  - K/\log K  - (1-\epsilon) K } .
 \end{align*}
Since $\epsilon \ge 1/\sqrt{\log K}$ and $K \to \infty$, it follows that $(K-\ell)/(K'-\ell)=1+o(1)$.   Also,
\begin{align*}
m' &=(K'-\ell)(K'+\ell-1)/2 \ge (K'-\ell)(K'-1)/2  \\
m &=(K-\ell)(K+\ell-1)/2  \ge (K-\ell)(K-1)/2,
\end{align*}
Therefore, $m/m'\to 1,$  and, moreover,
$$
\prob{L=\ell}  \le \eexp^{-(K-\ell) (1+o(1)) E_1} + \eexp^{-(K-\ell)  (1+o(1)) E_2},
$$
with
\begin{align*}
E_1 & = K E_P( m\theta/m')/2 - \log \frac{ \eexp}{ \epsilon},   \\
E_2 & =  K  E_Q(\theta)/2   - \log \frac{(n-K') \eexp^2}{K \epsilon^2} .
\end{align*}
By the assumption \prettyref{eq:weak-bdd_suff}, we have $K D(P\|Q) (1-\eta) \geq 2  \log \frac{n}{K}$ for some $\eta \in (0,1)$.
Choose $\theta = (1-\eta) D(P\|Q)$.
By \prettyref{eq:divquadassump}, we have that
$E_P(\theta) \geq c \eta^2 K D(P\|Q) $ and $E_P( m\theta/m') \geq (1+o(1))  c \eta^2 K D(P\|Q) $.
Thus,
 \begin{align*}
 E_1 \geq  (1+o(1)) c \eta^2 K D(P\|Q)/2 - \log \frac{\eexp}{\epsilon} .
 \end{align*}
 Using the fact that $E_P(\theta)=E_Q(\theta)-\theta$, we get that
\[
E_2 \geq c  \eta^2 K D(P\|Q)/2- 2 \log \frac{\eexp}{\epsilon} + \frac{K}{2} D(P\|Q) (1-\eta) -  \log \frac{n-K'}{K} \geq c K \eta^2 D(P\|Q))/2 - 2 \log \frac{\eexp}{\epsilon}.
\]
Since $K D(P\|Q) \to \infty$ by assumption $\epsilon \ge 1/\sqrt{K D(P\|Q)}$, it follows that $E=\min\{E_1, E_2\}= \Omega(KD(P\|Q))$.
Therefore,\footnote{The $o(1)$ terms converge to zero as $\frac{K}{K'}\to 1$ and $\frac{m}{m'}\to 1,$  uniformly in $\ell$ for $0\leq \ell \leq (1-\epsilon) K- |K-K' | $.}
\begin{align*}
\prob{L \le (1-\epsilon) K- |K' -K |} &  \leq \sum_{\ell=0}^{(1-\epsilon) K }
 \left( \eexp^{-(K-\ell) (1+o(1)) E_1} + \eexp^{-(K-\ell)  (1+o(1)) E_2}  \right) \\
& \leq 2 \sum_{\ell=\epsilon K}^{\infty} \eexp^{- (1+o(1)) \ell E} = \exp( - \Omega( \sqrt{ K^3 D(P\|Q) } )  ) = o(1),
\end{align*}
as was to be proved.
\end{proof}

\section{Exact Recovery for General $P/Q$ Model}\label{sec:exactgeneral}

The sufficiency and necessity halves of \prettyref{thm:exact_general} are proved in
Sections \ref{sec:suff_exact} and \ref{sec:nec_exact}, respectively.

\subsection{The Sufficient Condition and the Voting Procedure} \label{sec:suff_exact}

This section proves the sufficiency part of \prettyref{thm:exact_general}.
The proof is based on a two-step procedure for exact recovery, described
as \prettyref{alg:PQ_cleanup}.    The first main step of the algorithm
 (approximate recovery) uses
an estimator capable of weak recovery, even with a slight mismatch
between $|C^*|$ and $K$, such as provided by the ML estimator
(see \prettyref{lmm:weak_general_random_suff}). The second main step cleans
up the residual errors through a local voting procedure for each index. In order
to make sure the first and second step are independent of each other, we use
the method of successive withholding.

This method of proof highlights
 $\eqref{eq:voting-suff}$ as the sufficient condition for when the local
 voting procedure  succeeds.  In fact, it permits us to prove an intermediate result, 
 \prettyref{thm:metaweakplusvotingX} below, which can be used to show
 that weak recovery plus cleanup in linear additional time can be applied to
 yield exact recovery no matter how the weak recovery step is achieved.  In particular, 
\cite{HajekWuXu_one_beyond_spectral15} and \cite{HajekWuXu_MP_submat15}
give conditions for message passing algorithms to achieve weak recovery
in (near linear) polynomial time, and they invoke \prettyref{thm:metaweakplusvotingX}
to note that, if  \eqref{eq:voting-suff} holds, exact recovery can be achieved
with the addition of the linear time cleanup step.

\begin{algorithm}
\caption{Weak recovery plus cleanup for exact recovery}\label{alg:PQ_cleanup}
\begin{algorithmic}[1]
\STATE Input: $n \in \naturals$, $K >0$, distributions $P$, $Q$; observed matrix $A;$
$\delta \in (0,1)$ with $1/\delta, n\delta \in \naturals$.
\STATE (Partition): Partition $[n]$ into $ 1/\delta$ subsets $S_k$ of size $n\delta$.
\STATE (Approximate Recovery) For each $k=1, \ldots,  1/\delta $, let $A_k$ denote the restriction of $A$ to the rows and columns with index
in $[n]\backslash S_k$, run an estimator capable of weak recovery with input $(n(1-\delta), \lceil K(1-\delta)\rceil , P, Q , A_k)$
and let $\hat{C}_k$ denote the output.
\STATE (Cleanup) For each $k=1, \ldots,  1/\delta $  compute $r_i=\sum_{j \in \hat{C}_k } L_{ij}$ for all $i \in S_k$ and return
$\check{C}$, the set of $K$ indices in $[n]$ with the largest values of $r_i$.
\end{algorithmic}
\end{algorithm}

The following theorem gives sufficient conditions under which the two-step procedure achieves exact recovery,
assuming the first step provides weak recovery.

\begin{theorem}\label{thm:metaweakplusvotingX}
Suppose $\tilde{C}$ is produced by \prettyref{alg:PQ_cleanup} using estimators for weak recovery $\hat C_k$ such that,
\begin{align}
\prob{  | \hat{C}_k \Delta C_k^\ast | \le \delta K \text{ for } 1 \le k \le 1/\delta } \to 1,  \label{eq:weakrecoverycondition}
\end{align}
as $n\to\infty,$  where   $C_k^\ast= C^\ast \cap ( [n] \backslash S_k )$.
Suppose also that \prettyref{ass:reg_strong} holds (or the weaker conditions \eqref{eq:subgQ-loc} and \eqref{eq:divquadassump} hold),
\eqref{eq:voting-suff}  holds.  Then $\pprob{ \check{C}=C^\ast } \to 1$ as $n\to \infty$.
\end{theorem}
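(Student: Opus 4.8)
The plan is to condition on the high-probability event from \eqref{eq:weakrecoverycondition} that $|\hat C_k \triangle C_k^\ast| \le \delta K$ for all $k$, and then show that, on this event, the cleanup step recovers $C^\ast$ exactly with probability tending to one. The key structural point to exploit is the independence gained by successive withholding: for each block $S_k$, the estimator $\hat C_k$ is a function of $A_k$ only, hence is independent of the entries $\{A_{ij} : i \in S_k,\ j \in [n]\setminus S_k\}$. Therefore, conditionally on $\hat C_k$, the statistics $r_i = \sum_{j \in \hat C_k} L_{ij}$ for $i \in S_k$ are sums of independent LLR terms, each distributed as $P$ if $i \in C^\ast$ and as $Q$ if $i \notin C^\ast$ (one must be slightly careful about the at most one term $L_{ii}=0$, but for $i \in S_k$ and $j \in \hat C_k \subseteq [n]\setminus S_k$ this never occurs). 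The number of summands is $|\hat C_k|$, which on the conditioning event is $K(1 + o(1))$, and it differs from $|\hat C_k \cap C^\ast|$ and from $|C^\ast|$ by at most $\delta K + O(\delta K)$.

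Next I would carry out the main estimate. The cleanup step returns the $K$ indices with the largest $r_i$; this succeeds (returns exactly $C^\ast$) provided $\min_{i \in C^\ast} r_i > \max_{i \notin C^\ast} r_i$. The natural way to guarantee this is to exhibit a threshold: I would show that with probability $1 - o(1)$, every $i \in C^\ast$ has $r_i$ above a level $m\theta^\ast$ and every $i \notin C^\ast$ has $r_i$ below $m\theta^\ast$, where $m = |\hat C_k| = K(1+o(1))$ and $\theta^\ast = \frac{1}{K}\log\frac{n}{K}$ (the point where the two large-deviation exponents $E_P$ and $E_Q$ are appropriately balanced, using $E_P(\theta) = E_Q(\theta) - \theta$). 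By the Chernoff bounds \eqref{eq:LDupper1} and \eqref{eq:LDupper2}, for a single index $i \notin C^\ast$, $\prob{r_i \ge m\theta^\ast} \le \exp(-m E_Q(\theta^\ast))$, and for $i \in C^\ast$, $\prob{r_i \le m\theta^\ast} \le \exp(-m E_P(\theta^\ast))$. Taking a union bound over the at most $n$ indices outside $C^\ast$ and the $K$ indices inside, and summing over the $1/\delta = O(1)$ blocks, the total failure probability is at most
\begin{align*}
\frac{1}{\delta}\left( n \exp(-m E_Q(\theta^\ast)) + K \exp(-m E_P(\theta^\ast)) \right).
\end{align*}
Using $m E_P(\theta^\ast) = m E_Q(\theta^\ast) - m\theta^\ast = m E_Q(\theta^\ast) - (1+o(1))\log\frac{n}{K}$, both terms are controlled once $m E_Q(\theta^\ast) = K E_Q\!\left(\frac{1}{K}\log\frac{n}{K}\right)(1+o(1)) \ge (1+c)\log n$ for some $c > 0$, which is exactly the content of assumption \eqref{eq:voting-suff}. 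One must also account for the $o(1)$ slippage in $m$ and in the effective mean shift coming from $|\hat C_k \triangle C^\ast| \le \delta K$: since $E_Q$ is Lipschitz (its derivative is bounded on the relevant interval, and $\theta^\ast \le D(P\|Q)$ for large $n$ by \eqref{eq:weak-bdd_suff}), replacing $\hat C_k$ by $C^\ast \cap ([n]\setminus S_k)$ in the sum changes the exponent only by a lower-order amount, and $\delta$ can afterwards be sent to zero — or, more simply, fixed small and the $\liminf$ in \eqref{eq:voting-suff} absorbs the loss. This is where the conditions \eqref{eq:subgQ-loc}/\eqref{eq:divquadassump} (implied by \prettyref{ass:reg_strong} via \prettyref{lmm:equivalence}) enter, ensuring $D(P\|Q) \asymp D(Q\|P) \asymp C(P,Q)$ so that the perturbation analysis of $E_Q$ near $\theta^\ast$ is quantitatively under control.

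The main obstacle I anticipate is handling the mismatch between $\hat C_k$ and the true $C^\ast \cap ([n]\setminus S_k)$ cleanly while preserving sharp constants: the $r_i$ are computed by summing LLRs over $\hat C_k$, which contains up to $\delta K$ spurious indices (contributing terms distributed as $Q$ rather than $P$ when $i \in C^\ast$, and vice versa) and misses up to $\delta K$ true ones. Quantifying that these errors perturb each $r_i$ by at most $O(\delta K)$ times a typical LLR magnitude — in expectation and with high probability, uniformly over $i$ — and then showing this is negligible against the gap $\asymp \log n$ between the threshold and the typical values of $r_i$, is the delicate part; it is what forces the two-sided slack between \eqref{eq:voting-suff} (with strict $\liminf > 1$) and the necessity statement \eqref{eq:voting-nec} (with $\ge 1$). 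A secondary technical point is that the per-index deviation events inside a block are not independent of each other (they all involve the common random set $\hat C_k$), but conditioning on $\hat C_k$ makes them independent, and the union bound does not require independence anyway, so this causes no real difficulty.
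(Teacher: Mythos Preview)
Your outline is correct and follows the same strategy as the paper: condition on the weak-recovery event \eqref{eq:weakrecoverycondition}, exploit the independence between $\hat C_k$ and the withheld rows $\{A_{ij}:i\in S_k\}$, threshold at $\gamma=\frac{1}{K}\log\frac{n}{K}$, and union-bound over all indices. The one place where the paper's execution differs from your sketch is the treatment of the mismatch $|\hat C_k\triangle C_k^\ast|\le\delta K$. Rather than a perturbation or Lipschitz argument on $E_Q$ (which is aimed at a shift of the \emph{threshold}, whereas the actual issue is a change of \emph{measure} on a $\delta K$ fraction of summands), the paper first invokes the stochastic domination of $\log\frac{dP}{dQ}$ under $P$ over $\log\frac{dP}{dQ}$ under $Q$ to replace, for $i\in C^\ast$, the random-composition sum $r_i$ by the deterministic worst case $\sum_{j=1}^{K(1-2\delta)}X_j+\sum_{j=1}^{K\delta}Y_j$. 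A dedicated lemma (\prettyref{lmm:sum_LDP}) then Chernoff-bounds this mixed sum directly: choose the tilting parameter $t\in[0,1]$ to optimize the $X$-part, so that $\psi_P(-t)+\gamma t=-E_P(\gamma)$, and use \eqref{eq:subgQ-loc} to bound the contaminant via $\psi_Q(-t)\le\psi_Q(-1)\le C' E_Q(\gamma)$, which is then absorbed by fixing $\delta$ small against the strict slack in \eqref{eq:voting-suff}. This is precisely where \eqref{eq:subgQ-loc} enters, not merely to give $D(P\|Q)\asymp D(Q\|P)$ but to control the moment generating function of the wrong-sign LLR terms at the tilting point chosen for the right-sign ones. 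Your approach can be made to work, but the stochastic-domination reduction plus mixed-sum Chernoff is the clean device that makes the loss exactly proportional to $\delta$.
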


The proof of \prettyref{thm:metaweakplusvotingX} is given after the following lemma.
\begin{lemma}  \label{lmm:sum_LDP}
Suppose \prettyref{ass:reg_strong} holds  (or the weaker condition \eqref{eq:subgQ-loc} holds)  and  \eqref{eq:voting-suff} holds.
Let $\{X_i\}$ denote a sequence of i.i.d.\ copies of $\log \frac{ \diff P}{ \diff Q} $ under measure $P$.
Let $\{Y_i\}$ denote another sequence of i.i.d\  copies of $ \log \frac{ \diff P}{ \diff Q}$ under measure $Q$,
which is independent of $\{X_i\}$.   Then for $\delta$ sufficiently small and $\gamma = \frac{1}{K} \log \frac{n}{K},$\footnote{The $o$ in
$o(1/K)$ is understood to hold as $n\to\infty.$   Thus, if $K$ is bounded, $o(1/K)$ means $o(1)$ as $n\to \infty.$}
\begin{eqnarray}
\prob{ \sum_{i=1}^{K(1-2\delta)}  X_i + \sum_{i=1}^{K \delta} Y_i \le K(1-\delta) \gamma}  &= & o(1/K)   \label{eq:localvoting1} \\
 \prob{ \sum_{i=1}^{K(1-\delta)}  Y_i \ge  K(1-\delta) \gamma } &=&  o(1/(n-K)). \label{eq:localvoting2}
\end{eqnarray}
\end{lemma}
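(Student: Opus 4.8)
The plan is to derive both estimates from Chernoff bounds, reading the exponents off the rate functions $E_P,E_Q$ of \prettyref{eq:ratefunction} and converting the hypothesis \prettyref{eq:voting-suff} into a statement about those exponents. Write $\gamma=\frac1K\log\frac nK$; then $\gamma>0$, and in the operating regime ($\gamma<D(P\|Q)$, which holds in particular under the weak-recovery condition \prettyref{eq:weak-bdd_suff}, i.e.\ wherever this lemma is invoked) $\gamma$ lies in the open interval $(-D(Q\|P),D(P\|Q))$ on which \prettyref{eq:LDupper1}--\prettyref{eq:LDupper2} are exponentially tight. The one algebraic fact driving everything is a rewriting of \prettyref{eq:voting-suff}: there is a fixed $c>0$ with $KE_Q(\gamma)\ge(1+2c)\log n$ for all large $n$, and since $E_P=E_Q-\theta$ and $K\gamma=\log\frac nK=\log n-\log K$, this is equivalent to $KE_P(\gamma)\ge 2c\log n+\log K\ (\ge(1+2c)\log K)$, the reservoir we spend.

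Estimate \prettyref{eq:localvoting2} is then immediate from \prettyref{eq:LDupper1}: the sum is over $K(1-\delta)$ i.i.d.\ copies of the LLR under $Q$ and the threshold is $K(1-\delta)$ times $\gamma$, so
\[
\prob{\sum_{i=1}^{K(1-\delta)}Y_i\ge K(1-\delta)\gamma}\ \le\ \exp\!\big(-K(1-\delta)E_Q(\gamma)\big)\ \le\ n^{-(1-\delta)(1+2c)} .
\]
Taking $\delta$ small enough that $(1-\delta)(1+2c)\ge 1+c$ makes this $o(1/n)$, which is $o(1/(n-K))$ because $\limsup K/n<1$ forces $n-K=\Omega(n)$.

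Estimate \prettyref{eq:localvoting1} is the real content, and the difficulty is that the $K\delta$ ``contaminating'' terms $Y_i$ (mean $-D(Q\|P)<0$) must be absorbed at a cost that is only lower-order in the exponent: a crude split of the event into an $X$-event and a $Y$-event would force one to lower bound $E_P$ at an argument strictly below its zero $D(P\|Q)$, which—having only the one-sided regularity \prettyref{eq:subgQ-loc} and no quadratic bound on $E_P$—loses a multiplicative constant and degrades $o(1/K)$ to $o(K^{-\alpha})$ for some $\alpha<1$. Instead I would Chernoff the whole sum at the exponential-tilting parameter $\mu^\ast\in(0,1)$ conjugate to $\gamma$ under $P$, namely the maximizer in $E_P(\gamma)=\sup_\lambda\{\lambda\gamma-\psi_P(\lambda)\}$, so that $\psi_P(-\mu^\ast)=-\mu^\ast\gamma-E_P(\gamma)$, with $\mu^\ast\in(0,1)$ precisely because $\gamma\in(-D(Q\|P),D(P\|Q))$. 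By independence and $\psi_P(\lambda)=\psi_Q(\lambda+1)$,
\[
\prob{\sum_{i=1}^{K(1-2\delta)}X_i+\sum_{i=1}^{K\delta}Y_i\le K(1-\delta)\gamma}\ \le\ \exp\!\Big(\mu^\ast K(1-\delta)\gamma+K(1-2\delta)\psi_P(-\mu^\ast)+K\delta\,\psi_Q(-\mu^\ast)\Big),
\]
and substituting $\psi_P(-\mu^\ast)=-\mu^\ast\gamma-E_P(\gamma)$ collapses the first two terms to $\mu^\ast\delta K\gamma-K(1-2\delta)E_P(\gamma)$.

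To finish, I would bound the contamination term: \prettyref{eq:subgQ-loc} at $\lambda=-\mu^\ast\in[-1,0]$ gives $\psi_Q(-\mu^\ast)\le D(Q\|P)(1+C/2)$, and \prettyref{eq:subgQ-loc} again—via its $\eta=1$ consequence \prettyref{eq:divquadassump2} from \prettyref{lmm:equivalence}, together with $C(P,Q)=E_Q(0)\le E_Q(\gamma)=E_P(\gamma)+\gamma$—gives $D(Q\|P)\le 2C\,C(P,Q)\le 2C\big(E_P(\gamma)+\gamma\big)$. Collecting terms and using $\mu^\ast\le1$, the exponent is at most $b\,\delta K\gamma-(1-a\delta)KE_P(\gamma)$ for constants $a,b$ depending only on $C$. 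Plugging in $KE_P(\gamma)\ge 2c\log n+\log K$ and $K\gamma=\log\frac nK\le\log n$, and then choosing $\delta$ small enough (in terms of $c$ and $C$ only), the exponent drops below $-(1+c/4)\log K$, so the probability is $O(K^{-1-c/4})=o(1/K)$. The hard part is exactly this last estimate—disposing of the $K\delta$ off-distribution terms while retaining a $1-O(\delta)$ factor on $E_P(\gamma)$ in the exponent, using nothing beyond the one-sided bound \prettyref{eq:subgQ-loc}; tilting at the parameter tuned to $E_P(\gamma)$ rather than taking a union over two sums, and controlling $D(Q\|P)$ by $E_Q(\gamma)$ via \prettyref{lmm:equivalence}, is what makes the constants line up.
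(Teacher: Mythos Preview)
Your proposal is correct and follows essentially the same route as the paper's proof: both apply a Chernoff bound to the full mixed sum at the tilting parameter $t=\mu^\ast\in(0,1)$ optimal for $E_P(\gamma)$ (so that $\psi_P(-t)+t\gamma=-E_P(\gamma)$), bound the contamination term $\psi_Q(-t)$ by $D(Q\|P)(1+C/2)$ via \eqref{eq:subgQ-loc}, and then convert $D(Q\|P)$ to a constant times $E_Q(\gamma)=E_P(\gamma)+\gamma$ via \eqref{eq:divquadassump2}, yielding an exponent $-(1-a\delta)KE_P(\gamma)+b\delta K\gamma$ which is dispatched using $KE_P(\gamma)\ge\log K+\epsilon\log n$ and $K\gamma\le\log n$. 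The only cosmetic difference is that the paper first uses convexity of $\psi_Q$ to replace $\psi_Q(-t)$ by $\psi_Q(-1)$ before invoking \eqref{eq:subgQ-loc}, whereas you apply \eqref{eq:subgQ-loc} directly at $\lambda=-\mu^\ast$; both give the same bound.
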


\begin{proof}
By the assumption \eqref{eq:voting-suff}, there exists $\epsilon > 0$ sufficiently
small such that  $K E_Q(\gamma) \geq (1+\epsilon) \log n $ for all sufficiently large $n$.   We restrict attention to such $n$.
First of all,
\[
 \prob{ \sum_{i=1}^{K(1-\delta)}  Y_i \ge  K(1-\delta) \gamma  } \leq \exp( - K(1-\delta) E_Q(\gamma)) \leq n^{ - (1-\delta)(1+\epsilon)}.
\]
Then \prettyref{eq:localvoting2} holds as long as $\delta <  \frac{\epsilon}{1+\epsilon}$. To show \eqref{eq:localvoting1},  
for any $t > 0$, the Chernoff bound yields
\begin{align*}
	\prob{ \sum_{i=1}^{K(1-2\delta)}  X_i + \sum_{i=1}^{K \delta} Y_i \le K(1-\delta) \gamma   }   
\leq \exp\pth{K(1-2\delta) (\psi_P(-t)+\gamma t) + K\delta (\psi_Q(-t)+t\gamma)}.
\end{align*}
Since $E_P(\gamma) = \sup_{-1 \leq \lambda \leq 0} \lambda \gamma - \psi_P(\lambda)$, choose $t \in [0,1]$ so that $\psi_P(-t)+\gamma t = - E_P(\gamma) = - E_Q(\gamma)+\gamma$. 
Since $\lambda \mapsto \psi_Q(\lambda)$ is convex with $\psi_Q(0)=\psi_Q(1)=0,$ it follows
that
\begin{align}
\psi_Q(-t)  \leq   \psi_Q(-1)  \leq D(Q\|P) \left( 1+ C/2 \right), 
\label{eq:chiC}
\end{align}
where the last inequality follows from  \prettyref{eq:subgQ-loc} with $\lambda =-1$.  
Note that \prettyref{eq:divquadassump2} is implied by \prettyref{eq:subgQ-loc}.
It follows from \prettyref{eq:divquadassump2} that $E_Q(\gamma) \ge E_Q(0) \ge \frac{1}{2C} D(Q\|P)$.
Together with \prettyref{eq:chiC}, it yields that 
$\psi_Q(-t) \leq  C(C+2) E_Q(\gamma)$. Let $C'=C(C+2)$. 
Combining the above gives 
\begin{align*}
	 \prob{ \sum_{i=1}^{K(1-2\delta)}  X_i + \sum_{i=1}^{K \delta} Y_i \le K(1-\delta) \gamma  }   
\leq & ~\exp\pth{-K (1-2\delta) E_P(\gamma)  + K \delta C' E_Q(\gamma) + K \delta \gamma } \\
= & ~\exp\pth{-K (1-(C'+2)\delta) E_P(\gamma)  + K \delta (1+C') \gamma } \\
\leq & ~\exp\pth{-(1-(C'+2)\delta) (\log K + \epsilon \log n)  + \delta (1+C') \log n },
\end{align*}
where the last inequality follows from the assumption that $KE_P(\gamma) = \log K - \log n  + K E_Q(\gamma) \geq \log K + \epsilon \log n$.
Therefore, as long as $(1-(C'+2)\delta)(1+\epsilon/2) > 1$ and $\delta (1+C') \leq (\epsilon/3)/(1+\epsilon/2)$, 
$$
\prob{ \sum_{i=1}^{K(1-2\delta)}  X_i + \sum_{i=1}^{K \delta} Y_i \le K(1-\delta) \gamma}   
 \leq \exp\left( - \left( \frac{1}{1+\epsilon/2}\right)    \left( \log K + \frac{2\epsilon}{3} \log n \right)   \right),
$$
so that \prettyref{eq:localvoting1} holds.
\end{proof}

\begin{proof}[Proof of \prettyref{thm:metaweakplusvotingX}]
Note that the conditions of \prettyref{lmm:sum_LDP} are satisfied, so that
\prettyref{eq:localvoting1} and \prettyref{eq:localvoting2} hold.

Given  $(C_k^*,  \hat{C}_k)$,  each of the random variables $r_i \in S_k $ for $i \in [n]$
is conditionally the sum of  independent random variables, each with either
the distribution of $X_1$ or the distribution of $Y_1$ described in \prettyref{lmm:sum_LDP}.
Furthermore, on the event,
${\cal E}_k =\{    | \hat{C}_k   \triangle C_k^*    |   \leq  \delta K  \},$
$$
 |\hat{C}_k  \cap C_k^*  |  \geq  |\hat{C}_k| -  |\hat{C}_k   \triangle C^*_k | =
  \lceil K(1-\delta)\rceil -     |\hat{C}_k   \triangle C^*_k |      \geq K(1-2\delta),
$$
One can check by definition and the change of measure that $X_1$ is first-order stochastically greater than or equal to $Y_1$.
Therefore, on the event ${\cal E}_k ,$
 for $i \in C^*$,  $r_i $ is stochastically greater than or equal to $ \sum_{j=1}^{K(1-2\delta)}  X_j + \sum_{j=1}^{K \delta} Y_j$.
 For $i \in [n] \backslash C^*$, $r_i$ has the same distribution as $\sum_{j=1}^{K(1-\delta)}  Y_j$.
Hence, by \prettyref{eq:localvoting1} and \prettyref{eq:localvoting2} and the union bound,
with probability converging to $1$,  $r_i >K(1-\delta)  \gamma $ for all $i \in C^*$ and $r_i <K(1-\delta) \gamma $ for all $i \in [n] \backslash C^*$.
Therefore, $\pprob{ \check{C}=C^\ast } \to 1$ as $n\to \infty$.
\end{proof}

\begin{proof}[Proof of Sufficiency Part of \prettyref{thm:exact_general}]
If $K$ is bounded, exact recovery is the same as weak recovery, so the sufficiency part of \prettyref{thm:exact_general}
follows from the sufficiency part of \prettyref{thm:weak_general} in that case.   So assume for the remainder of
the proof that $K\to \infty.$

 In view of \prettyref{thm:metaweakplusvotingX} it suffices to verify \prettyref{eq:weakrecoverycondition} when $\hat C_k$
for each $k$ is the MLE for $C^{*}_k$ based on observation of $A_k,$   for $\delta$ sufficiently small.
The distribution of $|C^{*}_k|$ is obtained by sampling the
indices of the original graph without replacement.
 Therefore, by a result of Hoeffding \cite{Hoeffding63}, the distribution of
$|C^{*}_k|$ is convex order dominated by the distribution that would result by sampling with replacement, namely, by
$\Binom\left(n(1-\delta), \frac{K}{n}\right)$.
That is, for any convex function $\Psi,$  $\expect{ \Psi( |C^{*}_k| )} \leq \expect{ \Psi(  \Binom(n(1-\delta), \frac{K}{n}) )  }$.
Therefore, Chernoff bounds for  $\Binom(n(1-\delta), \frac{K}{n}) ) $ also hold for $|C^{*}_k|$.
The Chernoff bounds for $X \sim \Binom(n,p)$ give:
\begin{align}
\prob{ X \ge (1+\eta) n p } \le \eexp^{-\eta^2 n p /3}, \quad \forall ~0\le \eta \le 1 \label{eq:BinomChernoffQuadratic1}\\
\prob{ X \le (1-\eta) n p } \le \eexp^{-\eta^2 n p /2}, \quad \forall  ~0\le \eta \le 1\label{eq:BinomChernoffQuadratic2}.
\end{align}
Then,
\begin{align*}
\prob{  \big| | C_k^\ast| -(1-\delta)K \big| \ge  \frac{K}{\log K} }  & \le
  \prob{ \bigg| \Binom\left(n(1-\delta), \frac{K}{n}\right)  -(1-\delta)K \bigg| \ge   \frac{K}{\log K}  } \\
& \le \eexp^{-\Omega(K/\log^2 K) }=o(1).
\end{align*}
Since  \prettyref{eq:weak-bdd_suff} holds and $K\to \infty$, it follows that
$$
 \liminf_{n\to\infty} \frac{\lceil  (1-\delta)K\rceil  D(P\|Q)}{\log \frac{n }{K}}> 2
 $$
for any sufficiently small $\delta \in (0,1)$ with $1/\delta, n\delta \in \naturals$.
Hence, we can apply \prettyref{lmm:weak_general_random_suff} with $K$ replaced by
$\lceil  (1-\delta)K\rceil $ to get that for any $1 \le k\le 1/\delta$,
\begin{align}
\prob{  | \hat{C}_k \Delta C_k^\ast | \le  2 \epsilon K+ 3 K/\log K } \geq 1-o(1),
\end{align}
where $\epsilon=1/\sqrt{\min\{\log K, K D(P\|Q) \}}$.
Since $\delta$ is a fixed constant, by the union bound over all $1 \le k\le 1/\delta$, we have that
\begin{align*}
\prob{  | \hat{C}_k \Delta C_k^\ast | \le 2\epsilon K+  3 K/\log K  \text{ for } 1 \le k \le 1/\delta } \geq 1-o(1).
\end{align*}
Since $\epsilon \to 0$,  the desired
\prettyref{eq:weakrecoverycondition} holds.
\end{proof}

\subsection{The Necessary Condition} \label{sec:nec_exact}

The following lemma gives a necessary condition for exact recovery under the general $P/Q$ model
expressed in terms of probabilities for certain large deviations.   Later in the section the lemma
is combined with the large deviations lower bound of \prettyref{lmm:ld-lb}
to establish the necessary conditions in \prettyref{thm:exact_general}.   This method
parallels the method used in the previous section for establishing the sufficient condition
in  \prettyref{thm:exact_general}.

\begin{lemma}\label{lmm:exact_nec_general}
Assume that $K \to \infty$ and $\limsup K/n < 1$.
Let $L_i$ denote i.i.d.\ copies of $\log \frac{dP}{dQ}$.
%Let $B_n$ satisfy $K (n-K) Q \qth{ L_1 \ge B_n } = o(1)$.
If there exists an estimator $\widehat{C}$ such that $\pprob{ \widehat{C} = C^\ast } \to 1$,
then for any $K_o \to \infty$ such that $K_o=o(K)$,
there exists a threshold $\theta_n$ depending on $n$ such that for all sufficiently large n,
\begin{align}
  P \qth{ \sum_{i=1}^{K-K_o}  L_i  \le (K-1) \theta_n  - (K_o-1) D(P \|Q)  -  6\sigma  } & \le \frac{2}{K_o}, \label{eq:votingtype2error} \\
 Q \qth{ \sum_{i=1}^{K-1} L_i\ge (K-1) \theta_n } & \le \frac{1}{n-K}, \label{eq:votingtype1error}
\end{align}
where $\sigma^2=K_o \var_P\left( L_1 \right) $ and $\var_P(L_1)$ denotes the variance of $L_1$ under measure $P$.
\end{lemma}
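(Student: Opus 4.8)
The plan is to lower bound $\pprob{\CML\neq C^\ast}$ by the probability that some index inside a small, fixed ``withheld'' block of $C^\ast$ can be profitably swapped for an index outside $C^\ast$, and then to choose $\theta_n$ so that \emph{both} displayed tail bounds are forced to hold: if, for every threshold, at least one of them failed, the swap would succeed with probability bounded away from zero, contradicting $\pprob{\widehat C=C^\ast}\to1$.

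Since $C^\ast$ is uniform, the MLE $\CML$ of \prettyref{eq:MLE} minimizes the error probability, so we may assume $\pprob{\CML\neq C^\ast}\to0$; by exchangeability of the model I would condition on $C^\ast=[K]$ and fix $C_o=[K_o]\subset C^\ast$. For an index $v$ write $S_v=\sum_{k\in[K]\setminus[K_o]}L_{vk}$ (a sum of $K-K_o$ i.i.d.\ copies of $\log\frac{dP}{dQ}$, distributed under $P$ if $v\in[K_o]$ and under $Q$ if $v>K$), and for $v\in[K_o]$ write $R_v=\sum_{k\in[K_o]\setminus\{v\}}L_{vk}$ (a sum of $K_o-1$ copies under $P$), so that $e(v,[K]\setminus\{v\})=S_v+R_v$. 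Withholding the block $C_o$ is exactly what makes the following independence hold: the scores $S_v$, $v\in[K_o]$, are mutually independent and independent of all $R_v$; and, conditioned on the data within $C^\ast$ (which determines any candidate weak vertex $v^\ast$), the quantities $\sum_{k\in[K]\setminus\{v^\ast\}}L_{wk}$, $w\in[n]\setminus[K]$, are conditionally i.i.d., each distributed as a sum of $K-1$ i.i.d.\ copies under $Q$.

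I would then argue by contradiction. Set $\alpha(\theta)=P[\sum_{i=1}^{K-K_o}L_i\le(K-1)\theta-(K_o-1)D(P\|Q)-6\sigma]$, which is nondecreasing in $\theta$, and $\beta(\theta)=Q[\sum_{i=1}^{K-1}L_i\ge(K-1)\theta]$, which is nonincreasing. If no $\theta_n$ makes both $\alpha(\theta_n)\le2/K_o$ and $\beta(\theta_n)\le1/(n-K)$, then by monotonicity there is a single $\theta$ with $\alpha(\theta)>2/K_o$ and $\beta(\theta)>1/(n-K)$ (a routine limiting argument covers the touching case). For such a $\theta$: by independence of the $S_v$ and $(1-x)^m\le e^{-mx}$, with probability at least $1-e^{-2}$ some $v^\ast\in[K_o]$ has $S_{v^\ast}\le(K-1)\theta-(K_o-1)D(P\|Q)-6\sigma$; conditioned on this and on the identity of $v^\ast$, $R_{v^\ast}$ is still a fresh sum of $K_o-1$ copies under $P$, so by Chebyshev (using $\var(R_{v^\ast})\le\sigma^2$) we have $R_{v^\ast}\le(K_o-1)D(P\|Q)+6\sigma$ except on an event of probability $\le1/36$, whence $e(v^\ast,[K]\setminus\{v^\ast\})=S_{v^\ast}+R_{v^\ast}\le(K-1)\theta$ on the intersection. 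Next, conditioned on the within-$C^\ast$ data (hence on $v^\ast$), the conditional independence across $w>K$ together with $\beta(\theta)>1/(n-K)$ give, with probability at least $1-e^{-1}$, some $w>K$ with $e(w,[K]\setminus\{v^\ast\})>(K-1)\theta$. On this event $e\big((C^\ast\setminus\{v^\ast\})\cup\{w\}\big)=e(C^\ast\setminus\{v^\ast\},C^\ast\setminus\{v^\ast\})+e(w,[K]\setminus\{v^\ast\})>e(C^\ast)$, so $\CML\neq C^\ast$; the three constant conditional lower bounds chain to a positive constant, contradicting $\pprob{\CML\neq C^\ast}\to0$.

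The main obstacle is the dependency bookkeeping that lets the two union bounds be genuine. The ``true'' side must be a union over the $K_o$ \emph{independent} partial scores $S_v$, which is what turns a single-vertex tail probability exceeding $2/K_o$ into a constant failure probability; the ``false'' side must be a union over $n-K$ \emph{conditionally independent} scores, turning a tail exceeding $1/(n-K)$ into a constant. For both to be available simultaneously the swap has to be compared on the common index set $[K]\setminus\{v^\ast\}$ rather than $[K]\setminus[K_o]$, and the resulting discrepancy is precisely the internal score $R_{v^\ast}$ --- its mean contributes the $(K_o-1)D(P\|Q)$ correction and its fluctuation is absorbed at scale $6\sigma$ via Chebyshev, which is the role of the somewhat artificial constant $6$ and of the definition $\sigma^2=K_o\var_P(L_1)$. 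A secondary technical point is the precise selection of $\theta_n$ and the handling of possible atoms in the law of $\sum_iL_i$, which one manages with one-sided quantiles and the monotonicity of $\alpha$ and $\beta$ noted above.
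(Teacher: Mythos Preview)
Your approach is essentially the paper's: reduce to the MLE, withhold a block $C_o=[K_o]$ inside $C^\ast$, decompose the score of $v\in C_o$ as $e(v,C^\ast\setminus\{v\})=S_v+R_v$ with $S_v$ the part over $C^\ast\setminus C_o$ and $R_v$ the part within $C_o$, and compare a weak $v^\ast\in C_o$ against the best $w\notin C^\ast$. Two structural differences are worth noting.

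First, the paper does not argue by contradiction to find a single witness $\theta$; instead it \emph{constructs} $\theta_n$ directly by defining two one-sided quantiles
\[
\theta_n'=\inf\{x:\alpha(x)\ge 2/K_o\},\qquad \theta_n''=\sup\{x:\beta(x)\ge 1/(n-K)\},
\]
shows (using right/left continuity) that the corresponding events $E_1,E_2$ each have probability $\Omega(1)$, observes they are independent, and deduces $\theta_n'>\theta_n''$ since otherwise $E_1\cap E_2\subset F$ would force $\prob{F}=\Omega(1)$. Then $\theta_n=(\theta_n'+\theta_n'')/2$ satisfies both displayed bounds strictly. Your ``routine limiting argument covers the touching case'' is exactly this construction; the naive claim that the negation produces a single $\theta$ with both $\alpha(\theta)>2/K_o$ and $\beta(\theta)>1/(n-K)$ can fail at a jump, so you do need to go through the quantiles.

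Second, and this is a genuine simplification on your part, the paper reverses the order in which the two pieces of the score are controlled: it first forms $T'=\{i\in C_o: R_i\le (K_o-1)D(P\|Q)+6\sigma\}$, proves $|T'|\ge K_o/2$ with high probability (which requires a further decomposition $R_i=X_i+Y_i$ into independent pieces plus a Chebyshev/binomial argument), and only then exploits independence of the $S_i$ over $i\in T'$. You instead use independence of \emph{all} $K_o$ of the $S_v$'s first to locate $v^\ast$, then apply Chebyshev once to $R_{v^\ast}$, which is legitimate because $(R_v)_{v\in C_o}$ is independent of $(S_v)_{v\in C_o}$ and hence of the selection of $v^\ast$. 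This avoids the $|T'|\ge K_o/2$ step entirely and is cleaner.

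One small omission: the swap only guarantees $e\big((C^\ast\setminus\{v^\ast\})\cup\{w\}\big)\ge e(C^\ast)$, not strict inequality, so in the discrete case you should invoke the factor $\frac12$ from random tie-breaking (as the paper does via $\frac12\prob{F}\le\prob{\CML\neq C^\ast}$) rather than conclude $\CML\neq C^\ast$ outright.
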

\begin{proof}
Since the planted cluster $C^*$ is uniformly distributed, the MLE minimizes the
error probability among all estimators. Thus, without loss of generality, we can assume
the estimator used $\widehat{C}$ is $\widehat{C}_{\rm ML}$ and the indices are numbered
so that $C^*=[K]$.
Hence, by assumption,
$\prob{ \widehat{C}_{\rm ML} = C^\ast } \to 1$.
For each $i\in C^*$ and $j \notin C^*$, we have
$$
e\left(C^*\backslash\{i\}\cup\{j\},C^*\backslash\{i\}\cup\{j\} \right) - e(C^*,C^*) = e(j,C^\ast \backslash \{i \}  ) -  e(i,C^\ast) 
$$
Let $i_0$ denote the random index such that $i_0=\arg \min_{i \in C^*} e(i, C^*)$. 
Let $F$ denote the event that
\begin{align}  \label{eq:Fdef}
\min_{i \in C^*} e(i, C^*)  \le \max_{j \notin C^*} e(j, C^\ast \backslash \{i_0\} ) ,
\end{align}
which implies the existence of $j \notin C^\ast $, such that the set $C^*\backslash\{i_0\}\cup\{j\}$ achieves a likelihood at least as large as that achieved by $C^*$. Since if the event $F$ happens, then with probability at least $1/2$,
ML estimator fails, it follows that $ \frac{1}{2} \prob{F}   \le \prob{\text{\ML fails} }=o(1)$.

Set $\theta'_n$ to be
\begin{align*}
 \theta'_n = \inf \left \{ x \in \reals:   P \qth{ \sum_{i=1}^{K-K_o}  L_i \le (K-1) x - (K_o-1) D(P \|Q) -  6\sigma  }  \ge \frac{2}{K_o} \right \},
\end{align*}
and $\theta_n^{''}$ to be
\begin{align*}
\theta_n^{''} = \sup \left \{ x \in \reals: Q \qth{ \sum_{i=1}^{K-1} L_i\ge (K-1) x }  \ge \frac{1}{n-K} \right \}.
\end{align*}
Define the events 
\begin{align*}
E_1 = \Big\{\min_{i \in C^*} e(i, C^*)  \le (K-1) \theta'_n\Big\}, \quad E_2 = \Big\{\max_{j \notin C^*} e(j, C^\ast \backslash \{i_0\} ) \ge (K-1) \theta^{''}_n\Big\}.
\end{align*}
We claim that $\prob{E_1}=\Omega(1)$ and $\prob{E_2}=\Omega(1)$; the proof is deferred to the end.
Note that the random index $i_0$ only depends on the the joint distribution of edges with both two endpoints in $C^*$.
Thus  $e(j, C^*\backslash\{i_0\}) $ for different $j \notin C^*$ are independent 
and identically distributed, with the same distribution as $\sum_{i=1}^{K-1} L_i$ under measure $Q$. 
Thus $E_1$ and $E_2$ are independent, so in view of $\prob{F}=o(1)$, 
\begin{align*}
\prob{E_1 \cap E_2  \cap F^c}  \ge \prob{E_1 \cap E_2 }  - \prob{F} = \prob{E_1} \prob{E_2 } - o(1)  =\Omega(1),
\end{align*}

Since
\begin{align*}
E_1 \cap E_2 \cap F^c  \subset \{ \theta'_n > \theta^{''}_n  \},
\end{align*}
and $\theta'_n, \theta^{''}_n$ are deterministic, it follows that
$\theta'_n > \theta''_{n} $ for sufficiently large $n$.
Set $\theta_n= (\theta'_n+ \theta^{''}_n)/2$. Thus $\theta_n < \theta'_n $
and by the definition of $\theta'_n$, \prettyref{eq:votingtype2error} holds.
Similarly, we have that $\theta_n > \theta^{''}_n $ and by the definition of $\theta^{''}_n$,  \prettyref{eq:votingtype1error} holds.

We are left to show  $\prob{E_1}=\Omega(1)$ and $\prob{E_2}=\Omega(1)$.
%Note that for $i \in C^*, j \notin C^*$, $\log \frac{d P} {d Q} ( A_{ij} ) $ has the same distribution as $L_1$ under %measure $Q$. Hence,
%by the union bound,
%\begin{align*}
%\prob{ \max_{i \in C^\ast, j \notin C^\ast} \log \frac{d P} {d Q} ( A_{ij} )   \ge B_n } \le K(n-K) Q \qth{ L_1 \ge B_n } %=o(1).
%\end{align*}
%Thus $\prob{E_3^c} =o(1)$. Next, 
We first prove that $\prob{ E_2}=\Omega(1)$.
Since $Q \qth{ \sum_{i=1}^{K-1} L_i\ge x }$ is left-continuous in $x$, it follows that $Q \qth{ \sum_{i=1}^{K-1} L_i\ge  (K-1) \theta^{''}_n  }  \ge (n-K)^{-1}$.
Therefore,
\begin{align*}
\prob{E_2} & = 1- \prod_{j\notin C^*} \prob{e(j, C^\ast) < (K-1) \theta^{''}_n   }  \\
&= 1- \left( 1- Q \qth{ \sum_{i=1}^{K-1} L_i\ge  (K-1) \theta^{''}_n }  \right)^{n-K} \\
& \ge 1- \exp \left(  -    Q \qth{ \sum_{i=1}^{K-1} L_i\ge  (K-1)  \theta^{''}_n  }    (n-K)   \right) \ge 1-\eexp^{-1},
\end{align*}
where the first equality holds because $e(j, C^*\backslash\{i_0\} )$ are independent for different $j \notin C^\ast$;
the second equality holds because $e(j, C^*\backslash\{i_0\} )$ has the same distribution as $\sum_{i=1}^{K-1} L_i$ under measure $Q$;
the third inequality is due to $1-x \le \eexp^{-x}$ for $x \in \reals$; the last inequality holds because
$ Q \qth{ \sum_{i=1}^{K-1} L_i\ge  (K-1) \theta^{''}_n  } \ge (n-K)^{-1}$.  So $\prob{ E_2}=\Omega(1)$ is proved.

Next, we show that $\prob{E_1}=\Omega(1)$. The proof is similar to the proof of  $\prob{E_2} =\Omega(1)$ just given, but it is complicated by the fact
the random variables $e(i,C^*)$ for $i\in C^*$ are not independent.
Since  $P \qth{ \sum_{i=1}^{K-K_o} L_i \le x }$ is right-continuous in $x$, it follows from the definition that
\begin{align}
P \qth{ \sum_{i=1}^{K-K_o} L_i \le (K-1) \theta'_n-  (K_o-1) D(P \|Q) -  6\sigma  } \ge \frac{2}{K_o}. \label{eq:KoBinomp}
\end{align}
For all $i \in C^\ast$, $e(i, C^\ast)$ has the same distribution as $\sum_{i=1}^{K-1} L_i$ under measure $P$,
but they are not independent.
Let $T$ be the set of the first $K_o$ indices in $C^\ast$, \ie, $T=[K_o]$, where
$K_o=o(K)$ and $K_o \to \infty$. Let $\sigma^2=K_o \var_P(L_1)$, where $\var_P(L_1)$ denotes the variance of $L_1$ under measure $P$,
and let $T' =\{i\in T  : e(i,T) \leq  (K_o-1) D(P \|Q) + 6\sigma  \}$.    Since\footnote{In case $T'=\emptyset$ we adopt the convention that
the minimum of an empty set of numbers is $+\infty$.}
$$\min_{i\in C^*} e(i,C^*)  \leq  \min_{i \in T'}  e(i,C^*)   \leq   \min_{i\in T'}  e(i,C^*\backslash T) + (K_o-1) D(P \|Q)  + 6\sigma,$$
it follows that
$$
\prob{E_1}\geq \prob{ \min_{j \in T'}  e(j ,C^*\backslash T)   \leq  (K-1) \theta'_n   - (K_o-1) D(P\|Q) - 6 \sigma }.
$$
We show next that  $\prob{|T'|  \geq \frac{K_o}{2} }\to 1$ as $n \to \infty$.
For $i \in T,$   $e(i,T)=X_i+Y_i$ where
$X_i=e(i,\{1, \ldots , i-1\})$ and $Y_i=e(i, \{i+1, \ldots , K_o\})$. The $X$'s are mutually independent, and the $Y$'s are also mutually
independent, and  $X_i $ has the same distribution as $\sum_{j=1}^{i-1} L_j$ and $Y_i $ has the same distribution as
$\sum_{j=1}^{K_o-i} L_j$, where $L_j$ is distributed under measure $P$.
Then $\expect{X_i}= (i-1) D(P\|Q) $ and  $\var(X_i) \leq \sigma^2$.    Thus, by the Chebyshev inequality,
$ \prob{X_i \geq  (i-1) D(P\|Q) + 3\sigma }\leq \frac{1}{9} $
for all $i\in T$.   
Therefore, $|\{i : X_i \leq  (i-1) D(P\|Q) + 3\sigma  \}|$ is stochastically at least as large as a $\Binom\left(K_o, \frac{8}{9}\right)$
random variable,   so that,  $\prob{ |\{i : X_i \leq  (i-1) D(P\|Q) + 3\sigma    \}|  \geq \frac{3K_o}{4} }\to 1$ as $K_o\to \infty$.
Similarly,  $\prob{ | \{i : Y_i \leq  (K_o-i) D(P \|Q)  +3 \sigma  \}|  \geq \frac{3K_o}{4} }\to 1$ as $K_o\to \infty$.
If at least 3/4 of the $X$'s are small and at least 3/4 of the $Y$'s are small, it follows that at least
1/2 of the $e(i,T)$'s for $i \in T$  are small.  Therefore, as claimed,  $\prob{|T'|  \geq \frac{K_o}{2} }\to 1$ as $K_o\to \infty$.

The set $T'$ is independent of  $(e(i,C^*\backslash T): i\in T)$  and each of those variables
has the same distribution as $\sum_{j=1}^{K-K_o} L_j$ under measure $P$.
Thus,
\begin{align*}
& \prob{E_1} \\
\geq  &  1- \expect{ \prod_{j \in T'} \prob{e(j, C^\ast\backslash T ) \geq (K-1) \theta'_n - (K_o-1) D(P\|Q) - 6\sigma }  \bigg|  |T'|\geq \frac{K_o}{2} } - \prob{|T'| < \frac{K_o}{2}}  \\
\geq& 1 - \exp \left(  -    P \qth{ \sum_{j=1}^{K-K_o} L_j  \le  (K-1) \theta'_n- (K_o-1) D(P \|Q) -  6\sigma } K_o/2  \right)  -o(1)  \\
\ge &  1 - \eexp^{-1}  -o(1),
\end{align*}
where the last inequality follows from \prettyref{eq:KoBinomp}.  Therefore, $\prob{ E_1}=\Omega(1)$.
\end{proof}

\begin{proof}[Proof of Necessary Part of \prettyref{thm:exact_general}]
Since the joint condition \eqref{eq:weak-bdd_nec}
%$$
%KD(P\|Q) \to \infty, \quad \liminf_{n\to\infty} \frac{K D(P\|Q) }{\log (n/K)  } \ge 2
%$$
is necessary for weak recovery, and hence also for exact recovery,
it suffices to prove \prettyref{eq:voting-nec}
under the assumption that  \eqref{eq:weak-bdd_nec} holds, \ie, 
\begin{equation}
KD(P\|Q) \to \infty, \quad  K D(P\|Q)  \ge (2-\epsilon_0) \log (n/K)	
	\label{eq:weak-nec}
\end{equation}
for any fixed constant $\epsilon_0 \in (0,1)$ and all sufficiently large $n$. 
It follows that 
$$
E_Q\pth{\frac{1}{K} \log \frac{n}{K} } \le E_Q \left( D(P\|Q) \right) = D(P\|Q).
$$
Thus if $K=O(1)$, then \prettyref{eq:weak-nec}
implies \prettyref{eq:voting-nec}. Hence, we assume $K \to \infty$ in the following without loss of generality.

For the sake of argument by contradiction, suppose that \prettyref{eq:voting-nec} 
does not hold.
Then, by going to a subsequence, we can assume that 
\begin{equation}
\limsup_{n\to\infty} \frac{KE_Q(\gamma) }{\log n} <  1,
	\label{eq:exact-nec1}
\end{equation}
where $\gamma = \frac{1}{K} \log \frac{n}{K}$. It follows from \prettyref{eq:weak-nec} that 
$\gamma \le \frac{1}{2-\epsilon_0} D(P||Q)$. 

We shall apply \prettyref{lmm:exact_nec_general} to argue a contradiction. 
As a witness to the nonexistence of $\theta_n$ satisfying \eqref{eq:votingtype2error}  and \eqref{eq:votingtype1error}
we show that if $\theta_n = \gamma$ then neither \eqref{eq:votingtype2error}  nor  \eqref{eq:votingtype1error} holds.
By \prettyref{lmm:equivalence},  $D(P\|Q)\asymp D(Q \|P)$.
Since $0 \leq \gamma \le \frac{1}{2-\epsilon_0} D(P||Q)$,  choosing $\delta>0$ to be a sufficiently
small constant ensures that both $\gamma$ and $ \gamma+\delta D(Q||P)$ lie in $[-D(Q||P), D(P||Q)]$. 
Then \prettyref{ass:reg_strong} and   \prettyref{cor:LD_lower_bnd} yield:
$$
 Q \qth{ \sum_{i=1}^{K-1}  L_i\ge  (K-1) \gamma }  \geq
 \exp\left( - \frac{(K-1)  E_Q(\gamma+\delta D(Q\|P )  ) + \log2}{1- \frac{C }{(K-1)\delta^2  D(Q\|P)  }} \right).
$$
By the properties of $E_Q$ discussed in \prettyref{rmk:CPQ},
$$
E_Q \left(\gamma + \delta D(Q\|P ) \right)  
\leq E_Q(\gamma) +  \delta D(Q\| P),
$$
and by \prettyref{lmm:equivalence},
\begin{align}
\delta D(Q\|P) \le 2  \delta C E_Q(0) \leq 2 \delta C E_Q(\gamma),
\end{align}
so, in view of \eqref{eq:exact-nec1}, if $\delta$ is sufficiently small,
$$
(K-1) E_Q( \gamma+ \delta D(Q\| P) ) < (1- 2\delta) \log n
$$ 
for all sufficiently large $n$. Also,  recall that $D(P\|Q) \asymp D(Q\|P)$ 
 and hence \prettyref{eq:weak-nec} implies that
 $KD(Q\|P) \to \infty$.  
Therefore,
$$
 Q \qth{ \sum_{i=1}^{K-1} L_i\ge  (K-1)\gamma }  \geq  n^{-1+\delta}
 $$
 for all sufficiently large $n$.  Thus, \eqref{eq:votingtype1error} does {\em not} hold for $\theta_n \equiv\gamma$.\\

 Turning to \eqref{eq:votingtype2error} (with $\theta_n= \gamma$), we let $K_o=K/\log K$
 and 
 $$
 \delta' \triangleq \frac{(K_o-1) (D(P\|Q) -\gamma) + 6 \sigma}{ (K-K_o) D(P \|Q)  },
 $$
 where $\sigma= \var_P[L]$.
 Note that $\var_P[L] = \psi_Q''(1)  \le C D(P\|Q)$ by \prettyref{ass:reg_strong} and recall that 
from \prettyref{eq:weak-nec} we have $\gamma \leq \frac{1}{2-\epsilon_0}D(P\|Q)$. 
Furthermore, since $K \diverge$ and $KD(P\|Q) \to \infty$ by \prettyref{eq:weak-nec},
we conclude that $\delta' =o(1)$.

Since $D(P\|Q) \asymp D(Q\|P)$ and $0 \le \gamma \le  \frac{1}{2-\epsilon_0}D(P\|Q)$, 
choosing $\delta$ to be a sufficiently small constant ensures that both $\gamma- \delta' D(P\|Q)$ and $\gamma- ( \delta' + \delta)  D(P \|Q)$ lie in $[-D(Q\|P),D(P\|Q)]$. Hence, applying \prettyref{cor:LD_lower_bnd}  yields
\begin{align}
 & ~ P \qth{ \sum_{i=1}^{K-K_o}  L_i\le (K-1) \gamma  - (K_o-1) D(P\|Q)  - 6 \sigma }  \nonumber \\ 
= & ~ P \qth{ \sum_{i=1}^{K-K_o}  L_i\le (K - K_o) \left(\gamma  - \delta' D(P\|Q) \right) }   \nonumber \\
\geq &~ \exp\left( - \frac{(K-K_o)  E_P\left( \gamma- ( \delta' + \delta)  D(P \|Q) \right) + \log2}{1- \frac{C}{(K-K_o)\delta^2 D(P\|Q) }} \right).
\label{eq:necess_large_dev_lowerbound}
\end{align}
Moreover, in view of the fact that $E_P(\cdot)$ is decreasing and \prettyref{eq:divquadassump},
 \begin{align}
 E_P(\gamma) \ge E_P \left( D(P\|Q)/(2-\epsilon_0) \right) \ge  \frac{ (1-\epsilon_0)^2 D(P\|Q) }{2 (2-\epsilon_0)^2 C} \label{eq:EPgamma}
 \end{align}
 Let $C' = \frac{ (1-\epsilon_0)^2}{2 (2-\epsilon_0)^2 C}$. 
 Therefore, similar to  the properties of $E_Q$ discussed in \prettyref{rmk:CPQ},
\begin{align*}
E_P\left(  \gamma- (\delta' + \delta)  D(P \|Q)     \right) 
&  \leq E_P(\gamma)  + (\delta' + \delta)  D(P \|Q)   \\
&\leq E_P(\gamma) \left( 1+ (\delta'+\delta)/C'  \right) .
\end{align*}
Since $
E_P(\gamma)=E_Q(\gamma)-\gamma,$
by \prettyref{eq:exact-nec1}, there exist some $\epsilon>0$ such that 
$$
K E_P(\gamma)  \le (1-\epsilon) \log n - \log (n/K) =  - \epsilon \log n + \log K \le (1-\epsilon) \log K.
$$
Thus by choosing $\delta$ sufficiently small and in view of $\delta'=o(1)$, 
$$
(K-K_o) E_P\left(  \gamma- (\delta' + \delta)  D(P \|Q)     \right)  \le (1- 2\epsilon') \log K
$$
for some $\epsilon'>0$. Recall that $K D(P\|Q) \to \infty$, it readily follows from \prettyref{eq:necess_large_dev_lowerbound} that 
$$
 P \qth{ \sum_{i=1}^{K-K_o}  L_i\le  (K-1) \gamma  - (K_o-1) D(P\|Q) - 6 \sigma  }  \geq K^{-1+\epsilon'}.
$$
Thus, with $\theta_n =\gamma$, neither \eqref{eq:votingtype2error} nor \eqref{eq:votingtype1error} holds for all sufficiently large $n$.
Therefore, there does not exist a sequence $\theta_n$ such
that both \eqref{eq:votingtype2error} and \eqref{eq:votingtype1error} hold for all sufficiently large $n$, 
contradicting the conclusion of \prettyref{lmm:exact_nec_general}. 
\end{proof}

%\input{new_stuff.tex}

%\appendix
%\newpage
\begin{appendices}
\section{Equivalence of Weak Recovery in Expectation and in Probability}   \label{app:weak}

\begin{lemma}
There exists an estimator $\hat \xi$ such that  $\frac{d_H(\xi, \hat\xi)}{K} \to 0$ in probability
 if and only if there exists an estimator $\hat \xi$ such that
 $\frac{ \expect{d_H(\xi, \hat\xi)}}{K} \to 0$.
 \end{lemma}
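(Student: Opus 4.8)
The plan is to prove the two implications separately. The forward direction (expectation $\Rightarrow$ probability) I would dispatch immediately via Markov's inequality: if $\Expect[d_H(\xi,\hat\xi)]/K \to 0$, then for every $\epsilon>0$,
\[
\Prob\!\left[\frac{d_H(\xi,\hat\xi)}{K} > \epsilon\right] \le \frac{\Expect[d_H(\xi,\hat\xi)]}{\epsilon K} \to 0,
\]
so the very same estimator witnesses weak recovery in probability.

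For the converse, I would start from an estimator $\hat\xi$ satisfying $d_H(\xi,\hat\xi)/K \toprob 0$ and first perform a weight-projection step to tame the tail. Define $\tilde\xi$ by modifying $\hat\xi$ so that its Hamming weight equals $K$ (which is known): if $w(\hat\xi)>K$, delete $w(\hat\xi)-K$ of its ones arbitrarily; if $w(\hat\xi)<K$, set $K-w(\hat\xi)$ additional coordinates (currently zero) to one arbitrarily. Then $d_H(\tilde\xi,\hat\xi) = |w(\hat\xi)-K| = |w(\hat\xi)-w(\xi)| \le d_H(\hat\xi,\xi)$, so by the triangle inequality $d_H(\tilde\xi,\xi) \le 2\,d_H(\hat\xi,\xi)$. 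Hence $\tilde\xi$ still satisfies $d_H(\tilde\xi,\xi)/K \toprob 0$, and — crucially — since both $\tilde\xi$ and $\xi$ have Hamming weight exactly $K$, one has the deterministic bound $d_H(\tilde\xi,\xi) = 2\big(K - |\supp(\tilde\xi)\cap\supp(\xi)|\big) \le 2K$.

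With this uniform bound in hand, I would close the argument by bounded convergence: for any $\epsilon>0$,
\[
\frac{\Expect[d_H(\tilde\xi,\xi)]}{K} \le \epsilon + 2\,\Prob\!\left[\frac{d_H(\tilde\xi,\xi)}{K} > \epsilon\right],
\]
and letting $n\to\infty$ gives $\limsup_n \Expect[d_H(\tilde\xi,\xi)]/K \le \epsilon$; since $\epsilon$ is arbitrary, $\Expect[d_H(\tilde\xi,\xi)]/K \to 0$, so $\tilde\xi$ witnesses weak recovery in expectation. The only point requiring any care is the weight-projection step and its factor-$2$ distortion bound (and the harmless use of the fact that $K$ is known); no obstacle of real substance is expected, as the remaining steps are entirely standard.
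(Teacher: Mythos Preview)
Your proof is correct. The overall strategy matches the paper's: handle the expectation-to-probability direction by Markov, and for the converse modify $\hat\xi$ to control its Hamming weight so that $d_H(\cdot,\xi)/K$ becomes uniformly bounded, then upgrade convergence in probability to convergence in mean.

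The mechanical difference is in how the weight is tamed. The paper thresholds: it sets $\tilde\xi = \hat\xi\,\indc{|\hat\xi|\le (1+\epsilon_n)K}$ (zeroing out $\hat\xi$ entirely when its weight is too large), picks a diagonal sequence $\epsilon_n\to 0$ with $\pprob{d_H(\xi,\hat\xi)\ge\epsilon_n K}\le\epsilon_n$, and then bounds $\Expect[d_H(\xi,\tilde\xi)]$ by splitting into the events $\{|\hat\xi|\le(1+\epsilon_n)K\}$ and its complement. You instead \emph{project} $\hat\xi$ onto the weight-$K$ slice, which gives the clean two-line estimate $d_H(\tilde\xi,\xi)\le 2\,d_H(\hat\xi,\xi)$ together with the deterministic cap $d_H(\tilde\xi,\xi)\le 2K$, and then close with a direct bounded-convergence split. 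Your route avoids the diagonal-sequence extraction and is arguably tidier; the paper's route avoids invoking the (harmless) fact that $K$ is known to the estimator. Both are short and equivalent in substance.
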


\begin{proof}
One direction is automatic because convergence in $L_1$ implies convergence in probability.
Conversely, suppose $\frac{d_H(\xi, \hat\xi)}{K} \to 0$ in probability for some (sequence of) $\hat \xi$.   Then there exists a deterministic sequence
$\epsilon_n \to 0$  such that  $\pprob{d_H(\xi, \hat\xi) \geq \epsilon_n K} \leq \epsilon_n$.
Define a new estimator by
\[
\tilde \xi = \hat{\xi} \indc{|\hat\xi| \leq K + \epsilon_n K } + \mathbf{0} \cdot \indc{|\hat\xi| > K + \epsilon_n K},
\]
where $\mathbf{0}$ denotes the all-zero vector.
Since $|\xi|=K$, by the triangle inequality, we have
\begin{align*}
\Expect[d_H(\xi, \tilde\xi)] & = \expect{d_H(\xi, \hat \xi) \indc{|\hat\xi| \leq K + \epsilon_n K}} + K \prob{|\hat\xi| > K + \epsilon_n K} \\
&  \leq  \epsilon_n K + \expect{d_H(\xi, \hat \xi) \indc{d_H(\xi,\hat\xi)>\epsilon_n K, \; |\hat\xi| \leq K +  \epsilon_n K}}
   + K \prob{|\hat\xi| > K + \epsilon_n K } \\
&\leq \epsilon_n K + (3K+  \epsilon_n K ) \prob{d_H(\xi,\hat\xi)> \epsilon_n K} \leq 4\epsilon_n K + \epsilon_n^2K.
\end{align*}
Therefore,  $\frac{ \expect{d_H(\xi, \tilde\xi)}}{K} \to 0$.
\end{proof}

\section{Assumption 1 for exponential families of distributions} \label{app:exp}

There is a simple sufficient condition for \prettyref{ass:reg_strong} to hold in case $P$ and $Q$ are from the same exponential
family of distributions (including Bernoulli, Gaussian, etc).   Consider a canonical exponential family with the
following pdf (with respect to some dominating measure):\footnote{For simplicity we assume $T$ and $\theta$
are scaler valued.   Vector values would give  $ p_\theta(x)  = h(x) \exp(\iprod{\theta}{T(x)} - A(\theta)) $ and
the condition \eqref{eq:psipp1}, with $A''(\theta)$ replaced
by $(\theta_1-\theta_0)^\top H(\theta)(\theta_1-\theta_0),$  where $H$ is the Hessian of $A,$ and
$I$ and $J$ becoming line segments, is still sufficient for  \prettyref{ass:reg_strong}. }
\[
p_\theta(x)  = h(x) \exp(\theta T(x)  - A(\theta)),
\]
where $A$ is a convex function.
Then $\Expect_\theta[T] = A'(\theta)$ and $\var_\theta[T] = A''(\theta)$.
Assume that $P$ and $Q$ correspond to parameters $\theta_1$ and $\theta_0,$ respectively.
It could be that $\theta_0 < \theta_1$ or $\theta_1 < \theta_0$; let $I$ denote the interval
with endpoints $\theta_0$ and $\theta_1$ and $J$ denote the interval with endpoints
$\theta_0 \pm (\theta_1 - \theta_0)$.
Then $Q_\lambda$ has parameter $\lambda \theta_1+\bar \lambda \theta_0$.
Furthermore,
\begin{align*}
L = & ~ (\theta_1-\theta_0) T - A(\theta_1) + A(\theta_0) 	\\
D(P\|Q) = & ~ A(\theta_1)  - A(\theta_0) - (\theta_1-\theta_0) A'(\theta_0) 	 \\
C(P,Q) = & ~ - \min_{\theta \in I} A(\theta) \\
\psi_Q(\lambda) = & ~ A(\lambda \theta_1+\bar \lambda \theta_0) - \lambda A(\theta_0) -\bar{\lambda} A(\theta_1) \\
\psi_Q''(\lambda) = & ~ A''(\lambda \theta_1+\bar \lambda \theta_0) (\theta_1-\theta_0)^2.
\end{align*}
By Taylor's theorem,  $D(P\|Q)$ is $ \frac{(\theta_0-\theta_1)^2}{2}$ times a weighted average
of $A''$ over $I$:
$$
D(P\|Q) =   \frac{(\theta_1-\theta_0)^2}{2} \frac{\int_{\theta_0}^{\theta_1} A''(s) (s-\theta_0) ds}{(\theta_1-\theta_0)^2/2}
$$
Similarly, $D(Q\|P)$ is a weighted average of $A''$ over $I$.
Therefore,  a sufficient condition for \prettyref{ass:reg_strong} is
\begin{equation}
	\frac{\max_{\theta \in J} A''(\theta)}{\min_{\theta\in  I  } A''(\theta)} = O(1).
	\label{eq:psipp1}
\end{equation}
Examples:
\begin{enumerate}
\item Gaussian: $\theta=\mu$, $A(\theta)=\theta^2/2$ and $A''(\theta)=1$. So \eqref{ass:reg_strong} holds in the
Gaussian case with no extra assumption.
	\item Bernoulli: $\theta = \log \frac{p}{\bar{p}}$, $A(\theta) = \log(1+e^\theta)$ and $A''(\theta)=\frac{e^\theta}{(1+e^{\theta})^2} = p(1-p)$.
We shall show that if  $p,q$ vary such that $p,q\in (0,1)$ with $p\neq q$,  then \eqref{eq:psipp1} is equivalent to boundedness
of the  LLR.    By symmetry between 0 and 1 we can assume without loss of generality that $0 < q < p < 1$.
First, if $p\leq 1/2$ the LHS of \eqref{eq:psipp1} is  $\frac{p\bar{p}}{q\bar{q}} \asymp \frac{p}{q}$  and if $p\in [1/2,1-\epsilon]$ for some
fixed $\epsilon > 0$ then the LHS of \eqref{eq:psipp1} has size $\Theta(1/q) = \Theta(p/q)$.   So the claim is true if $p$ is bounded away from one.\\
If $p\to 1$ and $q\not\to 1$ then both the LHS of  \eqref{eq:psipp1} and the LLR are unbounded, so the claim is again true. \\
It remains to check the case $p, q\to 1$.  The denominator of the LHS of 
 \eqref{eq:psipp1} is $p\bar{p}\asymp \bar{p}$.    The maximum in the numerator is taken over the
 interval $[\theta_{-1},\theta_1],$  where $\theta_{-1}=\theta_0 - [\theta_1-\theta_0] = \log \left( \frac{q^2\bar{p}}{\bar{q}^2 p}\right)$.
 If $\theta_{-1} \leq 0$ (i.e. $\theta_0 \leq \theta_1/2$) then the numerator of the LHS of  \eqref{eq:psipp1} is $1/4$, so
 \eqref{eq:psipp1} fails to hold, and also,  $\frac{\bar p}{\bar q }=O(\sqrt{\bar p})$ so the LLR is unbounded.    
 It thus remains to consider the case $\theta_1/2 \leq \theta_0 \leq \theta_1$ with $\theta_1 \to \infty$.
 The numerator of the LHS of  \eqref{eq:psipp1} is $r\bar r$ where $r$ is determined by $\theta_{-1}=\log \frac{r}{\bar r},$
 or, equivalently,  $\frac{r}{\bar r} = \frac{q^2\bar{p}}{\bar{q}^2 p}$.
 Hence $\bar r \asymp \frac{(\bar q)^2}{\bar p}$.
 The LHS of  \eqref{eq:psipp1} is $\frac{r\bar{r}}{p\bar{p}} \asymp \frac{\bar r}{\bar p} \asymp \left( \frac{\bar{q}}{\bar{p}}\right)^2$
 while the maximum absolute value of the LLR is $\Theta( \log \frac{\bar{q}}{\bar{p}})$.  Hence, again,  \eqref{eq:psipp1} holds
 if and only if the LLR is bounded.  The claim is proved.
\end{enumerate}

\section{Proof of \prettyref{cor:nec_exact_submat}}  \label{app:Gauss_exact_cor}

In the Gaussian case, $E_Q(\theta) = \frac{1}{8}(\mu+ \frac{2\theta}{\mu})^2$. 
Throughout this proof, let $\theta=\frac{1}{K}\log \frac{n}{K}$ and let $f$
be the function defined by $f(\mu)=E_Q(\theta)= \frac{1}{8}(\mu+ \frac{2\theta}{\mu})^2.$
Consider the equation $f(\mu)=\frac{\log n}{K}.$
It yields a quadratic equation in $\mu^2$: 
$
\mu^4 - \frac{4 \log n + 4 \log K }{K} \mu^2 + \frac{4 \log^2(n/K) }{K^2} =0
$
which has two solutions namely $\mu_{\pm}^2 = \frac{2}{K}  \left( \sqrt{\log n} \pm \sqrt{\log K} \right)^2$.
Without loss of generality, we take  $\mu_+> 0$ and $\mu_ - > 0$; the case of $\mu_+<0$ and $\mu_-<0$
follows analogously. 
In summary, the expressions inside the $\liminf$ in both \eqref {eq:voting-suff} and
\eqref{eq:submat-mle-suff} are one if $\mu$ is replaced by $\mu_+.$

For the sufficiency part, suppose $\mu$ depends on $n$ such that
 \eqref{eq:weak_Gaussian_suff} and  \eqref{eq:submat-mle-suff} hold.
 By  \eqref{eq:submat-mle-suff},
for  $\epsilon > 0$ sufficiently small,   $\mu(1-\epsilon)  \geq  \mu_+$ for all sufficiently
large $n.$   We can also take $\epsilon < 1/10$.
By \eqref{eq:weak_Gaussian_suff},  $\limsup \frac{\theta}{\mu^2} \leq \frac{1}{4}$ so
uniformly for $(1-\epsilon)  \mu  \leq x \leq \mu,$
\begin{align*}
f'(x)  & = \frac{1}{4}\left(x + \frac{2\theta}{x} \right) \left(1-\frac{2\theta}{x^2}\right)   \\
& \geq   \frac{1}{4}\left( (1-\epsilon) \mu  \right) \left(1-\frac{2\theta}{(1-\epsilon)^2 \mu^2}\right)    = \Omega(\mu).
\end{align*}
Also,  $\frac{2 \theta}{\mu_+^2}< 1$ so $f'(x) \geq 0$ for $x \geq \mu_+.$  Hence,
\begin{align*}
\frac{f(\mu)}{f(\mu_+)} - 1 & \geq    \frac{f(\mu) - f(\mu(1-\epsilon))}{f(\mu_+)}  \\
& =   \frac{K}{\log n} \int_{\mu(1-\epsilon)}^{\mu} f'(x) dx  \\
& = \Omega\left( \frac{\epsilon K \mu^2}{\log n} \right)=  \Omega(\epsilon),
\end{align*}
where for the last equality we use $\mu^2 \geq \mu_+^2 \geq \frac{2\log n}{K}.$
Therefore \eqref{eq:voting-suff} holds, sufficiency follows from \prettyref{thm:exact_general}.

For the necessity part, it suffices to show that   \eqref{eq:weak_Gaussian_nec} and
\eqref{eq:voting-nec} imply \eqref{eq:submat-mle-nece}.   If $K \leq n^{1/9}$ then
\eqref{eq:weak_Gaussian_nec} alone implies \eqref{eq:submat-mle-nece}, so we
can also assume that  $K \geq n^{1/9}.$    It follows that
$\frac{2\theta}{\mu_+^2} = \frac{ \sqrt{\log n} - \sqrt{ \log K}  }{ \sqrt{\log n} + \sqrt{ \log K}  } \leq \frac{1}{2}.$
Therefore, for $\epsilon \in (0,0.1),$
\begin{align*}
f(\mu_+ (1-\epsilon) ) &  \leq f(\mu_+)  -  \epsilon \mu_+ \min\{f'(x) : (1-\epsilon)\mu_+ \leq x \leq \mu_+\}   \\
& \leq f(\mu_+) - \frac{\epsilon \mu_+}{4}(1-\epsilon)\mu_+\left( 1 - \frac{1}{2(1-\epsilon)^2}\right)  \\
& \leq f(\mu_+) - \Omega(\epsilon \mu_+^2)  \leq \frac{\log n}{K} (1-\Omega(\epsilon)).
\end{align*}
In view of \eqref{eq:voting-nec} it follows that $\mu \geq \mu_+ (1-\epsilon) $ for all sufficiently large $n.$
Since $\epsilon$ can be arbitrarily small, \eqref{eq:submat-mle-nece} follows.

\end{appendices}

\bibliographystyle{abbrv}
\bibliography{../graphical_combined}

\end{document}